\documentclass[lettersize,journal]{IEEEtran}
\usepackage{amsmath,amsfonts}
\usepackage{array}
\usepackage{colortbl}
\usepackage{diagbox}
\usepackage{adjustbox}
\usepackage[caption=false,font=normalsize,labelfont=sf,textfont=sf]{subfig}
\usepackage{cite}
\usepackage{textcomp}
\usepackage{stfloats}
\usepackage{amsthm}
\usepackage{url}
\usepackage{bm}
\usepackage{algorithm}
\usepackage{algpseudocode}
\usepackage{verbatim}
\usepackage{booktabs}
\usepackage{graphicx}
\usepackage{multirow} 
\usepackage{color}
\usepackage{diagbox}
\usepackage{siunitx}
\def\BibTeX{{\rm B\kern-.05em{\sc i\kern-.025em b}\kern-.08em
    T\kern-.1667em\lower.7ex\hbox{E}\kern-.125emX}}
\usepackage{balance}
\newcommand{\mytheoremname}{\bfseries Theorem}

\newcommand{\mylemmaname}{\bfseries Lemma}
\newcommand{\myassumptionname}{\bfseries Assumption}
\newcommand{\mydefinitionname}{\bfseries Definition}
\newcommand{\mypropositionname}{\bfseries Proposition}
\newcommand{\myclaimname}{\bfseries Claim}

\newtheorem{theorem}{\mytheoremname}

\newtheorem{Lemma}[theorem]{\mylemmaname} 
\newtheorem{Definition}{\mydefinitionname} 
\newtheorem{Proposition}{\mypropositionname} 
\renewcommand{\arraystretch}{1.2}

\begin{document}
\title{Hypergraph Neural Stochastic Diffusion: An SDE Framework for Uncertainty Estimation}
\author{ Zhiheng Zhou, Mengyao Zhou, Dengyi Zhao, Xingqin Qi, Guiying Yan
\thanks{ This work was supported by the National Natural Science Foundation of China (No.12231018 and No.12471330), the Shandong Provincial Natural
Science Foundation (No.ZR2025MS71) and the Postdoctoral Innovation Program of Shandong Province (No.SDCX-ZG-202603012). (Corresponding author:  Xingqin Qi, Guiying Yan.)}
\thanks{Zhiheng Zhou, Dengyi Zhao and Xingqin Qi are with the School of Mathematics and Statistics, Shandong University, Weihai, Shandong 264209, China (e-mail: zhouzhiheng@amss.ac.cn; zhaodengyi@mail.sdu.edu.cn; qixingqin@sdu.edu.cn).}
\thanks{Mengyao Zhou and Guiying Yan are with the Academy of Mathematics and Systems Science, Chinese Academy of Sciences and also with the University of Chinese Academy of Sciences, Beijing 100190, China (e-mail: zhoumengyao@amss.ac.cn; yangy@amss.ac.cn).}
}

\markboth{Journal of \LaTeX\ Class Files,~Vol.~18, No.~9, September~2020}%
{How to Use the IEEEtran \LaTeX \ Templates}\maketitle

This work has been submitted to the IEEE for possible publication. Copyright may be transferred without notice, after which this version may no longer be accessible.

\begin{abstract}
Hypergraph neural networks have shown powerful capability in modeling higher-order relations, yet their predictive uncertainty remains underexplored. Unlike pairwise graphs, uncertainty in hypergraphs arises not only from noisy attributes and ambiguous labels, but also from variations in node--hyperedge incidence structures and complex higher-order dependencies. Existing approaches mainly estimate uncertainty from final predictions or rely on computationally expensive ensembles and Bayesian inference, limiting their ability to capture uncertainty evolution during representation learning. In this paper, we propose Hypergraph Neural Stochastic Diffusion (HyperNSD), a stochastic differential equation framework for uncertainty estimation on hypergraphs. HyperNSD models hypergraph representations as stochastic processes evolving over node--hyperedge incidence structures. A learnable drift function captures deterministic higher-order diffusion dynamics, while a learnable stochastic forcing function characterizes structural ambiguity and representation noise. Predictive uncertainty is directly quantified through the variability of stochastic representation trajectories, providing an intrinsic uncertainty measure beyond post-hoc confidence scores. We formulate HyperNSD with neural drift and diffusion networks, enabling joint learning of prediction and uncertainty propagation. Theoretical analyses establish well-posedness, perturbation stability, permutation equivariance, and numerical convergence of the proposed stochastic dynamics. Experiments on multiple hypergraph benchmarks demonstrate that HyperNSD achieves reliable uncertainty estimation for out-of-distribution and misclassification detection while preserving competitive prediction accuracy. These results provide a principled stochastic-dynamical framework for trustworthy higher-order representation learning.
\end{abstract}

\begin{IEEEkeywords}
Hypergraph Neural Networks, Uncertainty Estimation, Hypergraph Neural Stochastic Diffusion
\end{IEEEkeywords}

\section{Introduction}
Hypergraphs provide a natural representation for complex systems involving higher-order relations, where one interaction may simultaneously connect more than two entities. Unlike ordinary pairwise graphs, hypergraphs explicitly encode group-wise dependencies through hyperedges and are therefore well suited for modeling relational patterns in social networks, biological systems, recommendation, and brain network analysis~\cite{bretto2013hypergraph,kim2024survey,wang2025hypergraph,wang2026hypersynergyx}. Motivated by this expressive capability, hypergraph neural networks (HGNNs) have been developed to extend graph neural message passing from pairwise edges to node--hyperedge incidence structures~\cite{feng2019hypergraph,chien2021you,gao2022hgnn+,zhou2026hypergraph,zhou2026tackling}. By aggregating information between nodes and hyperedges, these models have achieved promising performance in various prediction tasks.

Despite their empirical success, the predictive reliability of existing HGNNs remains insufficiently explored. Most HGNNs are formulated as deterministic representation learning models: given a hypergraph and node attributes, they produce a single deterministic representation path through stacked propagation layers and a single predictive distribution~\cite{feng2019hypergraph,chien2021you,gao2022hgnn+}. Such deterministic behavior is problematic in high-stakes applications, where a model is expected not only to make accurate predictions but also to indicate when its predictions are unreliable~\cite{guo2017calibration,lakshminarayanan2017simple}. This issue is particularly important for hypergraph learning, where uncertainty may arise from attribute-level noise, label-space ambiguity, variability in node--hyperedge incidence patterns, and nontrivial dependencies among higher-order interactions. In these cases, an HGNN may produce overconfident predictions even when the underlying higher-order relational context is noisy, ambiguous, or shifted from the training distribution~\cite{wu2023energy,lin2024graph}.

Uncertainty estimation on hypergraphs presents additional challenges compared with ordinary graphs. Existing graph uncertainty methods mainly focus on uncertainty associated with node attributes, labels, or pairwise edges~\cite{wu2023energy,fuchsgruber2024energy,lin2024graph,xu2026uncertainty}. In hypergraphs, however, uncertainty is also coupled with the incidence structure between nodes and hyperedges. Since a hyperedge represents a group-wise relation, modifying one hyperedge can simultaneously change the aggregation contexts of all its incident nodes~\cite{feng2019hypergraph,chien2021you,kim2024survey}. Therefore, uncertainty estimation should not be restricted to output-level confidence scores; rather, stochastic representation variability should be modeled along node--hyperedge incidence relations during higher-order message passing. For instance, a distribution-shifted hyperedge structure may alter the representation dynamics of multiple nodes even when their input features remain unchanged. This observation suggests that uncertainty estimation for HGNNs should go beyond output-level confidence and explicitly characterize stochastic variability along node--hyperedge incidence structures.

Existing uncertainty estimation methods are not fully satisfactory for this purpose. Deep ensembles provide strong empirical uncertainty estimates but incur substantial computational and storage costs~\cite{lakshminarayanan2017simple}. Bayesian neural networks offer a principled probabilistic formulation, yet posterior inference or approximation is often expensive and difficult to scale to large relational structures~\cite{blundell2015weight}. Deterministic and post-hoc uncertainty methods avoid repeated training or posterior sampling, but they usually estimate uncertainty from final logits or predictive distributions, without modeling how uncertainty propagates during representation learning~\cite{guo2017calibration,stadler2021graph}. Recent graph uncertainty methods have made important progress by considering graph-dependent uncertainty, energy propagation, and stochastic graph diffusion~\cite{wu2023energy,fuchsgruber2024energy,lin2024graph,bergna2025uncertainty,wang2025gold,xu2026uncertainty}. However, these methods are primarily developed for pairwise graphs and do not directly address the higher-order incidence mechanism of hypergraphs. Moreover, when hypergraphs are approximated by pairwise graph expansions, the original incidence-level geometry may be distorted or partially lost, making it difficult to model stochastic perturbations directly on node--hyperedge interactions.

In this paper, we propose \emph{Hypergraph Neural Stochastic Diffusion} (HyperNSD), a stochastic differential equation (SDE) framework for uncertainty estimation on hypergraphs. HyperNSD formulates hypergraph representation learning as a stochastic diffusion process over the node--hyperedge incidence domain. Instead of producing a single deterministic representation trajectory, HyperNSD evolves node representations along multiple structure-informed stochastic paths. The resulting pathwise variability provides a representation-level basis for uncertainty estimation. Specifically, a learnable drift term models deterministic higher-order neural diffusion, while a learnable stochastic forcing term models data-dependent perturbations associated with representation noise and structural ambiguity. Since both terms are defined through the hypergraph gradient operator, the resulting stochastic representation dynamics are explicitly coupled with the node--hyperedge incidence structure. This design embeds uncertainty into the representation dynamics rather than appending it as a post-hoc confidence score, while avoiding multiple independently trained models, complex posterior approximation, or fixed-form output-level priors.

We further provide theoretical guarantees for the proposed HyperNSD, including well-posedness, moment stability, stability with respect to initial-state and structural perturbations, structure-constrained stochastic modes, permutation equivariance, and convergence of the Euler--Maruyama approximation. In practice, HyperNSD is instantiated with neural parameterizations of the drift and stochastic forcing terms, and uncertainty is estimated from the variability of sampled terminal representations and their corresponding predictive distributions. This enables HyperNSD to support out-of-distribution detection and misclassification detection under semantic, attribute-level, and structural distribution shifts. Extensive experiments on multiple hypergraph benchmarks show that HyperNSD consistently improves out-of-distribution (OOD) and misclassification detection across different distribution-shift settings while maintaining competitive in-distribution predictive accuracy. Additional analyses further demonstrate the importance of incidence-aware stochastic forcing, adaptive noise modulation, and direct modeling of the original hypergraph incidence domain.

The main contributions of this paper are summarized here:
\begin{itemize}
\item We identify incidence-coupled structural variability as a key challenge for uncertainty estimation on hypergraphs, where stochastic representation variability can be shaped by group-wise node--hyperedge interactions rather than captured solely by output-level confidence.

\item We propose \emph{Hypergraph Neural Stochastic Diffusion} (HyperNSD), an SDE-based framework that unifies deterministic higher-order neural diffusion and learnable stochastic forcing over the node--hyperedge incidence domain.

\item We establish theoretical guarantees for HyperNSD, including solution well-posedness, moment stability, stability under initial-state and structural perturbations, and numerical convergence. We further show that the proposed dynamics preserve structure-constrained stochastic modes and are permutation equivariant.

\item We conduct extensive experiments under label leave-out, feature interpolation, and structure manipulation settings, demonstrating improved OOD and misclassification detection. Additional ablation, graph-expansion, and perturbation-strength analyses further validate the necessity of incidence-aware stochastic forcing and direct hypergraph-domain modeling.
\end{itemize}

\section{Related Work}
\subsection{Hypergraph Neural Networks}
Hypergraph neural networks extend pairwise graph learning to higher-order relational structures by propagating information between nodes and hyperedges. Existing methods can be broadly categorized according to how they construct higher-order message-passing operators. Spectral methods such as HGNN perform convolution based on normalized hypergraph Laplacians, while graph-expansion methods such as HyperGCN approximate each hyperedge using representative pairwise connections~\cite{feng2019hypergraph,yadati2019hypergcn}. In contrast, node--hyperedge message-passing methods explicitly model interactions between nodes and hyperedges. For example, HNHN maintains and updates both node and hyperedge representations, AllSet formulates node--hyperedge aggregation through learnable multiset functions, and HGNN+ provides a flexible framework for general hypergraph message passing~\cite{dong2020hnhn,chien2021you,gao2022hgnn+,aponte2022hypergraph}. Unified formulations such as UniGNN further connect graph and hypergraph propagation rules, facilitating the adaptation of established graph architectures to higher-order relational data~\cite{huang2021unignn}.

Subsequent studies have improved the expressiveness, symmetry, and optimization properties of hypergraph learning. Equivariant architectures preserve permutation symmetries of hypergraph representations, while ED-HNN characterizes a broad class of continuous equivariant hypergraph diffusion operators~\cite{kim2022equivariant,wang2022equivariant}. Energy-based formulations derive hypergraph neural architectures from learnable regularized energy functions, providing an optimization-oriented interpretation of higher-order representation learning~\cite{wang2023hypergraph}. Recent analyses have also investigated hypergraph-specific homophily, architectural design, transferability, and information bottlenecks such as oversquashing~\cite{telyatnikov2023hypergraph,hayhoe2024transferable,yadati2025oversquashing}. Closely related to our formulation, diffusion-based models such as HNDiffN and HND interpret node--hyperedge message passing as continuous or discretized diffusion dynamics, while curvature-guided approaches adapt the diffusion process according to higher-order geometric structures~\cite{lu2025hypergraph,zhou2026hypergraph,zhou2026tackling}. Despite these advances, existing hypergraph models mainly focus on representation expressiveness, propagation depth, and predictive performance. They are mostly deterministic and do not explicitly model pathwise stochastic variability or uncertainty propagation over node--hyperedge incidence structures.

\subsection{Uncertainty Estimation on Graphs}
Predictive uncertainty estimation has been extensively studied through Bayesian neural networks, Monte Carlo dropout, deep ensembles, and confidence-based scoring methods~\cite{blundell2015weight,gal2016dropout,guo2017calibration,lakshminarayanan2017simple,hendrycks2016baseline,liang2017enhancing,lee2018simple,liu2020energy}. In OOD and misclassification detection, these methods typically construct uncertainty scores from logits, predictive entropy, energy values, or feature-space distances. Although effective in many settings, they are primarily designed for independently sampled data and do not explicitly characterize how stochastic variability is shaped by relational structures during representation learning.

For graph-structured data, recent studies incorporate graph dependencies into posterior inference, energy propagation, graph-dependent uncertainty modeling, and stochastic representation dynamics~\cite{stadler2021graph,wu2023energy,fuchsgruber2024energy,wang2025gold}. More closely related to our work, GNSD formulates graph representations as stochastic diffusion processes driven by a $\mathcal Q$-Wiener process, LGNSDE models latent graph dynamics using Brownian perturbations and parameter uncertainty, and structure-informed graph stochastic differential equations introduce spatially correlated perturbations into graph message passing~\cite{lin2024graph,bergna2025uncertainty,xu2026uncertainty}. Hypergraph OOD generalization methods such as HyperGOOD further consider distribution shifts in hypergraph learning, but they are not designed to estimate predictive uncertainty through stochastic incidence-domain dynamics.

Most existing graph uncertainty methods are constructed on pairwise graph operators and therefore do not directly capture the higher-order propagation geometry induced by node--hyperedge incidence relations. Pairwise graph expansions may also distort or partially lose the original incidence-level geometry of hypergraphs, making it difficult to model stochastic perturbations directly on node--hyperedge interactions. In contrast, HyperNSD defines both deterministic drift and stochastic forcing through incidence-based hypergraph operators. Node-space Wiener perturbations are first mapped to the incidence domain, adaptively modulated according to local node--hyperedge contexts, and then propagated back to the node space. This formulation embeds stochastic variability directly into higher-order representation dynamics and enables predictive uncertainty to be estimated from multiple structure-informed stochastic trajectories.

\section{Preliminaries}
\subsection{Notations}
Let $\mathcal{G}=(\mathcal{V},\mathcal{E},\omega)$ be a weighted hypergraph, where $\mathcal{V}$ is the node set with $|\mathcal{V}|=n$, $\mathcal{E}$ is the hyperedge set, and $\omega:\mathcal{E}\rightarrow \mathbb{R}_{>0}$ assigns a positive weight $\omega_e$ to each hyperedge $e\in\mathcal{E}$. Each hyperedge is a subset of nodes, i.e., $e\subseteq \mathcal{V}$ and $|e|\geq 2$. For each node \(v\in\mathcal{V}\), its degree is defined as $d_v=\sum_{e\in\mathcal{E}}\mathbb{I}(v\in e)$, where \(\mathbb{I}(\cdot)\) denotes the indicator function. The node--hyperedge incidence set is denoted by \(\mathcal{I}=\{(e,v)\mid e\in\mathcal{E}, v\in e\},\) with $|\mathcal{I}|=N=\sum_{e\in\mathcal{E}}|e|$.

The node degree matrix $\mathbf{D_v} \in \mathbb{R}^{n \times n}$ is defined by $\mathbf {D_v}({v,v})=d_v$. Let $\mathbf \Omega_{\mathcal I}=\operatorname{diag}(\omega_e)_{(e,v)\in\mathcal{I}}\in\mathbb{R}^{N\times N}$ be the incidence-level weight matrix, the input matrix of all nodes is denoted as $\mathbf{X}_{\text{init}}$, and  \(\|\cdot\|_F\) denotes the Frobenius norm. We introduce two finite-dimensional Hilbert spaces. The node function space is $L(\mathcal{V}) := \{\mathbf f: \mathcal{V} \to \mathbb{R}^d \} \cong \mathbb{R}^{n\times d}$ and the hyperedge--node pair function space is $L(\mathcal{E}, \mathcal{V}) := \{ \mathbf g: \mathcal{I} \to \mathbb{R}^d \} \cong \mathbb{R}^{N\times d}$, where $d$ denotes the embedding dimension.

\subsection{Hypergraph Neural Diffusion}
Following the hypergraph neural differential formulation in~\cite{zhou2026hypergraph}, we recall the deterministic diffusion operator used as the backbone of our model. Given a hypergraph $\mathcal{G}=(\mathcal{V},\mathcal{E},\omega)$, the gradient operator maps node signals to incidence-level variations between nodes and hyperedges, while the divergence operator aggregates incidence-level flows back to nodes. 

For a node function $\mathbf f\in L(\mathcal{V})$, its gradient over the incidence pair $(e,v)\in\mathcal{I}$ is defined as
\begin{equation}
(\nabla \mathbf f)(e,v)=   \frac{\mathbf f(v)}{\sqrt{d_v}} - \frac{1}{|e|} \sum_{u \in e} \frac{\mathbf f(u)}{\sqrt{d_u}} .
\label{eq:hypergraph_gradient}
\end{equation}
The corresponding divergence operator is defined as
\begin{equation}
\label{eq:divergence} 
(\operatorname{div} \mathbf g)(v) = \sum_{e \ni v}\frac{\omega_e}{\sqrt{d_v}}\left(  \mathbf g(e, v) - \frac{1}{|e|} \sum_{u \in e} \mathbf g(e, u)\right). 
\end{equation}

Based on these operators, the deterministic hypergraph neural diffusion equation can be written as
\begin{equation}
\frac{\partial \mathbf{X}(t)}{\partial t}=-\operatorname{div}
\left[\mathbf{A}_{\theta}(\mathbf{X}(t))\nabla \mathbf{X}(t)
\right],
\label{eq:deterministic_hypergraph_diffusion}
\end{equation}
where $\mathbf{X}(t)\in L(\mathcal{V})$ denotes the time-dependent node representation, and $\mathbf{A}_{\theta}(\mathbf{X}(t))\in\mathbb{R}^{N\times N}$ is a learnable diagonal modulation operator defined on the incidence space. Its diagonal entry associated with $(e,v)$ controls the diffusion strength between node $v$ and hyperedge $e$:
\begin{equation}
\mathbf{A}_{\theta}(\mathbf{X}(t))=\operatorname{diag}
\left(a_{\theta}(\mathbf{X}_v(t),\mathbf{X}_e(t))
\right)_{(e,v)\in\mathcal{I}}\in\mathbb{R}^{N\times N},
\label{eq:modulation_operator}
\end{equation}
where $\mathbf{X}_v(t)$ denotes the node representation of node $v$, and $\mathbf{X}_e(t)$ denotes an aggregation of node representations within hyperedge $e$. In matrix form, Eq.~\eqref{eq:deterministic_hypergraph_diffusion} becomes

\begin{equation}
\frac{\partial \mathbf{X}(t)}{\partial t}=-\mathbf G^{\top}\mathbf{A}_{\theta}(\mathbf{X}(t))\mathbf G
\mathbf{X}(t).
\label{eq:G_matrix_hypergraph_diffusion}
\end{equation}
Here, \(\mathbf G\in\mathbb R^{N\times n}\) encodes the node--hyperedge incidence structure, and \(\mathbf G^{\top}\in\mathbb R^{n\times N}\) maps the resulting flows back to the node domain.

\subsection{Uncertainty Source} Predictive uncertainty can be decomposed into aleatoric uncertainty (due to inherent data randomness) and epistemic uncertainty (due to uncertainty in model parameters)~\cite{gal2016dropout,kendall2017uncertainties,abdar2021review}. Given a training dataset $\mathcal{D}$ and a hypergraph $\mathcal{G}$, the predictive distribution of the target $\mathbf{y}$ is given by marginalizing over model parameters $\boldsymbol{\Theta}$: 
\begin{equation}
\label{eq:bayes_pred} 
\mathbb P(\mathbf{y} \mid \mathcal G,\mathcal{D}) = \int \mathbb P(\mathbf{y} \mid \boldsymbol{\Theta}, \mathcal G) \, \mathbb P(\boldsymbol{\Theta} \mid \mathcal{D}) \, \mathrm{d} \boldsymbol{\Theta}. \end{equation} 

The total uncertainty is measured by the entropy of the marginal predictive distribution: 

\begin{equation} 
\mathcal{H}\left[ \mathbb{E}_{\mathbb P(\boldsymbol{\Theta} \mid \mathcal{D})} \left[\mathbb P(\mathbf{y} \mid \boldsymbol{\Theta}, \mathcal{G}) \right] \right]. 
\end{equation} 
where $\mathcal H$ denotes the Shannon’s entropy of a probability distribution. Aleatoric uncertainty is commonly characterized by the posterior expectation of the conditional predictive entropy:
\begin{equation} 
\mathbb{E}_{\mathbb P(\boldsymbol{\Theta} \mid \mathcal{D})} \left[ \mathcal{H}\left[\mathbb P(\mathbf{y} \mid \boldsymbol{\Theta}, \mathcal{G}) \right] \right]. 
\end{equation} 
Epistemic uncertainty, reflecting uncertainty in model parameters, is the difference between the total and aleatoric uncertainties. It can also be interpreted as the mutual information between the prediction $\mathbf{y}$ and the parameters $\boldsymbol{\Theta}$~\cite{depeweg2018decomposition,zhao2020uncertainty}: \begin{equation}
\label{eq:mutual_info} 
\begin{aligned} 
I(\mathbf{y}, \boldsymbol{\Theta} \mid \mathcal{G}, \mathcal{D}) = \mathcal{H}\left[ \mathbb{E}_{\mathbb P(\boldsymbol{\Theta} \mid \mathcal{D})} \left[\mathbb P(\mathbf{y} \mid \boldsymbol{\Theta}, \mathcal{G}) \right] \right] \\- \mathbb{E}_{\mathbb P(\boldsymbol{\Theta} \mid \mathcal{D})} \left[ \mathcal{H}\left[\mathbb P(\mathbf{y} \mid \boldsymbol{\Theta}, \mathcal{G}) \right] \right]. 
\end{aligned} 
\end{equation}


\section{Methodology}
This section introduces the proposed Hypergraph Neural Stochastic Diffusion (HyperNSD) framework. We first formulate its incidence-aware deterministic drift and stochastic forcing, and then analyze its well-posedness, moment stability, stability under initial-state and structural perturbations, structure-constrained stochastic modes, permutation equivariance, and numerical convergence. Finally, we present the Euler--Maruyama discretization, uncertainty estimation strategy, training objective, and computational complexity.

\subsection{Hypergraph Neural Stochastic Diffusion}
\subsubsection{General Formulation}
The general form of the proposed hypergraph stochastic diffusion equation (HSDE) is defined as
\begin{equation}
\mathrm d\mathbf X(t)=\mathbf F(\mathbf X(t))\mathrm dt+\boldsymbol{\Sigma}(\mathbf X(t))\mathrm d\mathbf Z(t),
\label{eq:general_hsde}
\end{equation}
subject to an initial condition \(\mathbf{X}(0)\), where $\mathbf{X}(t)\in L(\mathcal{V})$ denotes the stochastic node state at time $t$. The drift field \(\mathbf F:L(\mathcal{V})\rightarrow L(\mathcal{V})\) governs the deterministic hypergraph diffusion dynamics. The stochastic diffusion operator \(\boldsymbol\Sigma(\mathbf{X}):L(\mathcal{E},\mathcal{V})\rightarrow L(\mathcal{V})\) determines how stochastic perturbations defined on node--hyperedge incidence relations are propagated to the node state space. The driving process \(\mathbf{Z}(t)\) is an \(L({\mathcal E,\mathcal V})\)-valued $\mathcal Q$-Wiener process~\cite{lord2014introduction}. A formal definition of the $\mathcal Q$-Wiener process is provided in Appendix~\ref{appendix:Proposition1}. The covariance operator \(
\mathcal  Q: L(\mathcal{E},\mathcal{V}) \rightarrow L(\mathcal{E},\mathcal{V}) \), specified below, characterizes the correlation structure of stochastic perturbations across node--hyperedge incidence relations.

\subsubsection{Hypergraph Neural Drift}
We employ the deterministic hypergraph neural diffusion equation~\cite{zhou2026hypergraph} introduced in Section~III as the drift component of the proposed stochastic dynamics. Specifically, the drift field $\mathbf F(\mathbf X(t))$ is defined as
\begin{equation}
\mathbf F^{\theta}(\mathbf X(t))=-{\mathbf G}^{\top} \mathbf A_{\theta}(\mathbf X(t)) \mathbf G \mathbf X(t),
\label{eq:hsde_drift}
\end{equation}
where $\mathbf{A}_{\theta}(\mathbf{X}(t))=\operatorname{diag}\big(a_{\theta}(\mathbf{X}_v(t),\mathbf{X}_e(t))\big)_{(e,v)\in\mathcal{I}}\in\mathbb{R}^{N\times N}$ is a learnable diagonal modulation matrix defined over the node--hyperedge incidence set $\mathcal I$.

For each incidence pair $(e,v)\in\mathcal I$, let $\mathbf{X}_e(t)=\operatorname{Agg}_u(\{\mathbf{X}_u(t) : u \in e\})$, where $\operatorname{Agg}$ is a permutation-invariant aggregator (e.g., mean or max). Then an unnormalized compatibility score is computed as
\begin{equation}
\begin{aligned}
&r^{\theta}_{(e,v)} = \sigma\left(\operatorname{MLP}_{\theta}(\mathbf{X}_v(t) \| \mathbf{X}_e(t))\right),\\
&a_\theta(\mathbf{X}_v(t), \mathbf{X}_e(t))= \frac{\exp\big(r^{\theta}_{(e,v)}\big)}{\sum_{e' \ni v} \exp\big( r^{\theta}_{(e',v)} \big)},
\end{aligned}
\end{equation}
where $\|$ denotes concatenation, and $\sigma$ is a $\operatorname{LeakyReLU}$ activation function. This formulation guarantees that $a_\theta(\mathbf{X}_v(t), \mathbf{X}_e(t)) > 0$ and \( \sum_{e \ni v} a_\theta(\mathbf{X}_v(t), \mathbf{X}_e(t)) = 1\). This parametrization enables the drift operator to assign incidence-specific drift modulation weights by comparing each node representation with the aggregated representation of its incident hyperedges. Consequently, the deterministic diffusion process adapts to the local node--hyperedge feature configuration while preserving bounded and normalized modulation coefficients.

\subsubsection{Hypergraph Neural Stochastic Forcing}
We introduce a $L(\mathcal{V})$-valued standard Wiener process
\begin{equation}
\mathbf W(t)=[\mathbf W_{1}(t),\ldots,\mathbf W_d(t)]\in L(\mathcal{V}),
\end{equation}
where \(\mathbf W_\ell(t)\) are mutually independent $n$-dimensional standard Wiener processes. Thus, \(\mathbf W(t)\) represents stochastic perturbations defined on hypergraph nodes.

To incorporate the hypergraph incidence geometry into the stochastic forcing, we transform the node-space Brownian motion through the hypergraph gradient:
\begin{equation}
\mathbf Z(t)=\mathbf G\mathbf W(t),
\label{eq:incidence_wiener}
\end{equation}
Since \(\mathbf G\in\mathbb R^{N\times n}\) acts independently on each feature dimension, \(\mathbf Z(t)\in L(\mathcal E,\mathcal V)\) defines a stochastic process over node--hyperedge incidence pairs. Although the node-level perturbations are mutually independent, their transformation through \(\mathbf G\) may induce correlations between different incidence pairs. Hence, the covariance structure of \(\mathbf Z(t)\) is explicitly determined by the hypergraph incidence geometry.

\begin{Proposition}
\label{P1}
Let $\mathbf W(t)$ be the node-space standard Wiener process defined above, and let \(\mathbf Z(t)=\mathbf G\mathbf W(t).\) Then $\mathbf Z(t)$ is an $L(\mathcal E,\mathcal V)$-valued $\mathcal Q$-Wiener process with covariance operator \(\mathcal Q =\mathbf G\mathbf G^{\top}\), where \(\mathcal Q\) is self-adjoint, nonnegative, and trace class and acts independently on each feature dimension. 
\end{Proposition}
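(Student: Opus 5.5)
We verify the defining properties of a $\mathcal Q$-Wiener process (as recalled in Appendix~\ref{appendix:Proposition1}) directly for $\mathbf Z(t)=\mathbf G\mathbf W(t)$. The argument is column-wise: $\mathbf G$ acts on each feature dimension separately. So it suffices to treat $\mathbf Z_\ell(t)=\mathbf G\mathbf W_\ell(t)\in\mathbb R^N$ for each $\ell=1,\dots,d$. Independence of the $\mathbf W_\ell$ then transfers to independence of the $\mathbf Z_\ell$. This is what gives the block-diagonal structure $\mathcal Q=\mathbf I_d\otimes \mathbf G\mathbf G^\top$, i.e.\ $\mathcal Q$ acting independently on each feature dimension.

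\emph{Path and increment properties.} First, $\mathbf Z(0)=\mathbf G\mathbf W(0)=0$ almost surely. Continuity of $t\mapsto \mathbf Z(t)$ follows because $\mathbf G$ is a fixed bounded linear map, and the image of a continuous path under it is continuous. For $0\le s<t$ we have $\mathbf Z(t)-\mathbf Z(s)=\mathbf G(\mathbf W(t)-\mathbf W(s))$. Increments over disjoint intervals are images under the same linear map of independent increments of $\mathbf W$, so they are independent. Each increment is the linear image of a centered Gaussian vector, hence itself centered Gaussian.

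\emph{Covariance and properties of $\mathcal Q$.} For the covariance of an increment,
\begin{equation*}
\mathbb E\big[(\mathbf Z_\ell(t)-\mathbf Z_\ell(s))(\mathbf Z_{\ell'}(t)-\mathbf Z_{\ell'}(s))^\top\big]=\delta_{\ell\ell'}(t-s)\,\mathbf G\mathbf G^\top.
\end{equation*}
This follows from $\mathbb E[\Delta\mathbf W_\ell\Delta\mathbf W_{\ell'}^\top]=\delta_{\ell\ell'}(t-s)\mathbf I_n$. Equivalently, for test functions $\mathbf g,\mathbf h\in L(\mathcal E,\mathcal V)$, $\mathbb E\langle \Delta\mathbf Z,\mathbf g\rangle\langle\Delta\mathbf Z,\mathbf h\rangle=(t-s)\langle\mathcal Q\mathbf g,\mathbf h\rangle$, with the Frobenius inner product. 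The required properties of $\mathcal Q$ are then immediate:
\begin{itemize}
\item self-adjointness, since $(\mathbf G\mathbf G^\top)^\top=\mathbf G\mathbf G^\top$;
\item nonnegativity, since $\langle \mathbf G\mathbf G^\top \mathbf g,\mathbf g\rangle=\|\mathbf G^\top\mathbf g\|_F^2\ge0$;
\item trace class, since the space is finite-dimensional, with $\operatorname{tr}\mathcal Q=d\,\|\mathbf G\|_F^2<\infty$.
\end{itemize}

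\emph{Main obstacle.} The only delicate point is that $\mathcal Q$ is typically \emph{degenerate}. Its range is $\operatorname{range}(\mathbf G)$, of dimension at most $n<N$. The construction of a $\mathcal Q$-Wiener process must therefore allow zero eigenvalues. I will handle this with the spectral expansion $\mathbf Z(t)=\sum_j\sqrt{\lambda_j}\,\beta_j(t)\boldsymbol\chi_j$, where:
\begin{itemize}
\item $\mathbf G=\mathbf U\mathbf S\mathbf V^\top$ is an SVD, and the $\boldsymbol\chi_j$ are the columns of $\mathbf U$ tensored with the feature basis;
\item $\lambda_j=s_j^2$;
\item $\beta_j=\langle \mathbf V^\top\mathbf W,\cdot\rangle/1$ are the corresponding independent standard Brownian motions, obtained by rotating $\mathbf W$ with the orthogonal $\mathbf V$.
\end{itemize}
Zero eigenvalues simply contribute no terms. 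This expansion matches the standard series characterization in \cite{lord2014introduction}, which completes the identification.
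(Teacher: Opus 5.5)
Your proposal is correct and follows essentially the same route as the paper: you verify the conditions column-wise ($\mathbf Z(0)=0$, continuity, independent centered Gaussian increments with covariance $\delta_{\ell\ell'}(t-s)\mathbf G\mathbf G^{\top}$), give the Frobenius test-function identity, and then check self-adjointness, nonnegativity and $\operatorname{Tr}\mathcal Q=d\|\mathbf G\|_F^2$. Your SVD series is a harmless extra rather than a needed step, since the recalled definition only requires $\mathcal N(0,(t-s)\mathcal Q)$ increments for a nonnegative $\mathcal Q$, and the paper simply notes the process is ``possibly degenerate''; the one condition you skip, $\mathcal F_t$-adaptedness, is immediate because $\mathbf G$ is deterministic.
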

The proof is provided in Appendix~\ref{appendix:Proposition1}.

We introduce a learnable diagonal operator to modulate the stochastic forcing over the node--hyperedge incidence space:
\begin{equation}
\mathbf B_{\phi}(\mathbf X(t))=\operatorname{diag}\left(b_{\phi}(\mathbf{X}_v(t),\mathbf{X}_e(t))\right)_{(e,v)\in\mathcal I}\in\mathbb R^{N\times N},
\label{eq:stochastic_modulation}
\end{equation}
where $\phi$ denotes a set of learnable parameters independent of the drift parameters $\theta$. For each incidence pair $(e,v)\in\mathcal I$, the coefficient $b_{\phi}(\mathbf X_v(t),\mathbf X_e(t))$ controls the relative amplitude of the stochastic perturbation assigned to that incidence relation.

We employ the same compatibility mechanism as in the drift module, while using an independent scalar-valued network. The resulting stochastic modulation coefficients are then normalized over all hyperedges incident to node $v$:
\begin{equation}
\begin{aligned}
&r^{\phi}_{(e,v)} = \sigma\left(\operatorname{MLP}_{\phi}(\mathbf{X}_v(t) \| \mathbf{X}_e(t))\right),\\
&b_{\phi}(\mathbf{X}_v(t), \mathbf{X}_e(t))=\frac{\exp\big( r^{\phi}_{(e,v)}\big)}{\sum_{e'\ni v}\exp\big( r^{\phi}_{(e',v)}\big)}.
\end{aligned}
\end{equation}
Hence, $\mathbf B_{\phi}(\mathbf X(t))$ adaptively redistributes the stochastic perturbation relative amplitude among the incidence relations associated with each node. In contrast to $\mathbf A_{\theta}$, which regulates deterministic feature transport, $\mathbf B_{\phi}$ controls how structure-informed stochastic perturbations are injected into the node dynamics.

The stochastic diffusion operator is then defined as
\begin{equation}
\boldsymbol{\Sigma}(\mathbf X(t))=\mathbf G^{\top}\mathbf B_\phi(\mathbf X(t)).
\label{eq:stochastic_operator}
\end{equation}
According to Eqs.~\eqref{eq:incidence_wiener} and~\eqref{eq:stochastic_operator}, the stochastic forcing term in Eq.~\eqref{eq:general_hsde} becomes
\begin{equation}
\boldsymbol{\Sigma}(\mathbf X(t))\mathrm d\mathbf Z(t)=\mathbf G^{\top}
\mathbf B_\phi(\mathbf X(t))\mathbf G\mathrm d\mathbf W(t).
\label{eq:stochastic_force}
\end{equation}
This construction consists of three operations. First, the node-space Brownian motion is transformed into incidence-space noise through \(\mathbf G\). Second, the learnable operator \(\mathbf B_\phi(\mathbf X(t))\)
adaptively redistributes the relative stochastic amplitude. Finally,
\(\mathbf G^{\top}\) aggregates the perturbed incidence fluxes back to the node domain.

\subsubsection{Structure-Informed HyperNSD}
\begin{figure*}[t]
\centering
\includegraphics[scale=0.8]{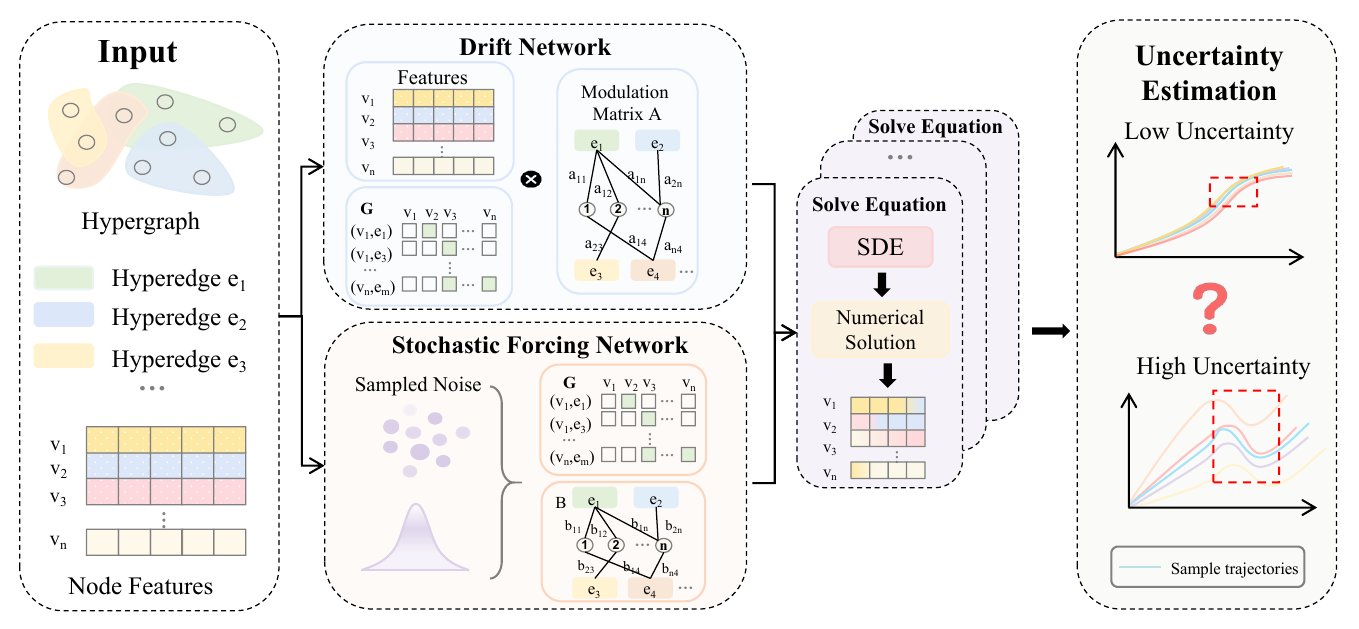}
 \caption{Overview of the proposed HyperNSD framework. The drift and stochastic forcing networks jointly model deterministic hypergraph diffusion and structure-informed perturbations, while repeated stochastic numerical realizations generate trajectories for uncertainty estimation.}
\label{Framework}
\end{figure*}
Combining the deterministic drift in Eq.~\eqref{eq:hsde_drift} with the stochastic forcing in Eq.~\eqref{eq:stochastic_force}, we obtain the proposed structure-informed HyperNSD:
\begin{equation}
\mathrm d\mathbf X(t)=-\mathbf G^{\top}\mathbf A_\theta(\mathbf X(t))\mathbf G\mathbf X(t)\mathrm dt+\mathbf G^{\top} \mathbf B_\phi(\mathbf X(t))\mathbf G\mathrm d\mathbf W(t),
\label{eq:node_driven_hsde}
\end{equation}
with the initial condition $\mathbf{X}(0)=\zeta(\mathbf X_{\text{init}})\in\mathbb R^{n\times d}$, where $\zeta$ is a learnable input encoder. To simplify the notation and subsequent analysis, we denote the corresponding effective node-space diffusion operator by
\(\boldsymbol{\Sigma}^{\phi}(\mathbf X(t))=\mathbf G^{\top}\mathbf B_{\phi}(\mathbf X(t))\mathbf G.\)

The drift term in Eq.~\eqref{eq:node_driven_hsde} governs deterministic higher-order feature propagation, whereas the stochastic term injects adaptively modulated, structure-dependent stochastic perturbations through the same node--hyperedge incidence geometry. Therefore, both deterministic diffusion and stochastic representation variability are consistently defined over the hypergraph incidence domain.

\subsection{Theoretical Analysis}
By construction, the neural drift $\mathbf F^{\theta}$ and diffusion coefficient $\boldsymbol{\Sigma}^{\phi}$ combine the fixed hypergraph gradient operators with softmax-normalized, state-dependent modulation matrices. Consequently, they remain bounded in their modulation strengths and inherit local regularity from the neural compatibility functions. Their local Lipschitz continuity and linear-growth properties are formalized in the following proposition.

\begin{Proposition}
\label{prop:coefficient_regularity}
For every $R>0$, there exists a constant $L_R>0$ such that
\begin{equation}
\begin{aligned}
\|\mathbf F^{\theta}(\mathbf X)-\mathbf F^{\theta}(\mathbf Y)\|_F^2
&+\|\boldsymbol{\Sigma}^{\phi}(\mathbf X)-\boldsymbol{\Sigma}^{\phi}(\mathbf Y)\|_{F}^2\\
&\leq L_R\|\mathbf X-\mathbf Y\|_F^2
\end{aligned}
\label{eq:neural_coefficients_local_lipschitz}
\end{equation}
for all $\mathbf X,\mathbf Y\in L(\mathcal V)$ satisfying \(\|\mathbf X\|_F\leq R,\;\|\mathbf Y\|_F\leq R.\)

Moreover, there exists a constant $C>0$, such that
\begin{equation}
\|\mathbf F^{\theta}(\mathbf X)\|_F^2+\|\boldsymbol{\Sigma}^{\phi}(\mathbf X)\|_{F}^2\leq C\left(1+\|\mathbf X\|_F^2\right).
\label{eq:neural_coefficients_linear_growth}
\end{equation}
Therefore, $\mathbf F^{\theta}$ and $\boldsymbol{\Sigma}^{\phi}$ are locally Lipschitz continuous and satisfy the linear-growth condition.
\end{Proposition}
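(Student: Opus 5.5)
The plan is to treat $\mathbf F^{\theta}$ and $\boldsymbol{\Sigma}^{\phi}$ as compositions of fixed linear maps ($\mathbf G$, $\mathbf G^{\top}$) with the state-dependent diagonal matrices $\mathbf A_\theta(\mathbf X)$ and $\mathbf B_\phi(\mathbf X)$. Two facts about these diagonal matrices carry the argument. First, every entry lies in $(0,1]$, because of the softmax normalization over $e'\ni v$. Hence $\|\mathbf A_\theta(\mathbf X)\|_2\le 1$, $\|\mathbf B_\phi(\mathbf X)\|_2\le 1$, and $\|\mathbf B_\phi(\mathbf X)\|_F\le\sqrt N$ for every $\mathbf X$. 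Second, the maps $\mathbf X\mapsto\mathbf A_\theta(\mathbf X)$ and $\mathbf X\mapsto \mathbf B_\phi(\mathbf X)$ are Lipschitz on bounded sets. I will use the operator-norm inequality $\|\mathbf M\mathbf N\|_F\le\|\mathbf M\|_2\|\mathbf N\|_F$ throughout and write $g:=\|\mathbf G\|_2$, a constant fixed by the hypergraph.

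\textbf{Linear growth.} This part is immediate. For the drift,
\[
\|\mathbf F^{\theta}(\mathbf X)\|_F\le g\,\|\mathbf A_\theta(\mathbf X)\|_2\, g\,\|\mathbf X\|_F\le g^2\|\mathbf X\|_F .
\]
For the diffusion coefficient, $\|\boldsymbol{\Sigma}^{\phi}(\mathbf X)\|_F\le g^2\|\mathbf B_\phi(\mathbf X)\|_F\le g^2\sqrt N$, which is bounded uniformly in $\mathbf X$. Squaring and adding gives Eq.~\eqref{eq:neural_coefficients_linear_growth} with $C=g^4\max(1,N)$.

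\textbf{Local Lipschitz continuity of the modulation matrices.} For each pair $(e,v)$:
\begin{enumerate}
\item The inputs $\mathbf X_v$ and $\mathbf X_e=\operatorname{Agg}_u(\mathbf X_u)$ are globally Lipschitz in $\mathbf X$. The mean is linear. The max is coordinatewise $1$-Lipschitz.
\item An MLP with Lipschitz activations, such as LeakyReLU or tanh, is Lipschitz with constant bounded by the product of its weight norms. Composing with the outer LeakyReLU $\sigma$ keeps this property, so the scores $r^{\theta}_{(e,v)}$ are globally Lipschitz in $\mathbf X$.
\item On the ball $\|\mathbf X\|_F\le R$ the scores stay in a compact set. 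The softmax is smooth, with Jacobian entries bounded by $1$ in absolute value, so it is Lipschitz there.
\end{enumerate}
Together these give a constant $\ell_R$ with $\|\mathbf A_\theta(\mathbf X)-\mathbf A_\theta(\mathbf Y)\|_F\le \ell_R\|\mathbf X-\mathbf Y\|_F$, and the same bound for $\mathbf B_\phi$. The softmax is in fact globally Lipschitz, so the constant may not even need to depend on $R$.

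\textbf{Assembling the Lipschitz bound.} For the drift I would split
\[
\mathbf F^{\theta}(\mathbf X)-\mathbf F^{\theta}(\mathbf Y)=-\mathbf G^{\top}\big(\mathbf A_\theta(\mathbf X)-\mathbf A_\theta(\mathbf Y)\big)\mathbf G\mathbf X-\mathbf G^{\top}\mathbf A_\theta(\mathbf Y)\mathbf G(\mathbf X-\mathbf Y).
\]
The first term is bounded by $g^2\ell_R R\|\mathbf X-\mathbf Y\|_F$, using $\|\mathbf M\mathbf N\|_F\le\|\mathbf M\|_F\|\mathbf N\|_2$ and $\|\mathbf X\|_2\le\|\mathbf X\|_F\le R$. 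The second term is bounded by $g^2\|\mathbf X-\mathbf Y\|_F$. The diffusion coefficient is simpler:
\[
\|\boldsymbol{\Sigma}^{\phi}(\mathbf X)-\boldsymbol{\Sigma}^{\phi}(\mathbf Y)\|_F\le g^2\ell_R\|\mathbf X-\mathbf Y\|_F .
\]
Squaring with $(a+b)^2\le 2a^2+2b^2$ and summing gives Eq.~\eqref{eq:neural_coefficients_local_lipschitz} with
\[
L_R=2g^4\big(\ell_R^2R^2+1\big)+g^4\ell_R^2 .
\]

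\textbf{Main obstacle.} The delicate step is the Lipschitz bound on the modulation matrices, because it depends on architectural assumptions. The MLP activations must be Lipschitz; LeakyReLU is only piecewise linear, which still suffices for a Lipschitz bound. The aggregator must be Lipschitz as well. I will state these assumptions explicitly: a finite-weight MLP with Lipschitz activations, and a mean or max aggregator.
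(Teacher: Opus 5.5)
Your proposal is correct and follows essentially the same route as the paper. It uses the same add-and-subtract splitting of $\mathbf F^{\theta}(\mathbf X)-\mathbf F^{\theta}(\mathbf Y)$, the same uniform bounds $\|\mathbf A_\theta(\mathbf X)\|_2\le 1$ and $\|\mathbf B_\phi(\mathbf X)\|_2\le 1$ from the softmax normalization, and the same local Lipschitz continuity of the modulation matrices, obtained by composing the aggregator, the LeakyReLU MLP and the softmax. The differences are cosmetic: you work with $\|\mathbf G\|_2$ instead of $\|\mathbf G\|_F$, and you bound $\|\boldsymbol{\Sigma}^{\phi}(\mathbf X)\|_F$ by $\|\mathbf G\|_2^2\sqrt N$ rather than by the paper's $\|\mathbf G\|_F^2$.
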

The proof is provided in Appendix~\ref{appendix:Proposition2}.

We next establish the well-posedness and stability properties of Eq.~\eqref{eq:node_driven_hsde}. Proposition~\ref{prop:coefficient_regularity} shows that the drift and diffusion coefficients satisfy the local regularity conditions required for local existence and uniqueness. To extend the local solution to a global one, we further exploit the dissipative structure of the drift term together with the uniform boundedness of the diffusion coefficient. These properties prevent finite-time explosion and yield explicit moment, energy, and stopped stability estimates, as summarized in the following theorem.

\begin{theorem}
\label{thm:wellposedness_energy_stability}
Assume that $\mathbf X(0)$ is $\mathcal F_0$-measurable and square integrable. Then the following statements hold. 

\textnormal{(i)} Equation~\eqref{eq:node_driven_hsde} admits a unique global strong solution $\mathbf X(t)$ on every finite interval $[0,T]$. Moreover,
\begin{equation}
\mathbb E\left[\sup_{0\leq t\leq T}\|\mathbf X(t)\|_F^2\right]\leq C_T\left(1+\mathbb E\|\mathbf X(0)\|_F^2\right).
\end{equation}

\textnormal{(ii)} For every $t\in[0,T]$,
\begin{equation}
\begin{aligned}
\mathbb E\|\mathbf X(t)\|_F^2
&+2\mathbb E\int_0^t\|\mathbf A_\theta(\mathbf X(s))^{1/2}\mathbf G\mathbf X(s)\|_F^2\mathrm ds\\
&=\mathbb E\|\mathbf X(0)\|_F^2+d\mathbb E\int_0^t\|\boldsymbol{\Sigma}^{\phi}(\mathbf X(s))\|_F^2\mathrm ds.
\end{aligned}
\end{equation}
This implies that
\begin{equation}
\mathbb E\|\mathbf X(t)\|_F^2\leq\mathbb E\|\mathbf X(0)\|_F^2+dt\|\mathbf G\|_F^4.
\end{equation}

\textnormal{(iii)} Let $\mathbf X(t)$ and $\mathbf Y(t)$ be two solutions driven by the same node-space Wiener process, with initial conditions $\mathbf X(0)$ and $\mathbf Y(0)$. For
\begin{equation}
\rho_R=\inf\{t\geq0:\|\mathbf X(t)\|_F\vee\|\mathbf Y(t)\|_F\geq R\},
\end{equation}
there exists $C_{R,T}>0$ such that
\begin{equation}
\begin{aligned}
\mathbb E[\sup_{0\leq t\leq T}\|\mathbf X(t\wedge\rho_R)-\mathbf Y(t\wedge\rho_R)\|_F^2]\\
\leq C_{R,T}\mathbb E[
\|\mathbf X(0)-\mathbf Y(0)\|_F^2].
\end{aligned}
\end{equation}
\end{theorem}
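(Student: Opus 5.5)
Part (i) will come from the standard truncation argument for SDEs with locally Lipschitz coefficients. By Proposition~\ref{prop:coefficient_regularity}, $\mathbf F^{\theta}$ and $\boldsymbol{\Sigma}^{\phi}$ are Lipschitz on each ball $\{\|\mathbf X\|_F\le R\}$. For each $R$ we replace them by globally Lipschitz truncations that agree with the originals on that ball. Each truncated equation has a unique strong solution $\mathbf X^R$, and these solutions coincide up to the exit time $\tau_R=\inf\{t:\|\mathbf X^R(t)\|_F\ge R\}$. Patching them together gives a unique maximal local solution on $[0,\tau_\infty)$ with $\tau_\infty=\lim_R\tau_R$.

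To show $\tau_\infty>T$ almost surely, we apply Itô's formula to $\|\mathbf X(t)\|_F^2$ up to $t\wedge\tau_R$:
\begin{equation*}
\mathrm d\|\mathbf X\|_F^2=-2\langle \mathbf X,\mathbf G^{\top}\mathbf A_\theta\mathbf G\mathbf X\rangle\,\mathrm dt+\operatorname{tr}\text{-term}\,\mathrm dt+2\langle \mathbf X,\boldsymbol{\Sigma}^{\phi}(\mathbf X)\mathrm d\mathbf W\rangle .
\end{equation*}
The drift contribution is $-2\|\mathbf A_\theta^{1/2}\mathbf G\mathbf X\|_F^2\le 0$, because $\mathbf A_\theta$ is diagonal with positive entries. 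Since $\mathbf W$ has $d$ independent columns, each driven by the same $n\times n$ matrix $\boldsymbol{\Sigma}^{\phi}$, the Itô correction is $d\,\|\boldsymbol{\Sigma}^{\phi}(\mathbf X)\|_F^2$. This is where the constant $d$ in (ii) originates.

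The key bound is that the softmax entries of $\mathbf B_\phi$ lie in $(0,1]$. By submultiplicativity, $\|\boldsymbol{\Sigma}^{\phi}\|_F=\|\mathbf G^{\top}\mathbf B_\phi\mathbf G\|_F\le\|\mathbf G\|_F\|\mathbf B_\phi\|_{op}\|\mathbf G\|_F\le\|\mathbf G\|_F^2$, uniformly in $\mathbf X$. Hence $\mathbb E\|\mathbf X(t\wedge\tau_R)\|_F^2\le\mathbb E\|\mathbf X(0)\|_F^2+dT\|\mathbf G\|_F^4$. The usual Chebyshev argument then gives $R^2\,\mathbb P(\tau_R\le T)$ bounded, so $\tau_R\to\infty$ almost surely.

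For the supremum bound in (i), we take $\sup_{t\le T}$ in the Itô identity, drop the nonpositive dissipation term, and bound the martingale with Burkholder--Davis--Gundy:
\begin{equation*}
\mathbb E\sup_{t\le T}\Big|\int_0^t\langle\mathbf X,\boldsymbol{\Sigma}^{\phi}\mathrm d\mathbf W\rangle\Big|\le C\,\mathbb E\Big(\int_0^T\|\mathbf X\|_F^2\|\boldsymbol{\Sigma}^{\phi}\|_F^2\,\mathrm ds\Big)^{1/2}\le \tfrac14\mathbb E\sup_{t\le T}\|\mathbf X\|_F^2+C'T\|\mathbf G\|_F^4 .
\end{equation*}
This is first done with stopping at $\tau_R$, so that all quantities are finite and the $\tfrac14$ term can be absorbed. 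Letting $R\to\infty$ by Fatou's lemma gives the claimed bound with some constant $C_T$.

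For (ii), the same Itô identity holds without stopping once we know the stochastic integral is a true martingale. This follows from $\mathbb E\int_0^T\|\mathbf X\|_F^2\|\boldsymbol{\Sigma}^{\phi}\|_F^2\,\mathrm ds\le T\|\mathbf G\|_F^4\,\mathbb E\sup_{t\le T}\|\mathbf X\|_F^2<\infty$, using (i). Taking expectations gives the energy equality exactly. Dropping the nonnegative dissipation term and using $\|\boldsymbol{\Sigma}^{\phi}\|_F^2\le\|\mathbf G\|_F^4$ gives the inequality.

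For (iii), we set $\mathbf E=\mathbf X-\mathbf Y$ and apply Itô's formula to $\|\mathbf E(t\wedge\rho_R)\|_F^2$. Before $\rho_R$ both states lie in the ball of radius $R$, so Proposition~\ref{prop:coefficient_regularity} controls the drift cross term by Cauchy--Schwarz, $2\langle\mathbf E,\mathbf F^\theta(\mathbf X)-\mathbf F^\theta(\mathbf Y)\rangle\le(1+L_R)\|\mathbf E\|_F^2$, and the Itô correction by $d\,\|\boldsymbol{\Sigma}^{\phi}(\mathbf X)-\boldsymbol{\Sigma}^{\phi}(\mathbf Y)\|_F^2\le dL_R\|\mathbf E\|_F^2$. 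The martingale term is handled with BDG and absorption as in (i). The resulting inequality is
\begin{equation*}
\mathbb E\sup_{s\le t}\|\mathbf E(s\wedge\rho_R)\|_F^2\le 2\,\mathbb E\|\mathbf E(0)\|_F^2+C_R\int_0^t\mathbb E\sup_{r\le s}\|\mathbf E(r\wedge\rho_R)\|_F^2\,\mathrm ds ,
\end{equation*}
and Gronwall's lemma yields the bound with $C_{R,T}=2e^{C_RT}$.

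The main obstacle I expect is the bookkeeping of the Itô correction in the matrix-valued setting. One must confirm that it equals $d\|\boldsymbol{\Sigma}^{\phi}\|_F^2$, consistent with the statement, rather than $\|\boldsymbol{\Sigma}^{\phi}\|_F^2$. One must also justify the operator-norm bound $\|\mathbf B_\phi\|_{op}\le1$, which holds because the diagonal entries are softmax weights. Everything else is standard once Proposition~\ref{prop:coefficient_regularity} and the positivity of $\mathbf A_\theta$ are used.
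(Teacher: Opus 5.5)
Your proposal is correct and follows the paper's proof essentially step for step. The shared steps are: truncation and localization to a maximal local solution; the dissipativity identity $\langle \mathbf X,\mathbf F^{\theta}(\mathbf X)\rangle_F=-\|\mathbf A_\theta(\mathbf X)^{1/2}\mathbf G\mathbf X\|_F^2$ together with the uniform bound $\|\boldsymbol{\Sigma}^{\phi}(\mathbf X)\|_F\le\|\mathbf G\|_F^2$; BDG with absorption and Fatou for the supremum estimate; the true-martingale argument for the energy identity with Itô correction $d\|\boldsymbol{\Sigma}^{\phi}\|_F^2$; and Itô on the difference plus Gronwall giving $C_{R,T}=2e^{C_RT}$. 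The differences are cosmetic: the paper truncates via the radial projection $\pi_R$ and runs the Chebyshev non-explosion argument off the stopped supremum bound instead of your pointwise stopped bound, and both variants are valid.
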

The proof of Theorem~1 is provided in Appendix~\ref{appendix:theorem1}.

Theorem~\ref{thm:wellposedness_energy_stability} establishes that the proposed stochastic dynamics are globally well defined, energetically controlled, and locally stable with respect to perturbations of the initial state before the exit time. Beyond initial-state stability, we further examine how the HyperNSD solution changes when the underlying hypergraph structure is perturbed. The following theorem establishes a local mean-square stability result with respect to aligned perturbations of the hypergraph gradient operator.

\begin{theorem}
\label{thm:G}
Let \(\mathbf X^{\mathbf G}(t)\) and
\(\mathbf X^{\mathbf{G'}}(t)\)
be the solutions of two HyperNSD systems associated with the hypergraph gradient matrices \(\mathbf G,\mathbf G'\in\mathbb R^{N\times n}\), represented in a consistently aligned incidence coordinate system, and satisfying the same initial condition. For \(R>0\), define the stopping time
\begin{equation}
\tau_R=\inf\{t\ge0:\|\mathbf X^{\mathbf G}(t)\|_{F}\vee\|\mathbf X^{\mathbf G'}(t)\|_{F}\ge R\}.
\end{equation}

If there exist constants \(L_{A,R}>0\) and \(L_{B,R}>0\) such that, for every \(\mathbf X\) satisfying \(\|\mathbf X\|_{F}\le R\),
\begin{equation}
\begin{aligned}
\|\mathbf A_{\theta}^\mathbf G(\mathbf X)
-\mathbf A_{\theta}^{\mathbf G'}(\mathbf X)\|_{F}\le L_{A,R}\|\mathbf G-\mathbf G'\|_{F},\\
\|\mathbf B_{\phi}^{\mathbf G}(\mathbf X)
-\mathbf B_{\phi}^{\mathbf G'}(\mathbf X)
\|_{F}\le L_{B,R}\|\mathbf G-\mathbf G'\|_{F}.
\end{aligned}
\end{equation}

Then, for every finite \(T>0\), there exists a constant \(C^G_{R,T}>0\), independent of \(\mathbf G-\mathbf G'\), such that
\begin{equation}
\mathbb E[\sup_{0\le t\le T}\|\mathbf X^{\mathbf G}(t\wedge\tau_R)-\mathbf X^{\mathbf G'}(t\wedge\tau_R)\|_{F}^{2}
]\le C^G_{R,T}\|\mathbf G-\mathbf G'\|_{F}^{2}.
\end{equation}
\end{theorem}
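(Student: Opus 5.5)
We work with the difference process \(\mathbf E(t)=\mathbf X^{\mathbf G}(t)-\mathbf X^{\mathbf G'}(t)\). Since both systems have the same initial condition and are driven by the same node-space Wiener process \(\mathbf W\), we have \(\mathbf E(0)=0\), and on \([0,t\wedge\tau_R]\)
\begin{equation*}
\begin{aligned}
\mathbf E(t\wedge\tau_R)=&-\int_0^{t\wedge\tau_R}\big[\mathbf F^{\mathbf G}(\mathbf X^{\mathbf G}(s))-\mathbf F^{\mathbf G'}(\mathbf X^{\mathbf G'}(s))\big]\mathrm ds\\
&+\int_0^{t\wedge\tau_R}\big[\boldsymbol\Sigma^{\mathbf G}(\mathbf X^{\mathbf G}(s))-\boldsymbol\Sigma^{\mathbf G'}(\mathbf X^{\mathbf G'}(s))\big]\mathrm d\mathbf W(s),
\end{aligned}
\end{equation*}
where \(\mathbf F^{\mathbf G}(\mathbf X)=\mathbf G^{\top}\mathbf A^{\mathbf G}_\theta(\mathbf X)\mathbf G\mathbf X\) and \(\boldsymbol\Sigma^{\mathbf G}(\mathbf X)=\mathbf G^{\top}\mathbf B^{\mathbf G}_\phi(\mathbf X)\mathbf G\).

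The plan is to split each coefficient difference into a state part and a structure part. For the state part, \(\mathbf F^{\mathbf G}(\mathbf X^{\mathbf G})-\mathbf F^{\mathbf G}(\mathbf X^{\mathbf G'})\), we use Proposition~\ref{prop:coefficient_regularity}. Its constant \(L_R\) may be taken uniform in the gradient matrix as long as \(\|\mathbf G\|_F,\|\mathbf G'\|_F\) stay bounded, and they are fixed here. This gives
\begin{equation*}
\|\mathbf F^{\mathbf G}(\mathbf X^{\mathbf G})-\mathbf F^{\mathbf G}(\mathbf X^{\mathbf G'})\|_F^2+\|\boldsymbol\Sigma^{\mathbf G}(\mathbf X^{\mathbf G})-\boldsymbol\Sigma^{\mathbf G}(\mathbf X^{\mathbf G'})\|_F^2\le L_R\|\mathbf E\|_F^2 .
\end{equation*}
For the structure part at a fixed state \(\mathbf X\) with \(\|\mathbf X\|_F\le R\), we telescope:
\begin{equation*}
\begin{aligned}
\mathbf G^{\top}\mathbf A^{\mathbf G}\mathbf G-\mathbf G'^{\top}\mathbf A^{\mathbf G'}\mathbf G'=&(\mathbf G-\mathbf G')^{\top}\mathbf A^{\mathbf G}\mathbf G+\mathbf G'^{\top}(\mathbf A^{\mathbf G}-\mathbf A^{\mathbf G'})\mathbf G\\
&+\mathbf G'^{\top}\mathbf A^{\mathbf G'}(\mathbf G-\mathbf G').
\end{aligned}
\end{equation*}
We then use submultiplicativity of the Frobenius norm, the bound \(\|\mathbf A\|_F\le\sqrt N\) (the softmax entries lie in \((0,1]\)), the assumed bound \(\|\mathbf A^{\mathbf G}-\mathbf A^{\mathbf G'}\|_F\le L_{A,R}\|\mathbf G-\mathbf G'\|_F\), and \(\|\mathbf X\|_F\le R\). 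This yields \(\|\mathbf F^{\mathbf G}(\mathbf X)-\mathbf F^{\mathbf G'}(\mathbf X)\|_F\le K_{A,R}\|\mathbf G-\mathbf G'\|_F\), where \(K_{A,R}\) depends on \(R,N,L_{A,R},\|\mathbf G\|_F,\|\mathbf G'\|_F\). The same telescoping with \(L_{B,R}\), and without the factor \(\mathbf X\), gives the analogous bound for \(\boldsymbol\Sigma\).

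Next we estimate \(\mathbb E\sup_{t\le T}\|\mathbf E(t\wedge\tau_R)\|_F^2\) using \((a+b)^2\le2a^2+2b^2\). The drift term is handled by Cauchy--Schwarz in time. The stochastic integral is handled by the Burkholder--Davis--Gundy (or Doob) inequality; each column of \(\mathbf W\) is a standard Wiener process, so the quadratic variation is controlled by the Frobenius norm of the diffusion difference, up to a dimension factor. Combining these with the two previous estimates gives
\begin{equation*}
\mathbb E\sup_{s\le t}\|\mathbf E(s\wedge\tau_R)\|_F^2\le C_1\|\mathbf G-\mathbf G'\|_F^2+C_2\int_0^t\mathbb E\sup_{r\le s}\|\mathbf E(r\wedge\tau_R)\|_F^2\,\mathrm ds ,
\end{equation*}
with \(C_1,C_2\) depending only on \(R,T,N,d\), the Lipschitz constants and the norms of \(\mathbf G,\mathbf G'\). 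The left-hand side is finite because the stopped processes are bounded by \(R\), so Gronwall's inequality applies and gives \(C^G_{R,T}=C_1e^{C_2T}\), which does not depend on \(\mathbf G-\mathbf G'\).

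The main obstacle is making the stopping time do its job rigorously and keeping the constants independent of the perturbation. Before \(\tau_R\), both states are bounded by \(R\), so both the local Lipschitz estimate of Proposition~\ref{prop:coefficient_regularity} and the hypothesis on \(\mathbf A,\mathbf B\) apply. The integrands must therefore carry the indicator \(\mathbb I_{\{s\le\tau_R\}}\) throughout. We also need to check that \(L_R\) in Proposition~\ref{prop:coefficient_regularity} depends on \(\mathbf G\) only through \(\|\mathbf G\|_F\), so that a single constant serves both systems. If it does not, we bound the state part for the \(\mathbf G\)-system alone, which is all the telescoping above requires.
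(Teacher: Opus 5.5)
Your proposal is correct and follows essentially the same route as the paper. You use the synchronous coupling, split each coefficient difference into a state part (handled by the local Lipschitz constant of Proposition~\ref{prop:coefficient_regularity} for the $\mathbf G$-system) and a structure part (handled by exactly the three-term telescoping of Lemma~\ref{lem}), then apply Cauchy--Schwarz, BDG, and Gronwall; the remaining differences are cosmetic, namely $\|\mathbf A\|_F\le\sqrt N$ in place of the paper's $\|\mathbf A\|_2\le 1$ and $(a+b)^2\le 2a^2+2b^2$ using $\mathbf E(0)=0$ in place of the paper's factor~$3$.
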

The proof of Theorem~\ref{thm:G}
 is provided in Appendix~\ref{appendix:G}.

In addition to stability under structural perturbations, the factorization of both the drift and diffusion operators through the hypergraph gradient \(\mathbf G\) also imposes intrinsic geometric constraints on the stochastic dynamics. In particular, the null-space components are preserved, while stochastic variability is confined to the nontrivial structural modes induced by the hypergraph. These structure-constrained modes are characterized in the following theorem.

\begin{theorem}
\label{thm:conservation_covariance}
Let $\operatorname{Proj}$ denote the orthogonal projection onto $\ker \mathbf G$. Then the following properties hold.

\textnormal{(i) Conservation of null-space modes.} For every $t\geq0$,
\begin{equation}
\operatorname{Proj}\mathbf X(t)=\operatorname{Proj}\mathbf X(0)
\end{equation}
almost surely.

\textnormal{(ii) Structure-constrained instantaneous covariance.}
For each feature dimension $\ell$,
\begin{equation}
\operatorname{Cov}
\left(\mathrm d\mathbf X_{\ell}(t)\mid\mathbf X(t)\right)=\boldsymbol{\Sigma}^{\phi}(\mathbf X(t))^2\mathrm dt.
\end{equation}

\textnormal{(iii) Equality of structural modes.}
Since the diagonal matrix
$\mathbf B_{\phi}(\mathbf X)$
is positive definite, it follows that
\begin{equation}
\ker\boldsymbol{\Sigma}^{\phi}(\mathbf X)=\ker \mathbf G,\quad\operatorname{rank}\boldsymbol{\Sigma}^{\phi}(\mathbf X)=\operatorname{rank}\mathbf G.
\end{equation}
If the hypergraph is connected and
$\ker \mathbf G=\operatorname{span}\{\mathbf {D_v}^{1/2}\mathbf 1_n\}$, where $\mathbf 1_n=(1,\ldots,1)^\top\in\mathbb{R}^n$, then
\begin{equation}
\left(\mathbf {D_v}^{1/2}\mathbf 1_n\right)^{\top}\mathbf X(t)=\left(\mathbf {D_v}^{1/2}\mathbf 1_n\right)^{\top}\mathbf X(0)
\end{equation}
almost surely.
\end{theorem}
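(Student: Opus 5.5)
We work directly with the integral form of Eq.~\eqref{eq:node_driven_hsde} provided by Theorem~\ref{thm:wellposedness_energy_stability}(i):
\begin{equation*}
\mathbf X(t)=\mathbf X(0)-\int_0^t\mathbf G^{\top}\mathbf A_\theta(\mathbf X(s))\mathbf G\mathbf X(s)\,\mathrm ds+\int_0^t\boldsymbol{\Sigma}^{\phi}(\mathbf X(s))\,\mathrm d\mathbf W(s).
\end{equation*}
We apply $\operatorname{Proj}$ column by column, since all operators act independently on each feature dimension.

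\textbf{Part (i).} The key identity is $\operatorname{Proj}\mathbf G^{\top}=0$. It holds because $\operatorname{range}\mathbf G^{\top}=(\ker\mathbf G)^{\perp}$. Consequently both $\operatorname{Proj}\mathbf G^{\top}\mathbf A_\theta\mathbf G\mathbf X$ and $\operatorname{Proj}\boldsymbol{\Sigma}^{\phi}=\operatorname{Proj}\mathbf G^{\top}\mathbf B_\phi\mathbf G$ vanish identically. Since $\operatorname{Proj}$ is a deterministic constant matrix, it commutes with the Lebesgue integral and with the Itô integral. The Itô step holds because the integrand is progressively measurable and square integrable by Proposition~\ref{prop:coefficient_regularity} and the moment bound. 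Both integrals then vanish, so $\operatorname{Proj}\mathbf X(t)=\operatorname{Proj}\mathbf X(0)$ almost surely for each fixed $t$. Continuity of paths gives the statement simultaneously for all $t$.

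\textbf{Part (ii).} We interpret the conditional covariance in the usual infinitesimal (Itô) sense. For column $\ell$,
\begin{equation*}
\mathrm d\mathbf X_\ell=\mathbf F^\theta(\mathbf X)_\ell\,\mathrm dt+\boldsymbol{\Sigma}^{\phi}(\mathbf X)\,\mathrm d\mathbf W_\ell,
\end{equation*}
where $\mathbf W_\ell$ is an $n$-dimensional standard Brownian motion. Conditionally on $\mathbf X(t)$, the drift increment is deterministic of order $\mathrm dt$ and contributes only $O(\mathrm dt^2)$ to the covariance. The noise increment has covariance $\boldsymbol{\Sigma}^{\phi}(\boldsymbol{\Sigma}^{\phi})^{\top}\mathrm dt$, equivalently the quadratic covariation $\mathrm d\langle\mathbf X_\ell\rangle_t$. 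Because $\mathbf B_\phi$ is diagonal, $\boldsymbol{\Sigma}^{\phi}=\mathbf G^{\top}\mathbf B_\phi\mathbf G$ is symmetric, so $\boldsymbol{\Sigma}^{\phi}(\boldsymbol{\Sigma}^{\phi})^{\top}=(\boldsymbol{\Sigma}^{\phi})^2$.

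\textbf{Part (iii).} The softmax coefficients $b_\phi$ are strictly positive, so $\mathbf B_\phi$ is positive definite. We prove the two inclusions for the kernel.
\begin{itemize}
\item If $\mathbf G u=0$, then clearly $\boldsymbol{\Sigma}^{\phi}u=0$.
\item Conversely, if $\boldsymbol{\Sigma}^{\phi}u=0$, then
\begin{equation*}
0=u^{\top}\mathbf G^{\top}\mathbf B_\phi\mathbf G u=\|\mathbf B_\phi^{1/2}\mathbf G u\|^2,
\end{equation*}
which forces $\mathbf G u=0$.
\end{itemize}
Hence $\ker\boldsymbol{\Sigma}^{\phi}=\ker\mathbf G$, and rank–nullity gives equal ranks.

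For the conserved quantity, let $q=\mathbf D_{\mathbf v}^{1/2}\mathbf 1_n$, which spans $\ker\mathbf G$ by hypothesis. It is immediate from Eq.~\eqref{eq:hypergraph_gradient} that $\mathbf G q=0$, since $q_u/\sqrt{d_u}=1$ for every $u$. The projection is $\operatorname{Proj}=qq^{\top}/\|q\|^2$. Multiplying part (i) on the left by $q^{\top}$ yields $q^{\top}\mathbf X(t)=q^{\top}\mathbf X(0)$ almost surely. Alternatively, one can apply $q^{\top}$ directly to the integral equation and use $q^{\top}\mathbf G^{\top}=(\mathbf G q)^{\top}=0$.

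The main technical point to handle carefully is the interchange of $\operatorname{Proj}$ with the stochastic integral. We justify it by linearity of the Itô integral for constant matrices, approximating with simple processes if needed, together with the integrability supplied by Theorem~\ref{thm:wellposedness_energy_stability}. The remaining arguments are algebraic.
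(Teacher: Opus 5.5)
Your proposal is correct and follows essentially the same route as the paper: $\operatorname{Proj}\mathbf G^{\top}=0$ kills both the drift and the noise terms for (i); symmetry of $\boldsymbol{\Sigma}^{\phi}=\mathbf G^{\top}\mathbf B_\phi\mathbf G$ gives $\boldsymbol{\Sigma}^{\phi}(\boldsymbol{\Sigma}^{\phi})^{\top}=(\boldsymbol{\Sigma}^{\phi})^2$ for (ii); and the quadratic form $\|\mathbf B_\phi^{1/2}\mathbf G u\|^2$ together with rank--nullity gives (iii), with the conserved quantity $q^{\top}\mathbf X(t)$ read off from (i). Your extra remarks are harmless refinements: path continuity makes the conservation hold for all $t$ simultaneously, and the direct check that $\mathbf G q=0$ is redundant, since $q\in\ker\mathbf G$ already follows from the hypothesis $\ker\mathbf G=\operatorname{span}\{q\}$.
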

The proof of Theorem~\ref{thm:conservation_covariance} is provided in Appendix~\ref{appendix:theorem2}.

Theorem~\ref{thm:conservation_covariance} characterizes how the hypergraph gradient constrains both the conserved modes and the directions along which stochastic variability can propagate. A complementary requirement is that these structural properties should not depend on the arbitrary ordering of the nodes or incidence pairs. We therefore next establish the permutation equivariance of the HyperNSD dynamics under consistent relabeling of the hypergraph.

Let $\pi$ be an arbitrary permutation of the node set, represented by a permutation matrix $\Pi\in\mathbb R^{n\times n}$. The permutation $\pi$ induces a permutation matrix $\mathbf S\in\mathbb R^{N\times N}$ on the node--hyperedge incidence pairs, mapping $(e,v)$ to $(\pi(e),\pi(v))$. Let $\mathcal G^{\pi}$ denote the relabeled hypergraph. Its hypergraph gradient matrix satisfies
\begin{equation}
\mathbf  G^{\pi}=\mathbf  S\mathbf  G\Pi^{\top}.
\label{eq:permuted_gradient_operator}
\end{equation}

Since the hyperedge aggregation operator is permutation invariant and the compatibility networks are shared across all incidence pairs, the deterministic and stochastic modulation matrices transform as
\begin{equation}
\mathbf A_{\theta}^{\pi}(\Pi\mathbf X)=\mathbf S\mathbf A_{\theta}(\mathbf X)\mathbf S^{\top},\quad\mathbf B_{\phi}^{\pi}(\Pi\mathbf X)=\mathbf S\mathbf B_{\phi}(\mathbf X)\mathbf  S^{\top}.
\label{eq:permuted_modulation_matrices}
\end{equation}
As a result,
\begin{equation}
\mathbf F_{\mathcal G^{\pi}}^{\theta}(\Pi\mathbf X)=\Pi\mathbf F_{\mathcal G}^{\theta}(\mathbf X),\quad\boldsymbol{\Sigma}_{\mathcal G^{\pi}}^{\phi}(\Pi\mathbf X)=\Pi\boldsymbol{\Sigma}_{\mathcal G}^{\phi}(\mathbf X)\Pi^{\top}.
\label{eq:permuted_HNSD_coefficients}
\end{equation}

\begin{Proposition}
\label{prop:permutation_equivariance}
Let $\mathbf X_{\mathcal G}(t;\mathbf X(0),\mathbf W)$ denote the solution of the HyperNSD equation on $\mathcal G$, initialized at $\mathbf X(0)$ and driven by the node-space Wiener process $\mathbf W(t)$. Define
\begin{equation}
\mathbf W^{\pi}(t)=\Pi\mathbf W(t).
\end{equation}
Then $\mathbf W^{\pi}(t)$ is also a node-space standard Wiener process, and
\begin{equation}
\mathbf X_{\mathcal G^{\pi}}\left(t;\Pi\mathbf X(0),\mathbf W^{\pi}
\right)=\Pi\mathbf X_{\mathcal G}\left(t;\mathbf X(0),\mathbf W\right)
\label{eq:pathwise_permutation_equivariance}
\end{equation}
for every $t\geq0$, almost surely. Equivalently, when the two systems are driven by standard Wiener processes with the same distribution,
\begin{equation}
\mathbf X_{\mathcal G^{\pi}}\left(t;\Pi\mathbf X(0)\right)\overset{\mathrm d}{=}\Pi\mathbf X_{\mathcal G}\left(t;\mathbf X(0)\right).
\label{eq:distributional_permutation_equivariance}
\end{equation}
\end{Proposition}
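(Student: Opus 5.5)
The plan is to show that the permuted process $\mathbf Y(t):=\Pi\mathbf X_{\mathcal G}(t;\mathbf X(0),\mathbf W)$ is a strong solution of the HyperNSD equation on $\mathcal G^{\pi}$, with initial value $\Pi\mathbf X(0)$ and driving noise $\mathbf W^{\pi}$. Pathwise uniqueness from Theorem~\ref{thm:wellposedness_energy_stability}(i) then forces $\mathbf Y$ to coincide with $\mathbf X_{\mathcal G^{\pi}}(t;\Pi\mathbf X(0),\mathbf W^{\pi})$ almost surely.

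First I check that $\mathbf W^{\pi}=\Pi\mathbf W$ is a node-space standard Wiener process. Columnwise, $\mathbf W^{\pi}_\ell=\Pi\mathbf W_\ell$ is a continuous Gaussian process with independent increments adapted to the same filtration. Its increment covariance is $\Pi I_n\Pi^{\top}(t-s)=I_n(t-s)$ because $\Pi$ is orthogonal. Independence across feature dimensions $\ell$ is preserved since $\Pi$ acts on each column separately. By L\'evy's characterization, or directly from Gaussianity, $\mathbf W^{\pi}$ is standard and has the same law as $\mathbf W$. The filtration is unchanged, so adaptedness and square integrability of $\Pi\mathbf X(0)$ carry over.

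Next I verify the equation. Write the integral form of Eq.~\eqref{eq:node_driven_hsde} on $\mathcal G$ and multiply on the left by the constant matrix $\Pi$. This gives
\[
\mathbf Y(t)=\Pi\mathbf X(0)+\int_0^t\Pi\mathbf F^{\theta}_{\mathcal G}(\mathbf X(s))\,\mathrm ds+\int_0^t\Pi\boldsymbol\Sigma^{\phi}_{\mathcal G}(\mathbf X(s))\,\mathrm d\mathbf W(s).
\]
By Eq.~\eqref{eq:permuted_HNSD_coefficients}, $\Pi\mathbf F^{\theta}_{\mathcal G}(\mathbf X)=\mathbf F^{\theta}_{\mathcal G^\pi}(\Pi\mathbf X)$. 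For the noise term, insert $\Pi^{\top}\Pi=I$:
\[
\Pi\boldsymbol\Sigma^{\phi}_{\mathcal G}(\mathbf X)\,\mathrm d\mathbf W=\Pi\boldsymbol\Sigma^{\phi}_{\mathcal G}(\mathbf X)\Pi^{\top}\,\Pi\,\mathrm d\mathbf W=\boldsymbol\Sigma^{\phi}_{\mathcal G^\pi}(\Pi\mathbf X)\,\mathrm d\mathbf W^{\pi}.
\]
Pulling the deterministic constant matrices $\Pi$ and $\Pi^{\top}$ through the It\^o integral is legitimate by linearity of the stochastic integral. Hence $\mathbf Y$ satisfies the $\mathcal G^{\pi}$ equation with the stated data.

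The $\mathcal G^{\pi}$ system is itself a HyperNSD, with gradient $\mathbf G^{\pi}$ and the same networks. Its coefficients therefore satisfy Proposition~\ref{prop:coefficient_regularity}, and Theorem~\ref{thm:wellposedness_energy_stability}(i) gives uniqueness. This yields Eq.~\eqref{eq:pathwise_permutation_equivariance} for each $t$ almost surely. Both processes are continuous, so the identity holds simultaneously for all $t$ almost surely.

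For Eq.~\eqref{eq:distributional_permutation_equivariance}, I invoke the Yamada--Watanabe principle: pathwise uniqueness together with strong existence implies uniqueness in law. The law of the solution is then determined by the law of the initial data and of the driving Wiener process. Since $\mathbf W^{\pi}\overset{\mathrm d}{=}\mathbf W$, the solution driven by any standard Wiener process from $\Pi\mathbf X(0)$ has the same law as $\mathbf X_{\mathcal G^\pi}(t;\Pi\mathbf X(0),\mathbf W^{\pi})=\Pi\mathbf X_{\mathcal G}(t;\mathbf X(0),\mathbf W)$.

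The main obstacle is the justification of Eq.~\eqref{eq:permuted_modulation_matrices}, from which Eq.~\eqref{eq:permuted_HNSD_coefficients} follows. Specifically, I would check that the softmax normalization over $e'\ni v$ and the aggregator $\mathbf X_e$ are relabeling invariant, and that $\mathbf G^{\pi}=\mathbf S\mathbf G\Pi^{\top}$ holds entrywise from Eq.~\eqref{eq:hypergraph_gradient}, using $d_{\pi(v)}^{\pi}=d_v$ and $|\pi(e)|=|e|$. Once these hold, $\mathbf G^{\pi\top}\mathbf A^{\pi}\mathbf G^{\pi}=\Pi\mathbf G^{\top}\mathbf A\mathbf G\Pi^{\top}$ follows from $\mathbf S^{\top}\mathbf S=I$. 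I would take these identities as given, as the paper states them before the proposition, and only sketch the entrywise check.
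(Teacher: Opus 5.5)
Your proposal is correct and follows essentially the same route as the paper: you show that $\Pi\mathbf X_{\mathcal G}$ solves the HyperNSD on $\mathcal G^{\pi}$ with initial value $\Pi\mathbf X(0)$ and driving noise $\mathbf W^{\pi}=\Pi\mathbf W$, using the stated transformation rules for the drift and diffusion coefficients, then conclude by pathwise uniqueness and deduce the distributional identity from $\mathbf W^{\pi}\overset{\mathrm d}{=}\mathbf W$. The insertion of $\Pi^{\top}\Pi$ in the noise term and the appeal to Yamada--Watanabe for the law statement just make explicit steps that the paper leaves implicit.
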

The proof is provided in Appendix~\ref{appendix:Proposition3}.

\paragraph*{Remark}
The above theoretical results provide several implications for the proposed HyperNSD as a neural network architecture. First, the local Lipschitz and linear-growth properties of the neural drift and diffusion coefficients ensure that the continuous-depth stochastic representation dynamics are well defined, rather than being merely a formal SDE construction. Second, the moment and energy estimates show that the stochastic feature evolution remains controlled over finite diffusion horizons, which is important for stable forward propagation and numerical training. Third, the stopped stability estimates with respect to initial-state and hypergraph structural perturbations indicate that small changes in input representations or incidence structures lead to controlled changes in stochastic trajectories before leaving a bounded region. This provides a theoretical explanation for the robustness of HyperNSD under feature and structural distribution shifts. Fourth, the structure-constrained mode analysis shows that the stochastic forcing does not introduce arbitrary perturbations in the node space; instead, stochastic variability is restricted to the nontrivial structural subspace determined by the hypergraph gradient operator. Finally, permutation equivariance guarantees that the HyperNSD outputs are independent of arbitrary node or incidence ordering, which is a basic requirement for hypergraph neural networks. Therefore, these results connect the stochastic diffusion formulation with standard desiderata in neural network learning, including stable representation propagation, structure-aware robustness, geometry-preserving stochasticity, and permutation-consistent prediction.

\subsection{Discretization of HyperNSD}
Although HyperNSD is formulated as a continuous-time SDE, its practical implementation is a finite-depth neural architecture. Specifically, each HyperNSD layer corresponds to one Euler--Maruyama step~\cite{platen1999introduction}, and an $L$-layer HyperNSD performs $L$ successive numerical updates over the diffusion horizon $[0,T]$.

We discretize the diffusion interval $[0,T]$ into $L$ uniform subintervals,
\begin{equation}
0=t_0<t_1<\cdots<t_L=T,\quad t_k=kh,\quad h=\frac{T}{L},
\end{equation}
where $L$ denotes the number of HyperNSD layers and $h$ is the step size associated with each layer. The Wiener increment corresponding to the $k$-th layer is defined as
\begin{equation}
\Delta\mathbf W_k=\mathbf W(t_{k+1})-\mathbf W(t_k)\in\mathbb R^{n\times d}.
\label{eq:wiener_increment}
\end{equation}

Since the feature dimensions of $\mathbf W(t)$ are mutually independent, the columns of $\Delta\mathbf W_k$ are independent Gaussian random vectors satisfying
\begin{equation}
\Delta\mathbf W_{k,j}\sim\mathcal N\left(0,h\mathbf I_n\right),\qquad j=1,\ldots,d,
\end{equation}
where $\mathbf I_n$ denotes the identity matrix.

Let $\mathbf X^{(k)}$ denote the node representation after the $k$-th HyperNSD layer, with \(\mathbf X^{(0)}=\mathbf X(0).\) Applying the Euler--Maruyama method to Eq.~\eqref{eq:node_driven_hsde} gives the layer-wise update
\begin{equation}
\begin{aligned}
\mathbf X^{(k+1)}=\mathbf X^{(k)}
&-h\mathbf G^{\top}\mathbf A_{\theta}\left(\mathbf X^{(k)}\right)\mathbf G\mathbf X^{(k)}\\
&+\mathbf G^{\top}\mathbf B_{\phi}\left(\mathbf X^{(k)}\right)\mathbf G\Delta\mathbf W_k,
\end{aligned}
\label{eq:layerwise_euler_HNSD}
\end{equation}
for $k=0,\ldots,L-1$. Therefore, each HyperNSD layer consists of a deterministic diffusion residual and a stochastic structure-informed residual. For a fixed diffusion horizon $T$, increasing the number of layers $L$ decreases the step size $h$ and produces a finer approximation of the continuous HyperNSD dynamics. 

For the convergence analysis, let \(\mathbf X_k^h=\mathbf X^{(k)}\) and define the continuous-time Euler--Maruyama interpolation
\begin{equation}
\begin{aligned}
\overline{\mathbf X}^{h}(t)={}
&\mathbf X(0)+\int_0^t\mathbf F^{\theta}\left(\mathbf X_{\nu_s^h}^h\right)
\mathrm ds+\int_0^t\boldsymbol{\Sigma}^{\phi}\left(\mathbf X_{\nu_s^h}^h
\right)\mathrm d\mathbf W(s),
\end{aligned}
\label{eq:continuous_em_interpolation}
\end{equation}
where \(\nu^h_s=\max\{k:t_k\leq s\}.\) At each grid point, we have \(
\overline{\mathbf X}^{h}(t_k)=\mathbf X_k^h=\mathbf X^{(k)}.
\)

\begin{Proposition}
\label{prop:euler_maruyama_convergence}
Let $\mathbf X(t)$ be the unique global strong solution of Eq.~\eqref{eq:node_driven_hsde}. For $R>0$, define
\begin{equation}
\eta_R^h=\inf\{t\geq0:\|\mathbf X(t)\|_F\vee\|\overline{\mathbf X}^{h}(t)\|_F\geq R\}.
\end{equation}
Then, for every finite $T>0$, there exists a constant $C^E_{R,T}>0$, independent of $h$, such that
\begin{equation}
\mathbb E\left[\sup_{0\leq t\leq T\wedge\eta_R^h}\|\mathbf X(t)-\overline{\mathbf X}^{h}(t)\|_F^2\right]\leq C^E_{R,T}h.
\label{eq:stopped_em_convergence}
\end{equation}
This implies that
\begin{equation}
\sup_{0\leq t\leq T}\|\mathbf X(t)-\overline{\mathbf X}^{h}(t)\|_F\longrightarrow0
\end{equation}
in probability as $h\rightarrow0$.
\end{Proposition}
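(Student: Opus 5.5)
The plan is the standard localized strong-convergence argument for Euler--Maruyama under locally Lipschitz coefficients, as in Higham--Mao--Stuart, using Proposition~\ref{prop:coefficient_regularity} on the ball of radius $R$. Let $\mathbf E(t)=\mathbf X(t)-\overline{\mathbf X}^h(t)$ and write $\tau=\eta_R^h$. Subtract the integral form of Eq.~\eqref{eq:node_driven_hsde} from Eq.~\eqref{eq:continuous_em_interpolation}. For $t\le T$ this gives
\begin{equation*}
\begin{aligned}
\mathbf E(t\wedge\tau)={}&\int_0^{t\wedge\tau}\big[\mathbf F^{\theta}(\mathbf X(s))-\mathbf F^{\theta}(\mathbf X^h_{\nu^h_s})\big]\mathrm ds\\
&+\int_0^{t\wedge\tau}\big[\boldsymbol{\Sigma}^{\phi}(\mathbf X(s))-\boldsymbol{\Sigma}^{\phi}(\mathbf X^h_{\nu^h_s})\big]\mathrm d\mathbf W(s).
\end{aligned}
\end{equation*}
I bound $\mathbb E\sup_{r\le t}\|\mathbf E(r\wedge\tau)\|_F^2$ in three steps. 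The drift integral is handled by Cauchy--Schwarz, giving a factor $T$. The stochastic integral is handled by the Burkholder--Davis--Gundy (Doob) inequality, giving a factor $4d$ from the $d$ independent columns of $\mathbf W$. Both integrands are then split by adding and subtracting the coefficient evaluated at $\overline{\mathbf X}^h(s)$.

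For $s<\tau$, both $\|\mathbf X(s)\|_F$ and $\|\overline{\mathbf X}^h(s)\|_F$ are at most $R$. The grid value $\mathbf X^h_{\nu^h_s}=\overline{\mathbf X}^h(t_{\nu^h_s})$ is also at most $R$, because $t_{\nu_s^h}\le s<\tau$. So the local Lipschitz bound with constant $L_R$ applies to both differences. This yields
\begin{equation*}
\mathbb E\sup_{r\le t}\|\mathbf E(r\wedge\tau)\|_F^2\le C_{R,T}\int_0^t\mathbb E\sup_{r\le s}\|\mathbf E(r\wedge\tau)\|_F^2\,\mathrm ds+C_{R,T}\int_0^{T}\mathbb E\big[\mathbf 1_{s<\tau}\|\overline{\mathbf X}^h(s)-\mathbf X^h_{\nu^h_s}\|_F^2\big]\mathrm ds.
\end{equation*}

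The key step is the one-step local error. For $s\in[t_k,t_{k+1})$ we have
\[
\overline{\mathbf X}^h(s)-\mathbf X^h_k=(s-t_k)\mathbf F^{\theta}(\mathbf X^h_k)+\boldsymbol{\Sigma}^{\phi}(\mathbf X^h_k)\big(\mathbf W(s)-\mathbf W(t_k)\big).
\]
On $\{s<\tau\}$, $\|\mathbf X^h_k\|_F\le R$, so by the linear-growth bound Eq.~\eqref{eq:neural_coefficients_linear_growth} both coefficients are bounded by $\sqrt{C(1+R^2)}$. The event $\{s<\tau\}$ is contained in $\{t_k<\tau\}$, which is $\mathcal F_{t_k}$-measurable, and the Brownian increment after $t_k$ is independent of $\mathcal F_{t_k}$. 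Conditioning on $\mathcal F_{t_k}$ therefore gives
\[
\mathbb E\big[\mathbf 1_{s<\tau}\|\overline{\mathbf X}^h(s)-\mathbf X^h_k\|_F^2\big]\le 2C(1+R^2)\,(h^2+dh)\le C'_R h
\]
for $h\le 1$. Substituting into the previous inequality and applying Gronwall's inequality gives Eq.~\eqref{eq:stopped_em_convergence} with $C^E_{R,T}=C''_{R,T}e^{C_{R,T}T}$, which does not depend on $h$. The main obstacle I expect is exactly this measurability bookkeeping: I must ensure the indicator is taken at the grid time $t_k$ so the independence argument is valid, and I must use the grid point rather than $s$ when applying the Lipschitz bound.

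To obtain convergence in probability, fix $\varepsilon>0$. Then
\[
\mathbb P\Big(\sup_{t\le T}\|\mathbf E(t)\|_F>\varepsilon\Big)\le \mathbb P(\eta_R^h\le T)+\varepsilon^{-2}C^E_{R,T}h.
\]
It remains to show $\mathbb P(\eta_R^h\le T)$ is small, uniformly in small $h$, once $R$ is large. By Markov's inequality and Theorem~\ref{thm:wellposedness_energy_stability}(i),
\[
\mathbb P\Big(\sup_{t\le T}\|\mathbf X(t)\|_F\ge R\Big)\le C_T(1+\mathbb E\|\mathbf X(0)\|_F^2)R^{-2}.
\]
Without a separate moment bound for the scheme, I use the bound $\|\overline{\mathbf X}^h\|_F\le\|\mathbf X\|_F+\|\mathbf E\|_F$. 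Then $\{\eta_R^h\le T\}$ is contained in the union of $\{\sup_{t\le T}\|\mathbf X\|_F\ge R/2\}$ and $\{\sup_{t\le T\wedge\eta_R^h}\|\mathbf E\|_F\ge R/2\}$. Here I use continuity of paths: if the exit is caused by the scheme while $\|\mathbf X\|_F<R/2$, then $\|\mathbf E(\eta_R^h)\|_F\ge R/2$. The second event has probability at most $4C^E_{R,T}h/R^2$. Choosing $R$ large first and then $h$ small makes the total probability smaller than any prescribed $\delta$, which completes the proof.
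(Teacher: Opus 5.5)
Your proof is correct. The stopped strong-error estimate is essentially the paper's argument: Cauchy--Schwarz for the drift integral, BDG for the stochastic integral, local Lipschitz bounds on the ball of radius $R$, the one-step interpolation error, and Gronwall. One detail is handled better than in the paper. You bound $\mathbf 1_{\{s<\eta_R^h\}}$ by the $\mathcal F_{t_k}$-measurable indicator $\mathbf 1_{\{t_k<\eta_R^h\}}$ before conditioning, whereas the paper applies It\^{o}'s isometry with $\mathbb I(s\le\eta_R)$ inside the expectation.

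Where you genuinely depart from the paper is the passage to convergence in probability. The paper bounds $\mathbb P(\eta_R^h\le T)$ uniformly in $h$ via an a priori estimate $\sup_{0<h\le1}\mathbb E[\sup_{t\le T}\|\overline{\mathbf X}^{h}(t)\|_F^2]\le C_T(1+\mathbb E\|\mathbf X(0)\|_F^2)$ for the scheme. It only sketches this estimate from the global linear-growth condition. You never bound the scheme a priori. On $\{\eta_R^h\le T\}$, either the exact solution reaches $R/2$, or the scheme exits while $\|\mathbf X\|_F<R/2$. The second case forces $\|\mathbf X-\overline{\mathbf X}^{h}\|_F\ge R/2$ at the stopped time, an event already controlled by the stopped estimate at cost $4C^E_{R,T}h/R^2$.

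The paper's route gives an $h$-uniform exit bound and a moment bound for the scheme that are useful in their own right. The price is reliance on global linear growth and an extra lemma. Your route is self-contained and needs only local regularity plus non-explosion of the exact solution. Indeed, $\mathbb P(\sup_{t\le T}\|\mathbf X(t)\|_F\ge R/2)\to0$ as $R\to\infty$ follows from path continuity alone, so even the moment bound of Theorem~\ref{thm:wellposedness_energy_stability}(i) is more than you need. Your exit bound is not uniform in $h$, but this is harmless because $R$ is fixed before letting $h\to0$.
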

The proof is provided in Appendix~\ref{appendix:P3}.

\subsection{Loss Function, Uncertainty Estimation, and Complexity}
\subsubsection{Training Objective}
Let \(\Theta\) denote all trainable parameters of HyperNSD. We adopt the distributional uncertainty
loss~\cite{bilovs2019uncertainty}
\begin{equation}
\mathcal L=\mathbb E_{\hat{\mathbb P}(X(T))}
\left[\mathcal H[\mathbb P(y^{*}|\mathbf X,\mathcal G),\mathbb P(y|
\mathbf X(T))]\right],
\label{eq:expected_pathwise_loss}
\end{equation}
where $\hat{\mathbb P}(\mathbf  X(T))=\hat{\mathbb P}(\mathbf X(T)|\mathbf X(0),\Theta,\mathcal G)$ denotes the conditional distribution of the terminal node representations induced by the stochastic diffusion dynamics. $\mathbb P(y^{*}|\mathbf X,\mathcal G)$ denotes the true class distribution and $\mathbb P(y|\mathbf X(T))$ is the conditional predictive distribution produced by the output decoder $\xi$. Rather than optimizing the cross-entropy associated with a single deterministic representation, the proposed objective minimizes the expected path-conditioned cross-entropy over the distribution of stochastic terminal representations. It therefore accounts for predictive variability induced by the stochastic representation dynamics during training.

\subsubsection{Uncertainty Estimation}
After training, HyperNSD generates $M$ stochastic samples from $\hat{\mathbb P}(\mathbf X(T))$ to estimate both aleatoric and epistemic uncertainties. Each sample corresponds to an independent numerical realization of the proposed hypergraph stochastic diffusion equation and produces a distinct terminal node representation $\mathbf X(T)$. In this framework, the aleatoric component is estimated by $\mathbb E_{\hat{\mathbb P}(X(T))} \left[\mathcal H[\mathbb P(y| \mathbf X(T))]\right]$, while the trajectory-induced epistemic component can be calculated by the variance of the final solution $\mathbf X(T)$. To obtain structure-aware uncertainty estimates, HyperNSD jointly incorporates node attributes and node--hyperedge incidence information into the deterministic drift and stochastic forcing networks. The stochastic forcing term maps node-space Wiener perturbations to the incidence domain, adaptively modulates them according to local node--hyperedge contexts, and propagates them back to the node space, thereby capturing both representation variability and higher-order structural dependencies. This sampling-and-estimation procedure follows the standard SDE-based uncertainty estimation paradigm~\cite{lin2024graph,kong2020sde} and resembles ensemble inference; however, HyperNSD requires only a single trained model, reducing training and storage costs while generating multiple stochastic predictions through repeated forward trajectories. A detailed explanation of how predictive uncertainty is computed and used for the detection tasks is provided in Appendix~\ref{usingUD}.

\subsubsection{Complexity Analysis}
Let $n$ denote the number of nodes, $N=|\mathcal I|$ the number of node--hyperedge incidence pairs, and $d$ the feature dimension.

Under fixed-width drift and diffusion coefficient networks, computing $\mathbf F^{\theta}$ and $\boldsymbol{\Sigma}^{\phi}$ requires $\mathcal O(Nd)$ operations. Sampling the Gaussian increment $\Delta\mathbf W$ requires $\mathcal O(nd)$ operations. The per-layer complexity of one stochastic trajectory is therefore \(\mathcal O(Nd).\) A complete stochastic trajectory contains $L$ Euler--Maruyama layers. Hence, the computational complexity of one forward trajectory is \(
\mathcal O(LNd).\) HyperNSD generates $M$ independent stochastic trajectories. Consequently, the total forward-pass complexity is \(\mathcal O(MLNd),\) excluding the task-specific readout and decoder. The empirical runtime comparison is given in Appendix~\ref{run}.

\section{Experiments}
\label{experiments}
This section evaluates HyperNSD on six hypergraph benchmarks under label leave-out, feature interpolation, and structure manipulation settings. We compare HyperNSD with representative uncertainty estimation and hypergraph learning methods using \textbf{out-of-distribution (OOD) detection} and \textbf{misclassification detection}. We further conduct ablation studies, graph-expansion comparisons, and uncertainty visualizations to examine the effectiveness of the incidence-aware stochastic diffusion mechanism. The source code will be available at https://github.com/CASZhouzhiheng/HyperNSD.

\begin{table}[htbp]
  \centering
  \caption{Dataset Statistics Summary\label{tab:dataset_stats}}
  \setlength{\tabcolsep}{3pt}
  \begin{tabular}{l *{7}{S[table-format=5.0]}}
    \toprule
    {\textbf{Metric}} & {\textbf{Cora}} & {\textbf{Cora-CA}} &{\textbf{Citeseer}} &   {\textbf{DBLP}}   & {\textbf{ModelNet40}} & {\textbf{NTU2012}}  \\
    \midrule
    $|\mathcal{V}|$ & 2708 & 2708  & 3312 & 41302  & 12311 & 2012  \\
    $|\mathcal{E}|$ & 1579 & 1072  & 1079 & 22363  & 24622 & 4024  \\
    $|\mathcal{I}|$ & 4786 & 5428 & 3459 & 99561 & 61555 & 10060\\
    \#features
    & 1433 & 1433 & 3703 & 1425  & 100  & 100   \\
    \#classes & 7 & 7 & 6 & 6 & 40 & 67 \\
    \bottomrule
  \end{tabular}
\end{table}
\subsection{Experimental Setup}
\paragraph{Datasets} We evaluate our model on six benchmark hypergraph datasets including four citation datasets (Cora, Cora-CA, CiteSeer and DBLP)~\cite{sen2008collective,yadati2019hypergcn} and two real--world datasets (ModelNet40, NTU2012)~\cite{wu20153d,chen2003visual}. For each dataset, we randomly divide the samples into training, validation, and test sets using a 50\%/25\%/25\% split. All experiments are independently repeated ten times with different random seeds, and we report the average performance over the ten runs. Detailed statistics of the evaluated datasets are summarized in Table~\ref{tab:dataset_stats}. We provide detailed dataset information in Appendix~\ref{dataset}.

\paragraph{OOD Shift Construction}
We synthetically generate OOD samples using three types of distribution shifts.
\begin{itemize}
    \item \textbf{Label leave-out}. We construct semantic shifts by retaining a subset of classes as in-distribution data and treating the remaining classes as OOD samples. Specifically, samples with class labels greater than 3 are treated as OOD for Cora, Cora-CA, and DBLP; samples with class labels greater than 2 are treated as OOD for Citeseer; samples with class labels greater than 20 are treated as OOD for ModelNet40; and samples with class labels greater than 33 are treated as OOD for NTU2012. All remaining samples are used as ID data.
    \item \textbf{Feature interpolation}. We construct attribute-level OOD samples by randomly interpolating the node features of selected target samples with features drawn from other nodes. The original hypergraph data are retained as ID data, whereas the interpolated hypergraph data are regarded as OOD data. The hypergraph incidence structure remains unchanged.
    \item \textbf{Structure manipulation}. We generate structural OOD samples through degree-preserving incidence rewiring. Specifically, two existing incidences satisfying
\[
H_{u,e_1}=1,\quad H_{v,e_2}=1,\mbox{ and }
H_{u,e_2}=0,\quad H_{v,e_1}=0
\]
are randomly selected. We then replace \((u,e_1)\) and \((v,e_2)\) with \((u,e_2)\) and \((v,e_1)\), respectively. This operation preserves the degrees of nodes \(u\) and \(v\), as well as the cardinalities of hyperedges \(e_1\) and \(e_2\), while altering the local node--hyperedge associations. The rewiring ratio $\gamma$ is set to 0.5 in all experiments. The original hypergraph data are retained as ID data, whereas the incidence rewiring hypergraph data are regarded as OOD data. The hypergraph node features remain unchanged.
\end{itemize}

\begin{table*}[ht]
\belowrulesep=0pt
\aboverulesep=0pt
\renewcommand{\arraystretch}{1.5}
\caption{OOD detection performance measured by AUROC on all datasets with three OOD types (Label leave-out, Feature interpolation and Structure manipulation). Other results for AUPR, FPR95, in-distribution accuracy and standard deviations of AUROC are deferred to Appendix~\ref{result}.}
\label{table:ood_result}
\centering
\begin{adjustbox}{width=\textwidth}
\begin{tabular}{|c|ccc|ccc|ccc|ccc|ccc|ccc|}
\toprule
\multirow{3}{*}{Model}
& \multicolumn{3}{c|}{Cora} & \multicolumn{3}{c|}{Cora-CA} &  \multicolumn{3}{c|}{Citeseer} & \multicolumn{3}{c|}{DBLP}&\multicolumn{3}{c|}{ModelNet40}&\multicolumn{3}{c|}{NTU2012}\\ 
& L & F & S & L & F & S & L & F & S & L & F & S & L & F & S & L & F & S\\
\midrule  

MSP & 78.95 & 67.89 & 60.25 & 71.57 & 75.82 & 58.42 & 71.82 & 70.86 & 56.12 & 94.09 & 89.44 & 61.05 & 86.85 & 56.12 & 49.66 & 80.17 & 54.75 & 50.37\\          

ODIN & 44.15 & 52.78 & 39.75 & 48.51 & 51.58 & 41.58 & 51.19 & 45.68 & 43.17 & 61.16 & 77.93 & 53.13 & 72.71 & 50.21 & 56.87 & 51.49 & 50.84 & 53.11\\

Mahalanobis & 48.41 & 65.95 & 65.27 & 48.29 & 75.06 & 58.81 & 58.06 & 68.93 & 57.33 & 87.81 & 66.87 & 54.25 & 60.62 & 64.25 & 60.21 & 45.49 & 61.87 & 57.07\\
\midrule

GNNSafe & 73.66 & 71.03 & 63.03 & 65.23 & 80.03 & 61.15 & 67.55 & 69.12 & \cellcolor[RGB]{255,223,175}\underline{74.82} & \cellcolor[RGB]{255,190,110}{\textbf{96.08}} & 92.29 & 62.57 & 89.67 & 81.31 & 63.58 & 83.80 & 79.75 & 65.90 \\

GPN & 71.27 & 70.72 & 64.98 & 68.82 & 78.53 & \cellcolor[RGB]{255,223,175} \underline{67.39} & 65.86 & 68.75 & 73.59 & 85.24 & 95.26 & \cellcolor[RGB]{255,223,175}\underline{68.36} & 93.63 & 59.52 & 55.34 & 84.22 & 53.50 & 51.59\\

GNSD & \cellcolor[RGB]{255,190,110} \textbf{87.04} & 76.86 & 60.52 & 85.57 & 83.47 & 63.59 & \cellcolor[RGB]{255,223,175}\underline{76.72} & 73.62 & 65.50 & 93.23 & 95.71 & 63.32 & 73.67 & 83.68 & 51.80 & 62.42 & 90.86 & 53.42\\

LGNSDE & \cellcolor[RGB]{255,223,175} \underline{86.84} & 71.84 & 64.47 & 83.58 & 80.49 & 59.60 & 68.32 & \cellcolor[RGB]{255,223,175}\underline{78.65} & 64.03 & 90.58 &\cellcolor[RGB]{255,223,175}{\underline{97.73}} & 63.81 & 75.96 & 85.14 & 62.68 & 65.91 & 85.82 & 52.43\\
\midrule

HGNN & 77.46 & 71.49 & 64.59 & 83.77 & 81.85 & 53.72 & 71.09 & 73.17 & 60.46 & 74.42 & 85.31 & 58.23 & 84.76 & 84.90 & 57.82 & 79.73 & 91.85 & 59.41\\

HyperGCN & 74.81 & 69.27 & 65.65 & 71.49 & 85.11 & 62.91 & 72.97 & 63.91 & 62.17 & 76.20 & 88.25 & 54.35 & 82.87 & 81.03 & 46.77 & 65.09 & 90.13 & 70.81 \\

HGCN--Ens & 76.20 & 69.82 & 66.92 & 72.69 &84.36 & 61.92 & 73.32 & 65.40 & 63.58 & 77.29 & 88.69 & 55.58 & 84.13 & 79.52 & 45.65 & 66.52 & 89.22 & 70.52 \\

HNDiffN & 83.79 & 74.38 & 59.32 & 83.24 & 79.56 & 57.50 & 70.57 & 66.61 & 58.63 & 73.74 & 96.69 & 53.88 & 86.52 & 85.78 & 64.98 &\cellcolor[RGB]{255,223,175}\underline{88.43} & 88.28 & 69.29\\  

HND & 84.34 & 71.73 & \cellcolor[RGB]{255,223,175} \underline{69.60} & 83.79 & 85.99 & 55.73 & 74.06 & 73.52 & 62.99 & 91.58 & 93.63 & 54.27 & \cellcolor[RGB]{255,223,175}\underline{95.15} & 83.32 & 53.59 &78.79 & 89.84 & 70.78 \\

HyperGOOD &85.13 &\cellcolor[RGB]{255,223,175}\underline{77.44} & 67.71 & \cellcolor[RGB]{255,223,175}\underline{86.56} & \cellcolor[RGB]{255,223,175}\underline{86.61} & 64.30 & 74.62 & 71.43 & 68.41 & 87.32 & 94.47 & 66.79 & 89.25 & \cellcolor[RGB]{255,223,175}\underline{86.68} & \cellcolor[RGB]{255,223,175}\underline{76.79} & 86.87 &\cellcolor[RGB]{255,223,175}\underline{93.53} &\cellcolor[RGB]{255,223,175}\underline{73.20}\\
\midrule

HyperNSD &85.68 & \cellcolor[RGB]{255,190,110}\textbf{80.61} & \cellcolor[RGB]{255,190,110}\textbf{73.34} & \cellcolor[RGB]{255,190,110}\textbf{90.11} & \cellcolor[RGB]{255,190,110}\textbf{90.79} & \cellcolor[RGB]{255,190,110}\textbf{70.37} & \cellcolor[RGB]{255,190,110}\textbf{80.45} & \cellcolor[RGB]{255,190,110}\textbf{81.52} & \cellcolor[RGB]{255,190,110}\textbf{77.71} & \cellcolor[RGB]{255,223,175}\underline{94.73} & \cellcolor[RGB]{255,190,110}\textbf{99.26} & \cellcolor[RGB]{255,190,110}\textbf{71.97} & \cellcolor[RGB]{255,190,110}\textbf{98.64} & \cellcolor[RGB]{255,190,110}\textbf{89.96} & \cellcolor[RGB]{255,190,110}\textbf{79.89} & \cellcolor[RGB]{255,190,110}\textbf{92.33} & \cellcolor[RGB]{255,190,110}\textbf{95.10} &  \cellcolor[RGB]{255,190,110}\textbf{75.95}\\
\bottomrule
\end{tabular}
\end{adjustbox}
\end{table*}

\paragraph{Implementation Details} For HyperNSD, the diffusion horizon is fixed to \(T=1\), while the remaining key hyperparameters are selected by grid search according to validation performance. Specifically, the hidden layer dimension is chosen from \(\{16,32,64,128,256\}\), the learning rate from \(\{10^{-4},10^{-3},10^{-2}\}\), the number of training sample trajectories $M$ from \(\{1,3,5,7\}\), and the Euler--Maruyama step size $h$ from \(\{0.01,0.05,0.1,0.2\}\). All models are trained for 200 epochs. To ensure a fair comparison, HyperNSD and all competing methods use the same hidden layer dimension under each experimental setting. The baseline methods are implemented and fully optimized following their recommended configurations, and training is continued until stable convergence is achieved.
\paragraph{Baselines} We compare HyperNSD with three categories of representative methods. The first category consists of conventional OOD detection approaches, including MSP~\cite{hendrycks2016baseline}, ODIN~\cite{liang2017enhancing}, and Mahalanobis~\cite{lee2018simple}. Since these methods were originally developed for Euclidean or graph data, we replace their original feature encoders with a common hypergraph convolutional encoder to ensure compatibility with hypergraph-structured inputs. The second category includes graph uncertainty estimation methods, namely GNNSafe~\cite{wu2023energy}, GPN~\cite{stadler2021graph}, GNSD~\cite{lin2024graph}, and LGNSDE~\cite{bergna2025uncertainty}. The third category comprises hypergraph neural network methods, including HGNN~\cite{feng2019hypergraph}, HyperGCN~\cite{yadati2019hypergcn}, ensemble--based HyperGCN (HGCN-Ens)~\cite{lakshminarayanan2017simple}, HNDiffN~\cite{lu2025hypergraph}, HND~\cite{zhou2026hypergraph}, and HyperGOOD~\cite{cai2026hypergood}.

For the GNN-based baselines, each hypergraph dataset is first converted into a pairwise graph using clique expansion, after which the corresponding graph models are trained and evaluated. For deterministic GNN and HGNN baselines, aleatoric uncertainty is used for all detection tasks. For uncertainty-aware methods, epistemic uncertainty is adopted for OOD detection, whereas aleatoric uncertainty is used for misclassification detection, following prior studies~\cite{zhao2020uncertainty,stadler2021graph}. Details of all baselines are provided in Appendix~\ref{baseline}. 
\paragraph{Evaluation Metrics} An effective uncertainty-aware HGNN should accurately identify anomalous samples while maintaining strong predictive performance on in-distribution data. Accordingly, we adopt AUROC, AUPR, FPR95, and in-distribution classification accuracy (ID ACC) as the evaluation metrics. Higher AUROC, AUPR, and ID ACC indicate better performance, whereas lower FPR95 is preferred. In each comparison, the best result is highlighted in \colorbox[RGB]{255,190,110}{\textbf{bold}} and the second-best result is \colorbox[RGB]{255,223,175}{\underline{underlined}}. Further details of these metrics are provided in Appendix~\ref{metrics}.

\begin{table*}[!htbp]
\belowrulesep=0pt
\aboverulesep=0pt
\renewcommand{\arraystretch}{1.5}
\caption{Misclassification detection performance (\%) comparison on six datasets. Standard deviations of misclassification detection performance are deferred to Appendix~\ref{result}.}
\label{table:mis}
\centering
\begin{adjustbox}{width=\textwidth}
\begin{tabular}{|c|c|cccc|cccccc|c|@{}}
\toprule

Dataset & \diagbox{Metric}{Model} & GNNSafe & GPN & GNSD & LGNSDE & HGNN  & HyperGCN & HGCN--Ens& HNDiffN &HND & HyperGOOD & HyperNSD\\
\midrule

\multirow[c]{4}{*}{Cora} & AUROC & 83.16 & 79.46 & 80.00 & 83.66  & 84.50 & 82.36 & 83.17 & 81.70 & \cellcolor[RGB]{255,223,175}\underline{85.13} & 82.79 &\cellcolor[RGB]{255,190,110}\textbf{86.28}\\

& AUPR succ & 90.77 & 85.94 & 91.53 & 93.90  & 95.08 & 94.59 & 95.45 & 94.23 & \cellcolor[RGB]{255,223,175}\underline{95.77}  & 91.31 & \cellcolor[RGB]{255,190,110}\textbf{97.25}\\

& AUPR err & 69.74 & 71.89 & 57.51 & 64.13  & 59.20 & 51.20 & 53.46 & 53.81 & 57.75 & \cellcolor[RGB]{255,223,175}\underline{76.81} & \cellcolor[RGB]{255,190,110}\textbf{80.40} \\

&FPR95 & 68.95 & 61.63 & 76.02 & \cellcolor[RGB]{255,223,175}\underline{60.00}  & 66.21 & 73.29 & 71.92 & 72.67 & 65.28 & 69.73 & \cellcolor[RGB]{255,190,110}\textbf{55.88}\\
\midrule

\multirow[c]{4}{*}{Cora-CA} & AUROC & 79.79 & 76.52 & 78.73 & 81.12 & 82.74 & 81.21 & 81.85 & \cellcolor[RGB]{255,223,175}\underline{83.27} & 81.89 & 83.16 & \cellcolor[RGB]{255,190,110}\textbf{85.24}\\

 & AUPR succ & 85.31 & 82.85 & 90.46 & \cellcolor[RGB]{255,223,175}\underline{93.89}  & 93.80 & 93.36 & 91.98 & 93.14 & 93.34& 92.71 & \cellcolor[RGB]{255,190,110}\textbf{95.20}\\

& AUPR err & \cellcolor[RGB]{255,223,175}\underline{71.68} & 67.75 & 53.92 & 53.12 & 58.75 & 59.85 & 58.68 & 64.71 & 55.52 & 68.04 & \cellcolor[RGB]{255,190,110}\textbf{74.04}\\

&FPR95 & 78.98 & 79.79 & 86.87 & 76.13  & 66.67 & 69.88 & 73.07 & 68.15 & \cellcolor[RGB]{255,223,175}\underline{62.12} & 79.50 & \cellcolor[RGB]{255,190,110}\textbf{59.34}\\
\midrule

\multirow[c]{4}{*}{Citeseer} & AUROC & 75.58 &\cellcolor[RGB]{255,223,175}\underline{83.39} & 77.12 & 75.13  & 78.86 & 76.97 & 77.29 &  78.50 & 81.58  & 83.23 &\cellcolor[RGB]{255,190,110}\textbf{85.71}\\

& AUPR succ & 72.38 & 82.81 & \cellcolor[RGB]{255,223,175}\underline{86.87} & 84.08  & 80.23 & 82.00 & 83.35 & 76.86 & 85.11 & 79.01 & \cellcolor[RGB]{255,190,110}\textbf{87.81}\\

& AUPR err & 77.17 & 68.48 & 62.17 & 56.62 & 56.44  & 58.35 & 72.66 & 61.02 & \cellcolor[RGB]{255,223,175}\underline{86.70}  & 74.13 &\cellcolor[RGB]{255,190,110}\textbf{88.06}\\

&FPR95 & 74.83 & \cellcolor[RGB]{255,223,175}\underline{65.93} & 76.51 & 86.11 &  73.60 & 73.95 & 68.69 & 77.74 & 67.40 &  70.80 &\cellcolor[RGB]{255,190,110}\textbf{62.78}\\
\midrule

\multirow[c]{4}{*}{DBLP} & AUROC & 86.50 & 87.93 & \cellcolor[RGB]{255,223,175}\underline{87.12} & 87.57  & 86.02 & 86.66 & 87.07 & 87.20 & 87.09 & 80.71 &\cellcolor[RGB]{255,190,110}\textbf{89.92}\\

& AUPR succ & 92.20 & 94.08 & \cellcolor[RGB]{255,223,175}\underline{98.44} & 98.34  & 98.35 & 97.59 & 97.98 & \cellcolor[RGB]{255,190,110}\textbf{98.92} & 96.34 & 97.50 & 98.11\\

& AUPR err & \cellcolor[RGB]{255,223,175}\underline{74.53} & 73.71 & 63.26 & 69.91  & 62.28 & 71.34 & 70.84 & 61.40 & 69.80 & 66.78 &\cellcolor[RGB]{255,190,110}\textbf{76.13}\\

&FPR95 & 71.06 & 60.23 & \cellcolor[RGB]{255,223,175}\underline{58.40} & 61.32  & 61.13 & 67.23 & 59.92 & 60.00 & 69.03 & 68.78 &\cellcolor[RGB]{255,190,110}\textbf{53.65}\\
\midrule

\multirow[c]{4}{*}{ModelNet40} & AUROC & 71.76 & 77.76 & 96.51 & 97.30  & 98.24 & 98.10 & 98.52 & 98.15 & 98.48 & \cellcolor[RGB]{255,223,175}\underline{99.16} & \cellcolor[RGB]{255,190,110}\textbf{99.53}\\

& AUPR succ & 81.88 & 84.75 & 98.01 & 98.44  & 98.06 & 98.19 & 98.37 & 99.06 & 98.45 & \cellcolor[RGB]{255,190,110}\textbf{99.60} & \cellcolor[RGB]{255,223,175}\underline{99.52}\\

& AUPR err & 58.15 & 75.15 & 92.88 & 94.38  & 98.60 & 97.80 & 98.58 & 98.07 & \cellcolor[RGB]{255,223,175}\underline{99.51} & 97.08 &\cellcolor[RGB]{255,190,110}\textbf{99.59}\\

&FPR95 &43.65 & 35.32 & 25.55 & 10.11 & 2.56 & 2.37 & 2.33 & 3.73 & \cellcolor[RGB]{255,190,110}\textbf{1.42} & 5.82 & \cellcolor[RGB]{255,223,175}\underline{2.13}\\
\midrule

\multirow[c]{4}{*}{NTU2012} & AUROC & 70.97 & 80.32 & 97.37 & 98.68 & 99.26 & 99.30 & 99.32 & 99.30 & 99.27 & \cellcolor[RGB]{255,190,110}\textbf{99.51} & \cellcolor[RGB]{255,223,175}\underline{99.38}\\

& AUPR succ & 81.21 & 85.43 & 98.49 & 99.30 & 98.96 & 99.38 & 99.22 & 99.49 & 99.52 & \cellcolor[RGB]{255,190,110}\textbf{99.75} & \cellcolor[RGB]{255,190,110}\underline{99.55}\\

& AUPR err & 57.63 & 70.54 & 93.42 & 96.15 & 98.57 & 98.99 & 99.06 & 99.00 & 98.82 & \cellcolor[RGB]{255,223,175}\underline{99.17} &\cellcolor[RGB]{255,190,110}\textbf{99.84}\\

&FPR95 & 43.14 & 28.12 & 10.52 & 16.52 & 3.19 & 2.96 & 2.53 & \cellcolor[RGB]{255,223,175}\underline{2.09} & \cellcolor[RGB]{255,190,110}\textbf{0.61} & 8.33 & 2.28\\
\bottomrule
\end{tabular}
\end{adjustbox}
\end{table*}

\subsection{OOD Detection}
We evaluate our model in three OOD scenarios (label leave-out, feature interpolation, and structure manipulation denote by L, F, and S, respectively). The OOD detection results are provided in Table~\ref{table:ood_result} and more experimental results are provided in Appendix~\ref{result} (Tables~\ref{table:ood_cora}--\ref{table:ood_std}). From Table~\ref{table:ood_result}, we draw the following observations. First, HyperNSD achieves the best AUROC in 16 out of 18 evaluation settings, including all feature interpolation and structure manipulation experiments. Compared with the strongest competing method in each setting, HyperNSD obtains its largest improvement of 4.18\% on Cora-CA under feature interpolation. Moreover, it improves the average AUROC by 2.77\% and 3.18\% under feature interpolation and structure manipulation, respectively. Second, the consistent improvements across citation and real-world hypergraphs demonstrate that HyperNSD can provide more informative uncertainty scores for identifying OOD samples arising from semantic, attribute, and structural distribution shifts. Third, the particularly strong performance under structure manipulation indicates that directly modeling stochastic dynamics over the original node--hyperedge incidence domain is more effective than applying graph uncertainty methods to pairwise approximations of hypergraphs.

\subsection{Misclassification Detection}
Besides OOD detection, we also evaluate the ability of different methods to identify misclassified samples. Misclassification detection examines whether the estimated uncertainty can effectively distinguish incorrect predictions from correct ones at test time~\cite{hendrycks2016baseline}. As shown in Table~\ref{table:mis}, HyperNSD achieves the best performance in 18 out of 24 evaluation settings across the six datasets. On average, compared with the strongest competing baseline for each metric, HyperNSD improves AUROC by 2.10\%, AUPR succ by 1.49\%, and AUPR err by 6.01\%, while reducing FPR95 by 4.97\%. These results demonstrate that the path-conditioned predictive entropy estimated by HyperNSD is highly informative for identifying incorrect predictions while preserving strong classification performance.

\subsection{Comparison with Graph Expansions}
\begin{figure*}[ht]
    \centering
    \includegraphics[scale=0.9]{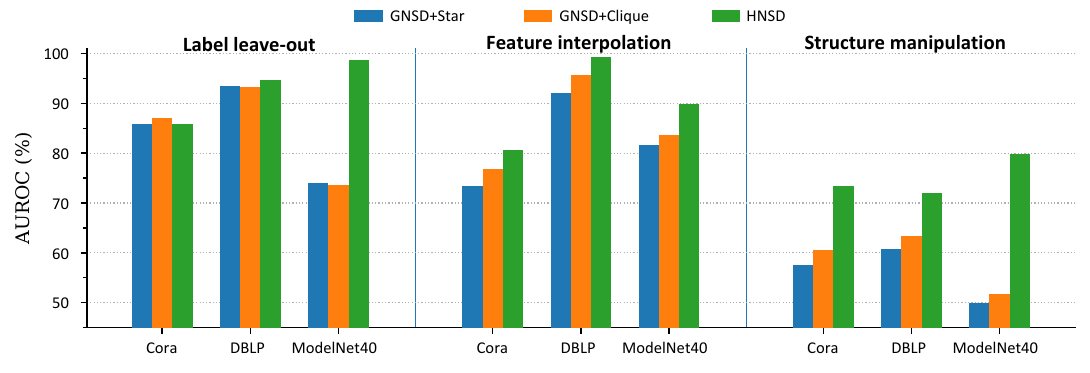}
   \caption{AUROC comparison between GNSD implemented with star and clique expansions and the proposed HyperNSD under three OOD types (label leave-out, feature interpolation, and structure manipulation) on Cora, DBLP and ModelNet40.}
\label{graphandhypergraph}
\end{figure*}
To examine the necessity of modeling stochastic diffusion directly over the original hypergraph incidence domain, we compare HyperNSD with GNSD implemented using star and clique expansions. As shown in Fig.~\ref{graphandhypergraph}, HyperNSD consistently achieves superior or competitive performance across the three OOD settings. Its advantage is especially pronounced under feature interpolation and structure manipulation, while remaining competitive under label leave-out. These results indicate that preserving the original node--hyperedge incidence geometry enables more effective uncertainty propagation and improves the detection of attribute- and structure-induced distribution shifts compared with pairwise graph expansions.

\subsection{Model Variants}
\begin{figure*}[ht]
    \centering
    \includegraphics[scale=0.47]{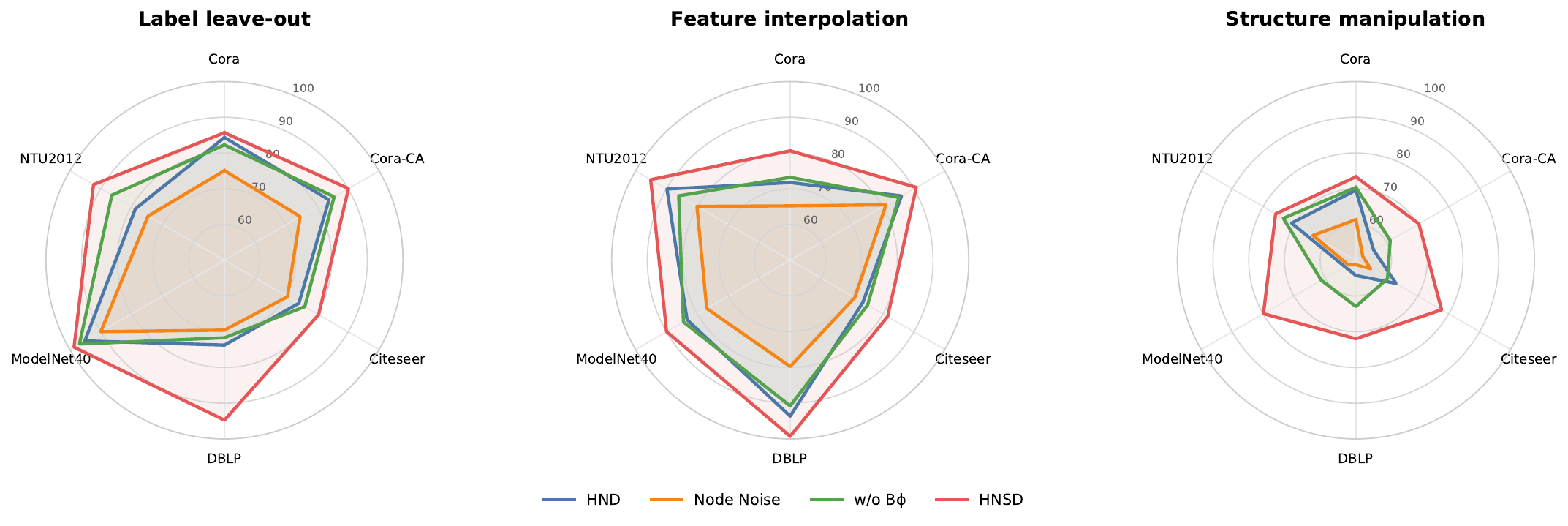}
   \caption{AUROC comparison of HyperNSD and its variants under three OOD types (label leave-out, feature interpolation, and structure manipulation).}
\label{modelvariant}
\end{figure*}
To investigate the effect of different noise modeling strategies, we compare HyperNSD with deterministic HND, a variant using node-level noise, and a variant without the learnable \(B_{\phi}\). As shown in Fig.~\ref{modelvariant}, the complete HyperNSD consistently outperforms its variants across the three OOD settings. The deterministic HND and the variant with simple node-level noise generally achieve lower performance, indicating that stochasticity alone is insufficient for reliable OOD detection. Removing \(B_{\phi}\) also leads to performance degradation, particularly under feature and structure shifts. These results confirm that incidence-aware stochastic forcing and adaptive noise modulation are both important to the effectiveness of HyperNSD.

\subsection{Perturbation Intensity Analysis}
\begin{figure}[ht]
    \centering
    \includegraphics[scale=0.45]{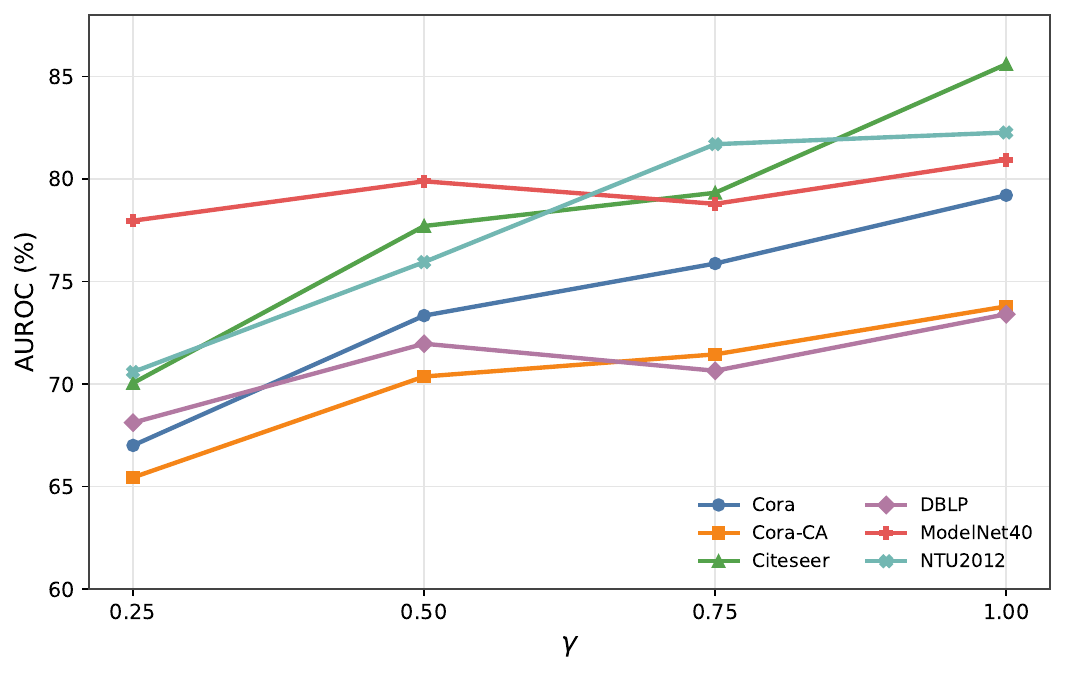}
   \caption{Effect of the structural perturbation intensity \(\gamma\) on the OOD detection performance of HyperNSD.}
\label{perturbation}
\end{figure}
We further investigate the influence of the rewiring ratio \(\gamma\) on the OOD detection performance of HyperNSD, with the results reported in Fig.~\ref{perturbation}. From Fig.~\ref{perturbation}, we draw the following observations. First, the AUROC generally increases as the rewiring ratio grows across most datasets, indicating that stronger structural shifts produce more distinguishable stochastic representations between ID and OOD samples. Second, HyperNSD still maintains satisfactory detection performance under relatively mild structural perturbations, demonstrating its ability to capture subtle structural distribution shifts. Third, although slight fluctuations are observed on several datasets, the overall performance remains stable or improves with increasing rewiring ratio. These results demonstrate that the uncertainty estimated by HyperNSD responds effectively and consistently to different levels of hypergraph structural perturbations.

\subsection{Visual Analysis}
\begin{figure*}[ht]
    \centering
    \includegraphics[scale=0.47]{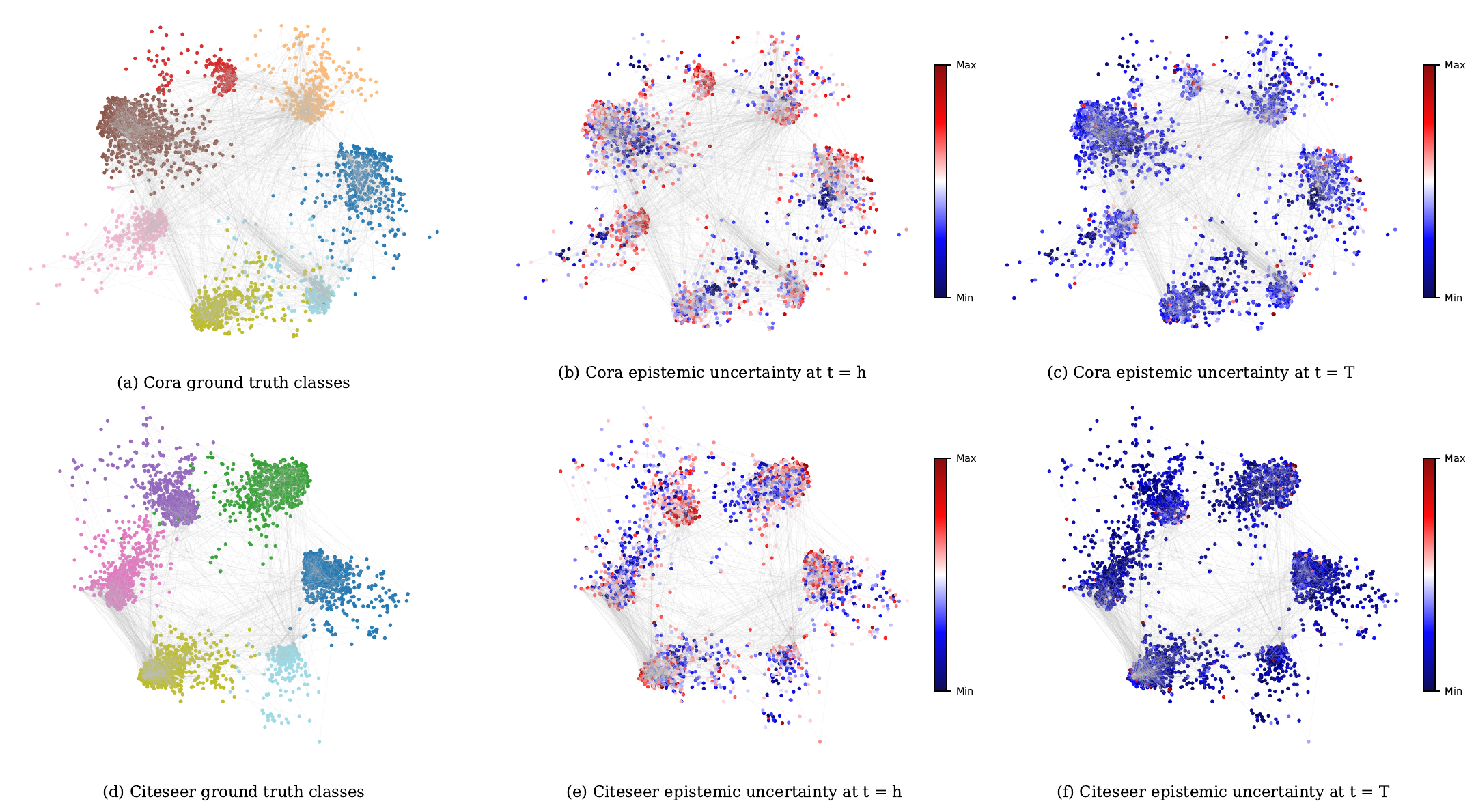}
   \caption{Visualization of ground-truth classes and node-level epistemic uncertainty at $t=h$ and terminal diffusion states on Cora and Citeseer.}
\label{epistemic}
\end{figure*}
To qualitatively examine how uncertainty evolves during stochastic diffusion, we visualize the node-level epistemic uncertainty at the $t=h$ and terminal states on Cora and Citeseer. As shown in Fig.~\ref{epistemic}, the $t=h$ representations exhibit broadly distributed high-uncertainty regions, whereas the uncertainty is substantially reduced after diffusion. Most nodes within coherent class clusters become more certain, while relatively high uncertainty remains around class boundaries and structurally ambiguous regions. These observations indicate that HyperNSD progressively refines node representations while preserving uncertainty for difficult or potentially unreliable samples.

\section{Conclusion}
In this paper, we presented Hypergraph Neural Stochastic Diffusion (HyperNSD), an SDE-based framework for uncertainty estimation on hypergraphs. HyperNSD models node representations as stochastic processes evolving over the node--hyperedge incidence domain, where a learnable drift term governs deterministic higher-order feature propagation and an incidence-aware stochastic forcing term captures structure-dependent representation variability. We established theoretical guarantees for the proposed dynamics, including global well-posedness, finite-horizon moment control, stability under initial-state and structural perturbations, structure-constrained stochastic modes, permutation equivariance, and convergence of the Euler--Maruyama discretization. Extensive experiments on six hypergraph benchmarks demonstrated that HyperNSD provides informative uncertainty estimates for OOD and misclassification detection while maintaining competitive in-distribution predictive performance. Additional comparisons and analyses further verified the advantages of directly modeling stochastic diffusion over the original incidence structure and the effectiveness of adaptively modulated incidence-level stochastic perturbations. Overall, HyperNSD provides a principled stochastic-dynamical foundation for uncertainty-aware higher-order representation learning. Future work will investigate more expressive stochastic processes and extend the framework to dynamic, heterogeneous, and large-scale hypergraphs.

\bibliographystyle{IEEEtran}
\bibliography{IEEEabrv,refer}
\begin{IEEEbiography}[{\includegraphics[width=1in,height=1.25in,clip,keepaspectratio]{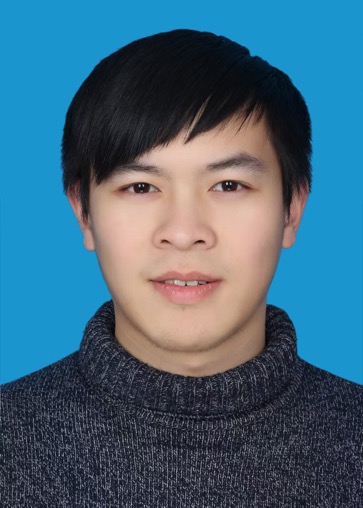}}]{Zhiheng Zhou }
received the Ph.D. degree from the Academy of Mathematics and Systems Science, Chinese Academy of Sciences in 2023. He is a
postdoctoral researcher at School of Mathematics and Statistics, Shandong University. His research interests include graph representation learning and bioinformatics. He
has published over 20 papers in academic journals and conferences, including KDD, MedIA, KBS, JBHI, AIIM, CIKM and ECML.
\end{IEEEbiography}

\begin{IEEEbiography}[{\includegraphics[width=1in,height=1.25in,clip,keepaspectratio]{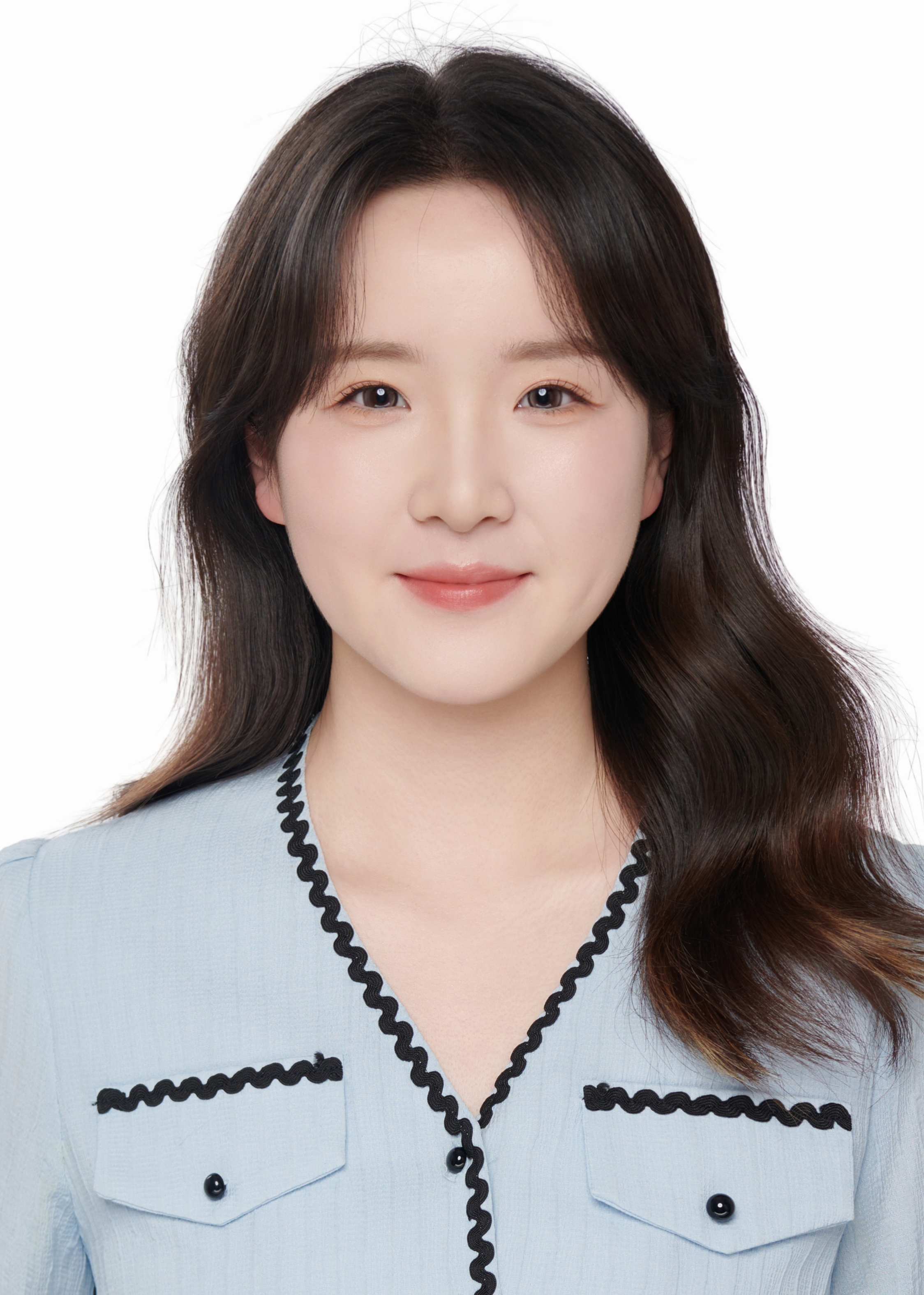}}]{Mengyao Zhou }received the B.S. degree in mathematics from Dalian University of Technology, Dalian, China, in 2022. She is currently pursuing the Ph.D. degree with the Academy of Mathematics and Systems Science, Chinese Academy of Sciences, Beijing, China. Her research interests include graph representation learning and bioinformatics.
\end{IEEEbiography}

\begin{IEEEbiography}[{\includegraphics[width=1in,height=1.25in,clip,keepaspectratio]{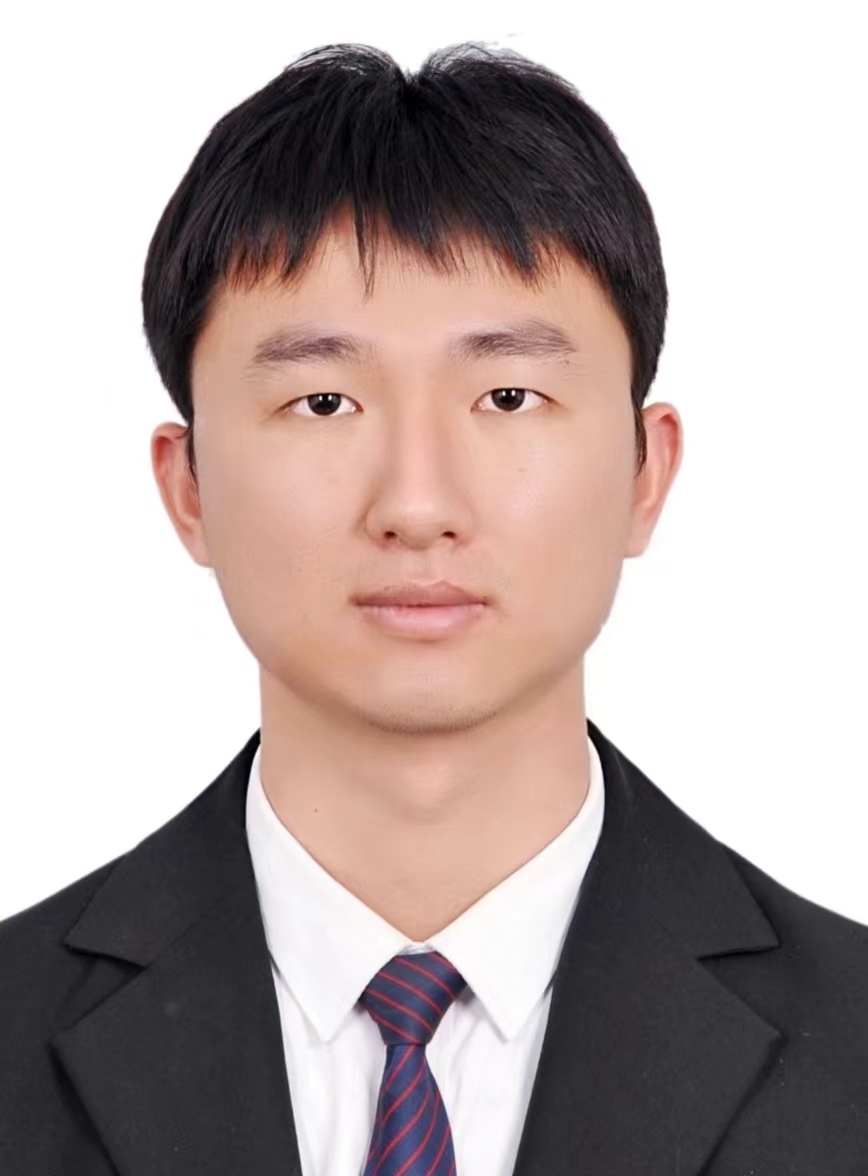}}]{Dengyi Zhao} is currently a Ph.D. student with the School of Mathematics and Statistics, Shandong University, Weihai, China. His current research interests include complex networks, topological data analysis, graph neural networks, and their applications in brain imaging.
\end{IEEEbiography}

\begin{IEEEbiography}[{\includegraphics[width=1in,height=1.25in,clip,keepaspectratio]{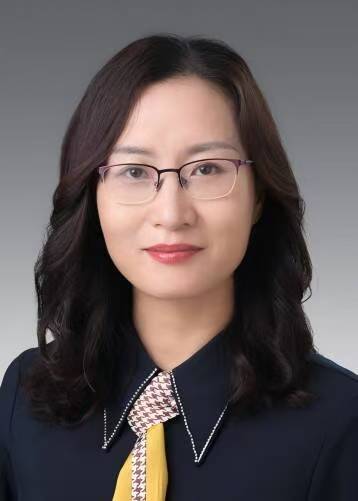}}]{Xingqin Qi }
received the Ph.D. degree from Shandong University, Jinan, China, in 2006. She is currently a Professor with the School of Mathematics and Statistics, Shandong University, Weihai, China. Her current research interests include complex networks and graph data mining.
\end{IEEEbiography}

\begin{IEEEbiography}[{\includegraphics[width=1in,height=1.25in,clip,keepaspectratio]{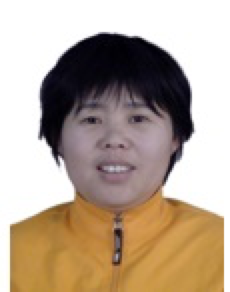}}]{Guiying Yan}
received the B.S., M.S., and Ph.D. degrees in mathematics from Shandong University. She is currently a Professor with the Academy of Mathematics and Systems Science, Chinese Academy of Sciences, Beijing, China. Her research interests include graph theory and its applications.
\end{IEEEbiography}

\clearpage
\appendices
\section{Proof of Proposition~\ref{P1}}
\label{appendix:Proposition1}
We first recall the definition of a $\mathcal Q$-Wiener process in a separable Hilbert space~\cite{lord2014introduction}.

\begin{Definition}[$\mathcal Q$-Wiener Process]
\label{def:q_wiener}
Let $\mathcal U$ be a separable Hilbert space, and let $(\Omega,\mathcal F,\{\mathcal F_t\}_{t\ge0},\mathbb P)$ be a filtered probability space. Let \(\mathcal Q:\mathcal U\rightarrow\mathcal U\) be a self-adjoint, nonnegative, and trace-class operator. An $\mathcal U$-valued stochastic process $\mathbf Z(t)$ is called a $\mathcal Q$-Wiener process if the following conditions hold:
\begin{enumerate}
\item $\mathbf Z(0)=0$ almost surely;
\item $\mathbf Z(t)$ is adapted to ${\mathcal F_t}$ and has continuous sample paths;
\item $\mathbf Z(t)$ has independent increments;
\item for every $0\leq s<t$, \(\mathbf Z(t)-\mathbf Z(s)\sim \mathcal N\left(0,(t-s)\mathcal Q\right).\)
\end{enumerate}
\end{Definition}
Based on Definition~\ref{def:q_wiener}, we now verify that the incidence-space process $\mathbf Z(t)=\mathbf G\mathbf W(t)$ constructed in the proposed HSDE in Eq.~\eqref{eq:general_hsde} is a $\mathcal Q$-Wiener process.
\begin{proof}[Proof of Proposition~\ref{P1}]
Since $G$ is deterministic and $\mathbf W(t)$ is ${\mathcal F_t}$-adapted, the process $\mathbf Z(t)=\mathbf G\mathbf W(t)$ is also ${\mathcal F_t}$-adapted. For each feature dimension $\ell=1,\ldots,d$, define \(\mathbf Z_\ell(t)=\mathbf G\mathbf W_\ell(t)\in\mathbb R^N.\) Because $\mathbf W_\ell(t)$ is an $n$-dimensional standard Wiener process and $\mathbf G\in\mathbb R^{N\times n}$ is deterministic, $\mathbf Z_\ell(t)$ is a continuous Gaussian process with $\mathbf Z_\ell(0)=0$. For any $0\leq s<t$, its increment satisfies
\begin{equation}
\mathbf Z_\ell(t)-\mathbf Z_\ell(s)=\mathbf G\left(\mathbf W_\ell(t)-\mathbf W_\ell(s)\right).
\end{equation}
Since \(\mathbf W_\ell(t)-\mathbf W_\ell(s)\sim\mathcal N\left(0,(t-s)\mathbf I_n\right),\) it follows that
\begin{equation}
\mathbf Z_\ell(t)-\mathbf Z_\ell(s)\sim\mathcal N\left(0,(t-s)\mathbf G\mathbf G^{\top}\right).
\label{eq:incidence_increment_distribution}
\end{equation}
Furthermore, the independent and stationary increment properties of $\mathbf W_\ell(t)$ are preserved under the deterministic linear transformation $\mathbf G$. Therefore, $\mathbf Z_\ell(t)$ is a possibly degenerate Wiener process in $\mathbb R^N$ with covariance matrix $\mathbf G\mathbf G^{\top}$.

Given $\mathbf g,\mathbf g'\in L(\mathcal E,\mathcal V)$, and their feature-wise columns as
\begin{equation}
\mathbf g=\left[\mathbf g_1,\ldots,\mathbf g_d\right],\qquad\mathbf g'=\left[\mathbf g'_1,\ldots,\mathbf g'_d\right].
\end{equation}
For the increment \(\Delta\mathbf Z=\mathbf Z(t)-\mathbf Z(s),\) the Frobenius inner product gives
\begin{equation}
\left\langle\Delta\mathbf Z,\mathbf g\right\rangle_F=\sum_{\ell=1}^d\left\langle G\Delta \mathbf W_\ell,\mathbf g_\ell\right\rangle,
\end{equation}
where \(\Delta \mathbf W_\ell=\mathbf W_\ell(t)-\mathbf W_\ell(s).
\) Because the Wiener processes associated with different feature dimensions are mutually independent, the cross-dimension covariance terms vanish. Hence,
\begin{equation}
\begin{aligned}
&\mathbb E\left[\left\langle\Delta\mathbf Z,\mathbf g\right\rangle_F\left\langle\Delta\mathbf Z,\mathbf g'\right\rangle_F\right]=\sum_{\ell=1}^d\mathbb E\left[\left\langle\mathbf G\Delta \mathbf W_\ell,\mathbf g_\ell\right\rangle\left\langle\mathbf G\Delta \mathbf W_\ell,\mathbf g'_\ell\right\rangle\right]\\
&=(t-s)\sum_{\ell=1}^d\mathbf g_\ell^{\top}\mathbf G\mathbf G^{\top}\mathbf g'_\ell=(t-s)\left\langle\mathcal Q\mathbf g,\mathbf g'\right\rangle_F,
\end{aligned}
\label{eq:q_wiener_covariance_identity}
\end{equation}
where \(\mathcal Q\mathbf g=\mathbf G\mathbf G^{\top}\mathbf g.\) This proves that $\mathbf Z(t)$ is an $L(\mathcal E,\mathcal V)$-valued $\mathcal Q$-Wiener process and that $\mathcal Q$ acts independently on the $d$ feature dimensions. Equivalently, vectorizing the matrix-valued process gives
\begin{equation}
\operatorname{Cov}\left(\operatorname{vec}\left(\mathbf Z(t)-\mathbf Z(s)
\right)\right)=(t-s)\left(I_d\otimes \mathbf G\mathbf G^{\top}\right).
\end{equation}

It remains to verify the stated properties of $\mathcal Q$. For arbitrary
$\mathbf g,\mathbf g'\in L(\mathcal E,\mathcal V)$,
\begin{equation}
\begin{aligned}
\left\langle\mathcal Q\mathbf g,\mathbf g'\right\rangle_F=\left\langle\mathbf G\mathbf G^{\top}\mathbf g,\mathbf g'\right\rangle_F=\left\langle\mathbf g,\mathbf G\mathbf G^{\top}\mathbf g'
\right\rangle_F=\left\langle\mathbf g,\mathcal Q\mathbf g'\right\rangle_F.
\end{aligned}
\end{equation}
Thus, $\mathcal Q$ is self-adjoint. Moreover,
\begin{equation}
\begin{aligned}
\left\langle\mathcal Q\mathbf U,\mathbf U\right\rangle_F=\left\langle\mathbf G\mathbf G^{\top}\mathbf U,\mathbf U
\right\rangle_F=\|\mathbf G^{\top}\mathbf U\|_F^2\geq 0,
\end{aligned}
\end{equation}
so $\mathcal Q$ is positive semidefinite.

Finally, since $L(\mathcal E,\mathcal V)$ is finite-dimensional, every linear operator on this space is trace class. More explicitly, the trace of the full channel-wise covariance operator is
\begin{equation}
\begin{aligned}
\operatorname{Tr}(\mathcal Q)=d\operatorname{Tr}\left(\mathbf G\mathbf G^{\top}\right)=d\|\mathbf G\|_F^2<\infty.
\end{aligned}
\end{equation}
Therefore, $\mathcal Q$ is trace class. This completes the proof.
\end{proof}

\section{Proof of Proposition~\ref{prop:coefficient_regularity}}
\label{appendix:Proposition2}
\begin{proof}[Proof of Proposition~\ref{prop:coefficient_regularity}]
Assume that the aggregation operator is locally Lipschitz. This condition is satisfied by the mean and max aggregators considered in this work. The concatenation operation is linear, and the scalar-valued neural networks together with the LeakyReLU activation are locally Lipschitz. In addition, the softmax mapping is continuously differentiable with a uniformly bounded Jacobian. Consequently, for every $R>0$, there exist constants $L_{A,R}>0$ and $L_{B,R}>0$ such that
\begin{equation}
\begin{aligned}
\|\mathbf A_{\theta}(\mathbf X)-\mathbf A_{\theta}(\mathbf Y)\|_2\leq L^{X}_{A,R}\|\mathbf X-\mathbf Y\|_F\\
\|\mathbf B_{\phi}(\mathbf X)-\mathbf B_{\phi}(\mathbf Y)\|_2\leq L^{X}_{B,R}
\|\mathbf X-\mathbf Y\|_F
\label{eq:A_local_lipschitz}
\end{aligned}
\end{equation}
whenever $\|\mathbf X\|_F,\|\mathbf Y\|_F\leq R$.

Since $\mathbf A_{\theta}$ and $\mathbf B_{\phi}$ are diagonal matrices, it follows that
\begin{equation}
\|\mathbf A_{\theta}(\mathbf X)\|_2\leq1,\qquad\|\mathbf B_{\phi}(\mathbf X)\|_2\leq1
\label{eq:modulation_uniform_bound}
\end{equation}
for every $\mathbf X$.

We first consider the drift coefficient. By adding and subtracting
$\mathbf G^{\top}\mathbf A_{\theta}(\mathbf X)\mathbf G\mathbf Y$, we obtain
\begin{equation}
\begin{aligned}
\mathbf F^{\theta}(\mathbf X)-\mathbf F^{\theta}(\mathbf Y)=
&-\mathbf G^{\top}\mathbf A_{\theta}(\mathbf X)\mathbf G(\mathbf X-\mathbf Y)\\
&-\mathbf G^{\top}\left[\mathbf A_{\theta}(\mathbf X) -\mathbf A_{\theta}(\mathbf Y)\right]\mathbf G\mathbf Y.
\end{aligned}
\end{equation}
Therefore,
\begin{equation}
\begin{aligned}
\|\mathbf F^{\theta}(\mathbf X)-\mathbf F^{\theta}(\mathbf Y)\|_F
&\leq\|\mathbf G\|_F^2\|\mathbf A_{\theta}(\mathbf X)\|_2\|\mathbf X-\mathbf Y\|_F\\
&+\|\mathbf G\|_F^2\|\mathbf A_{\theta}(\mathbf X)-\mathbf A_{\theta}(\mathbf Y)\|_2\|\mathbf Y\|_F.
\end{aligned}
\end{equation}
Using Eqs.~\eqref{eq:A_local_lipschitz} and \eqref{eq:modulation_uniform_bound}, together with $|\mathbf Y|_F\leq R$, yields
\begin{equation}
\|\mathbf F^{\theta}(\mathbf X)-\mathbf F^{\theta}(\mathbf Y)\|_F\leq\|\mathbf G\|_F^2\left(1+L^{X}_{A,R}R\right)\|\mathbf X-\mathbf Y\|_F.
\label{eq:F_local_lipschitz_bound}
\end{equation}
Thus, $\mathbf F^{\theta}$ is locally Lipschitz continuous.

We next consider the stochastic diffusion operator. Similar to the analysis of $\mathbf F^{\theta}$, we obtain
\begin{equation}
\begin{aligned}
&\|\boldsymbol{\Sigma}^{\phi}(\mathbf X)-\boldsymbol{\Sigma}^{\phi}(\mathbf Y)\|_{F}^2\leq C_{G,R}\|\mathbf X-\mathbf Y\|_F^2.
\end{aligned}
\label{eq:Sigma_l2_bound}
\end{equation}
where \(C_{G,R}=\|\mathbf G\|_F^4L^2_{B,R}.\) Together with Eq.~\eqref{eq:F_local_lipschitz_bound}, this proves Eq.~\eqref{eq:neural_coefficients_local_lipschitz}.

It remains to verify the linear-growth condition. From
Eq.~\eqref{eq:modulation_uniform_bound},
\begin{equation}
\begin{aligned}
\|\mathbf F^{\theta}(\mathbf X)\|_F=\|\mathbf G^{\top}\mathbf A_{\theta}(\mathbf X)\mathbf G\mathbf X\|_F
&\leq\|\mathbf G\|_F^2\|\mathbf A_{\theta}(\mathbf X)\|_2\|\mathbf X\|_F\\
&\leq\|\mathbf G\|_F^2\|\mathbf X\|_F.
\end{aligned}
\end{equation}
Hence,
\begin{equation}
\|\mathbf F^{\theta}(\mathbf X)\|_F^2\leq\|\mathbf G\|_F^4\|\mathbf X\|_F^2.
\label{eq:F_linear_growth_bound}
\end{equation}

Similarly, using the boundedness of $\mathbf B_{\phi}(\mathbf X)$ in
Eq.~\eqref{eq:modulation_uniform_bound}, we obtain
\begin{equation}
\|\boldsymbol{\Sigma}^{\phi}(\mathbf X)\|_{F}^2\leq \|\mathbf G\|_F^4.
\label{eq:Sigma_linear_growth_bound}
\end{equation}
Combining Eqs.~\eqref{eq:F_linear_growth_bound} and
\eqref{eq:Sigma_linear_growth_bound} gives
\begin{equation}
\begin{aligned}
\|\mathbf F^{\theta}(\mathbf X)\|_F^2+\|\boldsymbol{\Sigma}^{\phi}(\mathbf X)\|_{F}^2
&\leq\|\mathbf G\|_F^4\|\mathbf X\|_F^2+\|\mathbf G\|_F^4\\
&\leq \|\mathbf G\|_F^4\left(1+\|\mathbf X\|_F^2\right).
\end{aligned}
\end{equation}
This proves the linear-growth condition and completes the proof.
\end{proof}

\section{Proof of Theorem~\ref{thm:wellposedness_energy_stability}}
\label{appendix:theorem1}
\begin{proof}[Proof of Theorem~\ref{thm:wellposedness_energy_stability}]
The Eq.~\eqref{eq:node_driven_hsde} can be written as
\begin{equation}
\mathrm d\mathbf X(t)=\mathbf F^{\theta}(\mathbf X(t))\mathrm dt+\boldsymbol{\Sigma}^{\phi}(\mathbf X(t))\mathrm d\mathbf W(t).
\label{eq:proof_compact_hsde}
\end{equation}

We divide the proof into five steps.
\paragraph{Step 1: Construction of a maximal local strong solution}
For $R>0$, define the radial projection
\begin{equation}
\pi_R(\mathbf X)=
\begin{cases}
\mathbf X,
&\|\mathbf X\|_F\leq R,\\
\dfrac{R\mathbf X}{\|\mathbf X\|_F},
&\|\mathbf X\|_F>R.
\end{cases}
\end{equation}
The radial projection is nonexpansive:
\begin{equation}
\|\pi_R(\mathbf X)-\pi_R(\mathbf Y)\|_F\leq\|\mathbf X-\mathbf Y\|_F.
\end{equation}
Define the truncated coefficients by
\begin{equation}
\mathbf F^{\theta}_R(\mathbf X)=\mathbf F^{\theta}(\pi_R(\mathbf X)),\qquad \boldsymbol{\Sigma}^{\phi}_R(\mathbf X)=\boldsymbol{\Sigma}^{\phi}(\pi_R(\mathbf X)).
\end{equation}
By Proposition~\ref{prop:coefficient_regularity}, the truncated coefficients are globally Lipschitz. Hence, the truncated equation admits a unique global strong solution. By the standard localization argument, Eq.~\eqref{eq:proof_compact_hsde} admits a unique maximal local strong solution on $[0,\tau_{\infty})$, where
\begin{equation}
\tau_R=\inf\{t\geq0:\|\mathbf X(t)\|_F\geq R\},\qquad\tau_{\infty}=\lim_{R\rightarrow\infty}\tau_R.
\label{eq:proof_explosion_times}
\end{equation}

\paragraph{Step 2: Drift dissipativity and stopped energy estimate}
Since $\mathbf A_{\theta}(\mathbf X)$ is diagonal with positive entries,
\begin{equation}
\begin{aligned}
\left\langle\mathbf X,\mathbf F^{\theta}(\mathbf X)\right\rangle_F
&=-\operatorname{Tr}\left(\mathbf X^{\top}\mathbf G^{\top}\mathbf A_{\theta}(\mathbf X)\mathbf G\mathbf X\right)\\
&=-\|\mathbf A_{\theta}(\mathbf X)^{1/2}\mathbf G\mathbf X\|_F^2\leq0.
\end{aligned}
\label{eq:proof_drift_dissipativity}
\end{equation}
Thus, the deterministic drift is dissipative.

Moreover, the node-wise softmax normalization implies
\begin{equation}
\|\boldsymbol{\Sigma}^{\phi}(\mathbf X)\|_F=\|\mathbf G^{\top}\mathbf B_{\phi}(\mathbf X)\mathbf G\|_F\leq\|\mathbf G\|_F^2.
\label{eq:proof_diffusion_uniform_bound}
\end{equation}

Applying It\^{o}'s formula to the stopped process $\|\mathbf X(t\wedge\tau_R)\|_F^2$ and using Eq.~\eqref{eq:proof_drift_dissipativity}, we obtain
\begin{equation}
\begin{aligned}
&\|\mathbf X(t\wedge\tau_R)\|_F^2\\
&=\|\mathbf X(0)\|_F^2-2\int_0^{t\wedge\tau_R}\|\mathbf A_{\theta}(\mathbf X(s))^{1/2}G\mathbf X(s)\|_F^2\mathrm ds\\
&+d\int_0^{t\wedge\tau_R}\|\boldsymbol{\Sigma}^{\phi}(\mathbf X(s))
\|_F^2\mathrm ds+2M_R(t),
\end{aligned}
\label{eq:proof_stopped_energy_identity}
\end{equation}
where \(M_R(t)=\int_0^{t\wedge\tau_R}\left\langle\mathbf X(s),\boldsymbol{\Sigma}^{\phi}(\mathbf X(s))\mathrm d\mathbf W(s)\right\rangle_F\) is a continuous local martingale.

Taking expectations in Eq.~\eqref{eq:proof_stopped_energy_identity} gives
\begin{equation}
\begin{aligned}
&\mathbb E\left[\|\mathbf X(t\wedge\tau_R)\|_F^2\right]+2\mathbb E\int_0^{t\wedge\tau_R}\|\mathbf A_{\theta}(\mathbf X(s))^{1/2}
\mathbf G\mathbf X(s)\|_F^2\mathrm ds\\
&=\mathbb E\left[\|\mathbf X(0)\|_F^2\right]+d\mathbb E\int_0^{t\wedge\tau_R}\|\boldsymbol{\Sigma}^{\phi}(\mathbf X(s))\|_F^2\mathrm ds.
\end{aligned}
\label{eq:proof_stopped_expected_energy}
\end{equation}
Using Eq.~\eqref{eq:proof_diffusion_uniform_bound}, we obtain
\begin{equation}
\mathbb E\left[\|\mathbf X(t\wedge\tau_R)\|_F^2\right]\leq\mathbb E
\left[\|\mathbf X(0)\|_F^2\right]+dt\|\mathbf G\|_F^4.
\label{eq:proof_stopped_pointwise_bound}
\end{equation}

To obtain the supremum estimate, note that the quadratic variation of $M_R$ satisfies
\begin{equation}
\begin{aligned}
\langle M_R\rangle_t
&=\int_0^{t\wedge\tau_R}\|\boldsymbol{\Sigma}^{\phi}(\mathbf X(s))^{\top}
\mathbf X(s)\|_F^2\mathrm ds\\
&\leq\|\mathbf G\|_F^4\int_0^{t\wedge\tau_R}\|\mathbf X(s)\|_F^2\mathrm ds.
\end{aligned}
\label{eq:proof_energy_martingale_qv}
\end{equation}
The Burkholder--Davis--Gundy and Young inequalities therefore imply
\begin{equation}
\begin{aligned}
2\mathbb E[\sup_{0\leq u\leq t}\|M_R(u)\|]\leq\frac{1}{2}\mathbb E[\sup_{0\leq u\leq t}\|\mathbf X(u\wedge\tau_R)\|_F^2]+C t\|\mathbf G\|_F^4.
\end{aligned}
\label{eq:proof_energy_bdg}
\end{equation}
Taking the supremum in Eq.~\eqref{eq:proof_stopped_energy_identity}, dropping the nonnegative dissipative term, and applying Eq.~\eqref{eq:proof_energy_bdg} yield
\begin{equation}
\mathbb E\left[\sup_{0\leq u\leq T}\|\mathbf X(u\wedge\tau_R)\|_F^2\right]
\leq C_T\left(1+\mathbb E\|\mathbf X(0)\|_F^2\right),
\label{eq:proof_uniform_stopped_bound}
\end{equation}
where $C_T$ is independent of $R$.

\paragraph{Step 3: Absence of finite-time explosion and global existence}
On the event ${\tau_R\leq T}$,
\begin{equation}
R^2\leq\sup_{0\leq t\leq T}\|\mathbf X(t\wedge\tau_R)\|_F^2.
\end{equation}
Hence, by Eq.~\eqref{eq:proof_uniform_stopped_bound},
\begin{equation}
\mathbb P(\tau_R\leq T)\leq\frac{C_T\left(1+\mathbb E\|\mathbf X(0)\|_F^2
\right)}{R^2}.
\end{equation}
Letting $R\rightarrow\infty$ gives \(\mathbb P(\tau_{\infty}\leq T)=0.
\) Since $T>0$ is arbitrary, \(\tau_{\infty}=\infty\) almost surely. Therefore, the maximal local solution is global.

Since $\tau_R\uparrow\infty$ almost surely, Fatou's lemma and Eq.~\eqref{eq:proof_uniform_stopped_bound} give
\begin{equation}
\mathbb E\left[\sup_{0\leq t\leq T}\|\mathbf X(t)\|_F^2\right]\leq C_T
\left(1+\mathbb E\|\mathbf X(0)\|_F^2\right).
\label{eq:proof_global_supremum_bound}
\end{equation}

\paragraph{Step 4: Stochastic energy balance}
Equation~\eqref{eq:proof_stopped_pointwise_bound} and Fatou's lemma imply
\begin{equation}
\mathbb E\|\mathbf X(t)\|_F^2\leq\mathbb E\|\mathbf X(0)\|_F^2+
dt\|\mathbf G\|_F^4.
\label{eq:proof_global_pointwise_bound}
\end{equation}
Furthermore,
\begin{equation}
\begin{aligned}
\mathbb E\int_0^t\|\boldsymbol{\Sigma}^{\phi}(\mathbf X(s))^{\top}
\mathbf X(s)\|_F^2\mathrm ds\leq\|\mathbf G\|_F^4\int_0^t\mathbb E
|\mathbf X(s)|_F^2\mathrm ds<\infty.
\end{aligned}
\end{equation}
Therefore, the stochastic integral in the global It\^{o} formula is a square-integrable martingale with zero expectation. Taking expectations gives
\begin{equation}
\begin{aligned}
\mathbb E\|\mathbf X(t)\|_F^2
&+2\mathbb E\int_0^t\|\mathbf A_{\theta}(\mathbf X(s))^{1/2}\mathbf G\mathbf X(s)\|_F^2\mathrm ds\\
&=\mathbb E\|\mathbf X(0)\|_F^2+d\mathbb E\int_0^t\|\boldsymbol{\Sigma}^{\phi}(\mathbf X(s))\|_F^2\mathrm ds.
\end{aligned}
\label{eq:proof_global_energy_balance}
\end{equation}
This proves the stochastic energy identity and its consequence.

\paragraph{Step 5: Initial-condition stability and pathwise uniqueness}
Let $\mathbf X(t)$ and $\mathbf Y(t)$ be two solutions driven by the same node-space Wiener process, and define \(\mathbf D(t)=\mathbf X(t)-\mathbf Y(t).\) For $R>0$, let
\begin{equation}
\rho_R=\inf\{t\geq0:\|\mathbf X(t)\|_F\vee\|\mathbf Y(t)\|_F\geq R\}.
\end{equation}
Set
\begin{equation}
\begin{aligned}
\Delta\mathbf F^{\theta}(t)
&=\mathbf F^{\theta}(\mathbf X(t))-\mathbf F^{\theta}(\mathbf Y(t))\\
\Delta\boldsymbol{\Sigma}^{\phi}(t)
&=\boldsymbol{\Sigma}^{\phi}(\mathbf X(t))-\boldsymbol{\Sigma}^{\phi}(\mathbf Y(t)).
\end{aligned}
\end{equation}
By Proposition~\ref{prop:coefficient_regularity}, there exists $L_R>0$ such that, before $\rho_R$,
\begin{equation}
\|\Delta\mathbf F^{\theta}(t)\|_F^2+\|\Delta\boldsymbol{\Sigma}^{\phi}(t)
\|_{F}^2\leq L_R\|\mathbf D(t)\|_F^2.
\label{eq:proof_difference_local_lipschitz}
\end{equation}

Applying It\^{o}'s formula to $\|\mathbf D(t\wedge\rho_R)\|_F^2$ gives
\begin{equation}
\begin{aligned}
\|\mathbf D(t\wedge\rho_R)\|_F^2
&=\|\mathbf D(0)\|_F^2+2\int_0^{t\wedge\rho_R}\left\langle\mathbf D(s),
\Delta\mathbf F^{\theta}(s)\right\rangle_F\mathrm ds\\
&+d\int_0^{t\wedge\rho_R}\|\Delta\boldsymbol{\Sigma}^{\phi}(s)
\|_{F}^2\mathrm ds+2N_R(t),
\end{aligned}
\label{eq:proof_difference_ito}
\end{equation}
where \(N_R(t)=\int_0^{t\wedge\rho_R}\left\langle\mathbf D(s),\Delta\boldsymbol{\Sigma}^{\phi}(s)\mathrm d\mathbf W(s)\right\rangle_F.\)

By Young's inequality and Eq.~\eqref{eq:proof_difference_local_lipschitz},
\begin{equation}
2\left\langle\mathbf D,\Delta\mathbf F^{\theta}\right\rangle_F+d\|\Delta\boldsymbol{\Sigma}^{\phi}\|_{F}^2\leq
C_R\|\mathbf D\|_F^2.
\label{eq:proof_difference_drift_bound}
\end{equation}
Moreover, the BDG and Young inequalities give
\begin{equation}
\begin{aligned}
&2\mathbb E\left[\sup_{0\leq u\leq t}\|N_R(u)\|\right]\leq\frac{1}{2}
\mathbb E\left[\sup_{0\leq u\leq t}\|\mathbf D(u\wedge\rho_R)\|_F^2\right]\\
&+C_R\int_0^t\mathbb E\left[\sup_{0\leq r\leq s}\|\mathbf D(r\wedge\rho_R)
\|_F^2\right]\mathrm ds.
\end{aligned}
\label{eq:proof_difference_bdg}
\end{equation}

Taking the supremum in Eq.~\eqref{eq:proof_difference_ito}, taking expectations, and applying Eqs.~\eqref{eq:proof_difference_drift_bound} and \eqref{eq:proof_difference_bdg}, we obtain
\begin{equation}
\begin{aligned}
&\mathbb E\left[\sup_{0\leq u\leq t}\|\mathbf D(u\wedge\rho_R)\|_F^2\right]
\leq2\mathbb E\|\mathbf D(0)\|_F^2\\
&+C_R\int_0^t\mathbb E\left[\sup_{0\leq r\leq s}\|\mathbf D(r\wedge\rho_R)
\|_F^2\right]\mathrm ds.
\end{aligned}
\end{equation}
Gronwall's inequality yields
\begin{equation}
\begin{aligned}
\mathbb E[\sup_{0\leq t\leq T}\|\mathbf X(t\wedge\rho_R)-\mathbf Y(t\wedge\rho_R)\|_F^2]\\
\leq2e^{C_RT}\mathbb E[\|\mathbf X(0)-\mathbf Y(0)\|_F^2].
\end{aligned}
\label{eq:proof_final_initial_stability}
\end{equation}
Thus, the initial-condition stability estimate holds with \(C_{R,T}=2e^{C_RT}.\)

If $\mathbf X(0)=\mathbf Y(0)$ almost surely, then the right-hand side of Eq.~\eqref{eq:proof_final_initial_stability} is zero. Hence,
\begin{equation}
\mathbf X(t\wedge\rho_R)=\mathbf Y(t\wedge\rho_R)
\end{equation}
for all $t\in[0,T]$ on a common event of probability one. Letting $R\rightarrow\infty$ and using the nonexplosion result establishes global pathwise uniqueness. This completes the proof.
\end{proof}

\section{Proof of Theorem~\ref{thm:G}}
\label{appendix:G}
Consider two hypergraphs \(\mathcal G\) and \(\mathcal G'\) defined on the same node set. After consistently aligning their incidence coordinates, let their hypergraph gradient matrices be \(
\mathbf G,\mathbf G' \in\mathbb R^{N\times n}.\)

The two HyperNSD systems share the same trainable parameters and are driven by the same node-space Wiener process \(\mathbf W(t)\):
\begin{equation}
\label{twosystem}
\begin{aligned}
d\mathbf X^{\mathbf G}(t)
&= \mathbf F^{\theta,\mathbf G}
\bigl(\mathbf X^{\mathbf G}(t)\bigr)dt
+ \mathbf \Sigma^{\phi,\mathbf G}
\bigl(\mathbf X^{\mathbf G}(t)\bigr)d\mathbf W(t),\\
d\mathbf X^{\mathbf G'}(t)
&= \mathbf F^{\theta,\mathbf G'}
\bigl(\mathbf X^{\mathbf G'}(t)\bigr)dt
+ \mathbf \Sigma^{\phi,\mathbf G'} \bigl(\mathbf X^{\mathbf G'}(t)\bigr)d\mathbf W(t),
\end{aligned}
\end{equation}

where
\begin{equation}
\begin{aligned}
\mathbf F^{\theta,\mathbf G}(\mathbf X)
&=-\mathbf G^{\top}\mathbf A_{\theta}^{\mathbf G}(\mathbf X)
\mathbf G\mathbf X,\\
\mathbf \Sigma^{\phi,\mathbf G}(\mathbf X)
&= \mathbf G^{\top}\mathbf B_{\phi}^{\mathbf G}(\mathbf X)
\mathbf G.
\end{aligned}
\end{equation}

The notation \(\mathbf A_{\theta}^\mathbf G\) and \(\mathbf B_{\phi}^\mathbf G\) emphasizes that the incidence-wise modulation coefficients depend on the hypergraph structure through the node--hyperedge aggregation contexts. We first prove the following lemma.
\begin{Lemma}
\label{lem}
Let \(\mathbf X^{\mathbf G}(t)\) and
\(\mathbf X^{\mathbf{G'}}(t)\)
be the solutions of Eq.~\eqref{twosystem}, and $K=\max\{\|\mathbf G\|_F,\|\mathbf G'\|_F\}$, if there exist constants \(L_{A,R}>0\) and \(L_{B,R}>0\) such that, for every
\(\mathbf X\) satisfying \(\|\mathbf X\|_{F}\le R\),
\begin{equation}
\begin{aligned}
\|\mathbf A_{\theta}^\mathbf G(\mathbf X)
-\mathbf A_{\theta}^{\mathbf G'}(\mathbf X)\|_{F} \le L_{A,R}\|\mathbf G-\mathbf G'\|_{F},\\
\|\mathbf B_{\phi}^{\mathbf G}(\mathbf X)
-\mathbf B_{\phi}^{\mathbf G'}(\mathbf X)
\|_{F}\le L_{B,R} \|\mathbf G-\mathbf G'\|_{F}.
\end{aligned}
\end{equation}
Then, we have
\begin{equation}
\|\mathbf F^{\theta,\mathbf G}(\mathbf X)-\mathbf F^{\theta,\mathbf G'}(\mathbf X)\|_{F}\le C_{F,R}\|\mathbf G-\mathbf G'\|_{F},
\end{equation}

where \(C_{F,R}=R\left(2K+K^{2}L_{A,R}\right),\) and
\begin{equation}
\|\mathbf \Sigma^{\phi,\mathbf G}(\mathbf X)-\mathbf \Sigma^{\phi,\mathbf G'}(\mathbf X)\|_{F}\le C_{\Sigma,R}
\|\mathbf G-\mathbf G'\|_{F},
\end{equation}
where \(C_{\Sigma,R}=\left(2K+K^{2}L_{B,R}\right).\)
\end{Lemma}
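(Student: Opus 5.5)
The plan is a telescoping (add-and-subtract) decomposition of each coefficient difference. We change one factor at a time in the triple products $\mathbf G^{\top}\mathbf A\mathbf G$ and $\mathbf G^{\top}\mathbf B\mathbf G$, then bound each piece using submultiplicativity of the Frobenius norm, $\|\mathbf M\mathbf N\|_F\le\|\mathbf M\|_F\|\mathbf N\|_F$, and $\|\mathbf M\mathbf N\|_F\le\|\mathbf M\|_2\|\mathbf N\|_F$. We also use the uniform bounds $\|\mathbf A_\theta^{\mathbf G}(\mathbf X)\|_2\le1$ and $\|\mathbf B_\phi^{\mathbf G}(\mathbf X)\|_2\le1$ from the softmax normalization, already noted in Eq.~\eqref{eq:modulation_uniform_bound} in the proof of Proposition~\ref{prop:coefficient_regularity}. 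Throughout, $\|\mathbf X\|_F\le R$ and $K=\max\{\|\mathbf G\|_F,\|\mathbf G'\|_F\}$.

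For the drift, write
\begin{equation*}
\begin{aligned}
\mathbf F^{\theta,\mathbf G'}(\mathbf X)-\mathbf F^{\theta,\mathbf G}(\mathbf X)
={}&(\mathbf G-\mathbf G')^{\top}\mathbf A^{\mathbf G}_\theta(\mathbf X)\mathbf G\mathbf X
+\mathbf G'^{\top}\bigl[\mathbf A^{\mathbf G}_\theta(\mathbf X)-\mathbf A^{\mathbf G'}_\theta(\mathbf X)\bigr]\mathbf G\mathbf X\\
&+\mathbf G'^{\top}\mathbf A^{\mathbf G'}_\theta(\mathbf X)(\mathbf G-\mathbf G')\mathbf X.
\end{aligned}
\end{equation*}
The three terms are bounded as follows:
\begin{itemize}
\item The first term is at most $\|\mathbf G-\mathbf G'\|_F\cdot1\cdot K\cdot R$.
\item The second term is at most $K\cdot L_{A,R}\|\mathbf G-\mathbf G'\|_F\cdot K\cdot R$, using the hypothesis. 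Here $\|\cdot\|_2\le\|\cdot\|_F$ for the diagonal difference, or we use the Frobenius bound directly.
\item The third term is at most $K\cdot1\cdot\|\mathbf G-\mathbf G'\|_F\cdot R$.
\end{itemize}
Summing gives $R(2K+K^2L_{A,R})\|\mathbf G-\mathbf G'\|_F$, which is exactly $C_{F,R}$.

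The diffusion coefficient is handled identically with $\mathbf X$ removed, i.e.\ the trailing $\mathbf X$ is replaced by the identity on the right. The three analogous terms give $K+K^2L_{B,R}+K$ times $\|\mathbf G-\mathbf G'\|_F$, which yields $C_{\Sigma,R}$.

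The only delicate point is the bookkeeping of norms. We must make sure that each factor is bounded either by $K$ (the Frobenius norm of $\mathbf G$ or $\mathbf G'$) or by $1$ (the operator norm of a softmax-normalized diagonal matrix), so that no extra dimension-dependent factor such as $\sqrt N$ appears. We do this by always pairing the diagonal modulation matrices with the spectral norm and every other factor with the Frobenius norm. We also need the hypothesis on $\mathbf A$ and $\mathbf B$ to hold for the same $\mathbf X$ in both systems, which is given since $\|\mathbf X\|_F\le R$. Beyond this, the argument is routine.
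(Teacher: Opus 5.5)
Your proposal is correct and matches the paper's proof essentially line for line. It uses the same three-term telescoping decomposition $\mathbf G^{\top}\mathbf A^{\mathbf G}_\theta\mathbf G-\mathbf G'^{\top}\mathbf A^{\mathbf G'}_\theta\mathbf G'=\Delta\mathbf G^{\top}\mathbf A^{\mathbf G}_\theta\mathbf G+\mathbf G'^{\top}(\mathbf A^{\mathbf G}_\theta-\mathbf A^{\mathbf G'}_\theta)\mathbf G+\mathbf G'^{\top}\mathbf A^{\mathbf G'}_\theta\Delta\mathbf G$ with $\Delta\mathbf G=\mathbf G-\mathbf G'$, and the same bounds: $\|\mathbf A\|_2\le 1$ from the softmax normalization, the Frobenius hypothesis on the middle term, $\|\mathbf G\|_F,\|\mathbf G'\|_F\le K$ and $\|\mathbf X\|_F\le R$, with the diffusion estimate obtained identically by dropping the trailing $\mathbf X$.
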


\begin{proof}[Proof of Lemma~\ref{lem}]
Let \(\Delta\mathbf G=\mathbf G-\mathbf G'.\) For the drift coefficient, we have
\begin{equation}
\begin{aligned}
&\mathbf F^{\theta,\mathbf G}(\mathbf X)
-\mathbf F^{\theta,\mathbf G'}(\mathbf X)
\\
&=-\left(\mathbf G^{\top}\mathbf A_{\theta}^{\mathbf G}(\mathbf X)
\mathbf G-\mathbf G'^{\top}
\mathbf A_{\theta}^{\mathbf G'}(\mathbf X)
\mathbf G'\right)\mathbf X.
\end{aligned}
\end{equation}

The operator difference can be decomposed as
\begin{equation}
\begin{aligned}
&\mathbf G^{\top}\mathbf A_{\theta}^{\mathbf G}(\mathbf X)
\mathbf G-\mathbf G'^{\top}
\mathbf A_{\theta}^{\mathbf G'}(\mathbf X)
\mathbf G'=\Delta\mathbf G^{\top}
\mathbf A_{\theta}^{\mathbf G}(\mathbf X)
\mathbf G\\
&+\mathbf G'^{\top}\left(\mathbf A_{\theta}^{\mathbf G}(\mathbf X)
-\mathbf A_{\theta}^{\mathbf G'}(\mathbf X)\right)\mathbf G+\mathbf G'^{\top}
\mathbf A_{\theta}^{\mathbf G'}(\mathbf X)
\Delta\mathbf G.
\end{aligned}
\end{equation}

Using the submultiplicativity of the spectral and Frobenius norms gives
\begin{equation}
\begin{aligned}
&\|\mathbf F^{\theta,\mathbf G}(\mathbf X)
-\mathbf F^{\theta,\mathbf G'}(\mathbf X)
\|_{F}\le\|\Delta\mathbf G\|_{F}
\|\mathbf A_{\theta}^{\mathbf G}(\mathbf X)\|_{2}\|\mathbf G\|_{F}\|\mathbf X\|_{F}\\
&+\|\mathbf G'\|_{F}\|\mathbf A_{\theta}^{\mathbf G}(\mathbf X)
-\mathbf A_{\theta}^{\mathbf G'}(\mathbf X)\|_{F}\|\mathbf G\|_{F}\|\mathbf X\|_{F}\\
&+\|\mathbf G'\|_{F}\|\mathbf A_{\theta}^{\mathbf G'}(\mathbf X)\|_{2}\|\Delta\mathbf G\|_{F}
\|\mathbf X\|_{F}.
\end{aligned}
\end{equation}

Applying assumptions and using
\(\|\mathbf X\|_{F}\le R,\|\mathbf G\|,\|\mathbf G'\|\le K\), we obtain
\begin{equation}
\begin{aligned}
&\|\mathbf F^{\theta,\mathbf G}(\mathbf X)
-\mathbf F^{\theta,\mathbf G'}(\mathbf X)
\|_{F}\\
&\le R\left(K+K^{2}L_{A,R}+K\right)
\|\mathbf G-\mathbf G'\|_{F}\\
&=R\left(2K+K^{2}L_{A,R}
\right)\|\mathbf G-\mathbf G'\|_{F}.
\end{aligned}
\end{equation}
This proves the first estimate.

For the stochastic diffusion coefficient,
\begin{equation}
\begin{aligned}
&\mathbf \Sigma^{\phi, \mathbf G}(\mathbf X)-\mathbf \Sigma^{\phi,\mathbf G'}(\mathbf X)=\Delta\mathbf G^{\top}
\mathbf B_{\phi}^{\mathbf G}(\mathbf X)
\mathbf G\\
&+\mathbf G'^{\top}\left[
\mathbf B_{\phi}^{\mathbf G}(\mathbf X)
-\mathbf B_{\phi}^{\mathbf G'}(\mathbf X)
\right]\mathbf G+\mathbf G'^{\top}
\mathbf B_{\phi}^{\mathbf G'}(\mathbf X)
\Delta\mathbf G.
\end{aligned}
\end{equation}

Therefore,
\begin{equation}
\begin{aligned}
&\|\mathbf \Sigma^{\phi, \mathbf G}(\mathbf X)-\mathbf \Sigma^{\phi, \mathbf G'}(\mathbf X)\|_{F}\le \left(2K+K^{2}L_{B,R}\right)
\|\mathbf G-\mathbf G'\|_{F}.
\end{aligned}
\end{equation}
\end{proof}

\begin{proof}[Proof of Theorem~\ref{thm:G}]
Define \(\mathbf \Delta(t)=\mathbf X^{\mathbf G}(t)-\mathbf X^{\mathbf G'}(t).\) For notational simplicity, let \(\overline{\mathbf \Delta}(t)=\mathbf \Delta(t\wedge\tau_R).\) Using the synchronous coupling of the two systems, we obtain
\begin{equation}
\begin{aligned}
\overline{\mathbf \Delta}(t)=\mathbf \Delta(0)+\int_{0}^{t}\mathbf 1_{{s\le\tau_R}}\mathbf D_{\mathbf F}(s)ds
+\int_{0}^{t}\mathbf 1_{{s\le\tau_R}}
\mathbf D_{\Sigma}(s)d\mathbf W(s),
\end{aligned}
\end{equation}

where
\begin{equation}
\begin{aligned}
\mathbf D_{\mathbf F}(s)
&=\mathbf F^{\theta,\mathbf G}
\bigl(\mathbf X^{\mathbf G}(s)\bigr)
-\mathbf F^{\theta,\mathbf G'}\bigl(\mathbf X^{\mathbf G'}(s)\bigr),\\
\mathbf D_{\Sigma}(s)
&=\mathbf \Sigma_{\mathbf G}^{\phi}
\bigl(\mathbf X^{\mathbf G}(s)\bigr)
-\mathbf \Sigma_{\mathbf G'}^{\phi}
\bigl(\mathbf X^{\mathbf G'}(s)\bigr).
\end{aligned}
\end{equation}

We first split the drift difference into a state perturbation and a structural perturbation:
\begin{equation}
\begin{aligned}
\mathbf D_{\mathbf F}(s)=&
\mathbf F^{\theta,\mathbf G}
\bigl(\mathbf X^{\mathbf G}(s)\bigr)
-\mathbf F^{\theta,\mathbf G}
\bigl(\mathbf X^{\mathbf G'}(s)\bigr)
\\
&+\mathbf F^{\theta,\mathbf G}
\bigl(\mathbf X^{\mathbf G'}(s)\bigr)
-\mathbf F^{\theta,\mathbf G'}
\bigl(\mathbf X^{\mathbf G'}(s)\bigr).
\end{aligned}
\end{equation}

By Lemma~\ref{lem},
\begin{equation}
\begin{aligned}
\|\mathbf D_{\mathbf F}(s)\|_{F}^{2}
\le2L_R\|\overline{\mathbf \Delta}(s)\|_{F}^{2}+2C_{F,R}^{2}\|\mathbf G-\mathbf G'\|_{F}^{2}
\end{aligned}
\end{equation}
for \(s\le\tau_R\).

Similarly,
\begin{equation}
\begin{aligned}
\mathbf D_{\Sigma}(s)&=\mathbf \Sigma_{\mathbf G}^{\phi}
\bigl(\mathbf X^{\mathbf G}(s)\bigr)
-\mathbf \Sigma_{\mathbf G}^{\phi}
\bigl(\mathbf X^{\mathbf G'}(s)\bigr)
\\
&+\mathbf \Sigma_{\mathbf G}^{\phi}
\bigl(\mathbf X^{\mathbf G'}(s)\bigr)
-\mathbf \Sigma_{\mathbf G'}^{\phi}
\bigl(\mathbf X^{\mathbf G'}(s)\bigr),
\end{aligned}
\end{equation}

which gives
\begin{equation}
\begin{aligned}
|\mathbf D_{\Sigma}(s)|_{F}^{2}\le 2L_R
\|\overline{\mathbf \Delta}(s)\|_{F}^{2}
+ 2C_{\Sigma,R}^{2}\|\mathbf G-\mathbf G'\|_{F}^{2}.
\end{aligned}
\end{equation}

Using the inequality
\begin{equation}
\|\mathbf a+\mathbf b+\mathbf c\|_{F}^{2}
\le3\|\mathbf a\|_{F}^{2}
+3\|\mathbf b\|_{F}^{2}
+3\|\mathbf c\|_{F}^{2},
\end{equation}

we obtain, for \(t\le T\),
\begin{equation}
\begin{aligned}
&\mathbb E\left[\sup_{0\le u\le t}
\|\overline{\mathbf \Delta}(u)\|_{F}^{2}
\right]\le3\mathbb E\|\mathbf \Delta(0)\|_{F}^{2}\\
&+3\mathbb E\left[\sup_{0\le u\le t}
\|\int_{0}^{u}\mathbf 1_{{s\le\tau_R}}
\mathbf D_{\mathbf F}(s)ds\|_{F}^{2}\right]\\
&+3\mathbb E\left[\sup_{0\le u\le t}
\|\int_{0}^{u}\mathbf 1_{{s\le\tau_R}}
\mathbf D_{\Sigma}(s)d\mathbf W(s)
\|_{F}^{2}\right].
\end{aligned}
\end{equation}

By the Cauchy--Schwarz inequality,
\begin{equation}
\begin{aligned}
\sup_{0\le u\le t}\|\int_{0}^{u}
\mathbf 1_{{s\le\tau_R}} \mathbf D_{\mathbf F}(s)ds \|_{F}^{2}\le t
\int_{0}^{t} \mathbf 1_{{s\le\tau_R}}
\|\mathbf D_{\mathbf F}(s)\|_{F}^{2}ds.
\end{aligned}
\end{equation}

By the Burkholder--Davis--Gundy inequality, there exists a universal constant \(C_{\mathrm{BDG}}>0\) such that
\begin{equation}
\begin{aligned}
&\mathbb E \left[\sup_{0\le u\le t}
\|\int_{0}^{u}\mathbf 1_{{s\le\tau_R}}
\mathbf D_{\Sigma}(s)d\mathbf W(s)
\|_{F}^{2}\right]\\
&\le C_{\mathrm{BDG}}d \mathbb E
\int_{0}^{t} \mathbf 1_{{s\le\tau_R}}
\|\mathbf D_{\Sigma}(s)\|_{F}^{2}ds.
\end{aligned}
\end{equation}

Define
\begin{equation}
\Phi(t) = \mathbb E\left[\sup_{0\le u\le t} \|\overline{\mathbf \Delta}(u)\|_{F}^{2}\right].
\end{equation}

Combining the preceding estimates and using \(t\le T\), we obtain constants \(C_{1,R,T}>0\) and \(C_{2,R,T}>0\) such that
\begin{equation}
\begin{aligned}
\Phi(t) \le 3\mathbb E\|\mathbf \Delta(0)\|_{F}^{2}+C_{1,R,T}
\int_{0}^{t}\Phi(s)ds+C_{2,R,T}
t\|\mathbf G-\mathbf G'\|_{F}^{2}.
\end{aligned}
\end{equation}

Since \(t\le T\) and initial condition equal, this implies
\begin{equation}
\Phi(t)\le C_{1,R,T}\int_{0}^{t}\Phi(s)ds
+C_{2,R,T}T \|\mathbf G-\mathbf G'\|_{F}^{2}.
\end{equation}

Applying Grönwall's inequality yields
\begin{equation}
\begin{aligned}
\Phi(T)\le C_{2,R,T}T\|\mathbf G-\mathbf G'\|_{F}^{2}\exp(C_{1,R,T}T).
\end{aligned}
\end{equation}

Absorbing all constants and the relevant Lipschitz constants into \(C^G_{R,T}\), we obtain
\begin{equation}
\begin{aligned}
& \mathbb E[\sup_{0\le t\le T}\|\mathbf X^{\mathbf G}(t\wedge\tau_R)-\mathbf X^{\mathbf G'}(t\wedge\tau_R)\|_{F}^{2}
]\le C_{R,T} \|\mathbf G-\mathbf G'\|_{F}^{2}.
\end{aligned}
\end{equation}
\end{proof}

\section{Proof of Theorem~\ref{thm:conservation_covariance}}
\label{appendix:theorem2}
\begin{proof}[Proof of Theorem~\ref{thm:conservation_covariance}]
Recall that the Eq.~\eqref{eq:node_driven_hsde} is given by
\begin{equation}
\mathrm d\mathbf X(t)=-\mathbf G^{\top}\mathbf A_{\theta}(\mathbf X(t))
\mathbf G\mathbf X(t)\mathrm dt+\boldsymbol{\Sigma}^{\phi}(\mathbf X(t)) \mathrm d\mathbf W(t),
\label{eq:proof_conservation_hsde}
\end{equation}
\paragraph{Part (i): Conservation of null-space modes}
 Since \(\operatorname{Range}(\mathbf G^{\top})=(\ker \mathbf G)^{\perp},\) the projection $\operatorname{Proj}$ annihilates every vector in
$\operatorname{Range}(\mathbf G^{\top})$. Therefore,
\begin{equation}
\operatorname{Proj}\mathbf G^{\top}=0.
\label{eq:projection_annihilates_gt}
\end{equation}

Applying $\operatorname{Proj}$ to both sides of Eq.~\eqref{eq:proof_conservation_hsde} yields
\begin{equation}
\begin{aligned}
\mathrm d\bigl(\operatorname{Proj}\mathbf X(t)\bigr)
&=-\operatorname{Proj}\mathbf G^{\top}\mathbf A_{\theta}(\mathbf X(t))\mathbf G\mathbf X(t)\mathrm dt\\
&+\operatorname{Proj}\mathbf G^{\top}\mathbf B_{\phi}(\mathbf X(t))
\mathbf G\mathrm d\mathbf W(t).
\end{aligned}
\end{equation}
Using Eq.~\eqref{eq:projection_annihilates_gt}, both terms on the
right-hand side vanish. Hence, \(\mathrm d\bigl(\operatorname{Proj}\mathbf X(t)\bigr)=0.\) Integrating from $0$ to $t$ gives
\begin{equation}
\operatorname{Proj}\mathbf X(t)=\operatorname{Proj}\mathbf X(0)
\end{equation}
for every $t\geq0$, almost surely.

\paragraph{Part (ii): Structure-constrained instantaneous covariance}
Let $\mathbf X_{\ell}(t)$ and $\mathbf W_{\ell}(t)$ denote the
$\ell$-th feature columns of $\mathbf X(t)$ and $\mathbf W(t)$,
respectively. The stochastic component of the $\ell$-th channel is
\begin{equation}
\mathrm d\mathbf X_{\ell}(t)=\cdots+\boldsymbol{\Sigma}^{\phi}(\mathbf X(t))
\mathrm d\mathbf W_{\ell}(t),
\end{equation}
where $\mathbf W_{\ell}(t)$ is an $n$-dimensional standard Wiener process. Since
\begin{equation}
\operatorname{Cov}\left(\mathrm d\mathbf W_{\ell}(t)\mid\mathbf X(t)\right)=\mathbf I_n\mathrm dt,
\end{equation}
the conditional instantaneous covariance is
\begin{equation}
\begin{aligned}
&\operatorname{Cov}\left(\mathrm d\mathbf X_{\ell}(t)\mid\mathbf X(t)\right)=\boldsymbol{\Sigma}^{\phi}(\mathbf X(t))\boldsymbol{\Sigma}^{\phi}(\mathbf X(t))^{\top}\mathrm dt.
\end{aligned}
\end{equation}
Because $\mathbf B_{\phi}(\mathbf X)$ is diagonal, $\boldsymbol{\Sigma}^{\phi}(\mathbf X)$ is symmetric; therefore we obtain
\begin{equation}
\operatorname{Cov}\left(\mathrm d\mathbf X_{\ell}(t)\mid\mathbf X(t)\right)
=\boldsymbol{\Sigma}^{\phi}(\mathbf X(t))^2\mathrm dt.
\label{eq:instantaneous_covariance_proof}
\end{equation}

For any vector $\mathbf z\in\mathbb R^n$,
\begin{equation}
\boldsymbol{\Sigma}^{\phi}(\mathbf X)\mathbf z
=\mathbf G^{\top}\mathbf B_{\phi}(\mathbf X)\mathbf G\mathbf z\in\operatorname{Range}(\mathbf G^{\top}).
\end{equation}
It follows that
\begin{equation}
\operatorname{Range}\left(\boldsymbol{\Sigma}^{\phi}(\mathbf X)
\right)\subseteq\operatorname{Range}(\mathbf G^{\top}).
\end{equation}

Moreover, if $\mathbf z\in\ker \mathbf G$, then $\mathbf G\mathbf z=0$, and hence
\begin{equation}
\boldsymbol{\Sigma}^{\phi}(\mathbf X)\mathbf z=\mathbf G^{\top}\mathbf B_{\phi}(\mathbf X)\mathbf G\mathbf z=0.
\end{equation}
Therefore,
\begin{equation}
\ker \mathbf G\subseteq\ker\boldsymbol{\Sigma}^{\phi}(\mathbf X).
\end{equation}

\paragraph{Part (iii): Equality of structural modes}
We already know that \(\ker \mathbf G\subseteq\ker\boldsymbol{\Sigma}^{\phi}(\mathbf X).\) To prove the reverse inclusion, let \(\mathbf z\in\ker\boldsymbol{\Sigma}^{\phi}(\mathbf X).\) Then \(\mathbf G^{\top}
\mathbf B_{\phi}(\mathbf X)\mathbf G\mathbf z=0.\)

Taking the Euclidean inner product with $\mathbf z$ gives
\begin{equation}
\begin{aligned}
0=\mathbf z^{\top}\mathbf G^{\top}\mathbf B_{\phi}(\mathbf X)\mathbf G\mathbf z=\|\mathbf B_{\phi}(\mathbf X)^{1/2}\mathbf G\mathbf z\|_2^2.
\end{aligned}
\end{equation}
Since $\mathbf B_{\phi}(\mathbf X)^{1/2}$ is invertible, this implies
\(\mathbf G\mathbf z=0.\) Thus, $\mathbf z\in\ker \mathbf G$, and consequently,
\begin{equation}
\ker\boldsymbol{\Sigma}^{\phi}(\mathbf X)=\ker \mathbf G.
\label{eq:equal_diffusion_gradient_kernel}
\end{equation}

Since $\boldsymbol{\Sigma}^{\phi}(\mathbf X)\in\mathbb R^{n\times n}$
and $\mathbf G\in\mathbb R^{N\times n}$ have the same null space in
$\mathbb R^n$, the rank--nullity theorem gives
\begin{equation}
\begin{aligned}
\operatorname{rank}
\boldsymbol{\Sigma}^{\phi}(\mathbf X)
&=n-\dim\ker\boldsymbol{\Sigma}^{\phi}(\mathbf X)\\
&=n-\dim\ker \mathbf G=\operatorname{rank}\mathbf G.
\end{aligned}
\end{equation}

Finally, suppose that the hypergraph is connected and that \(\ker \mathbf G
=\operatorname{span}\{\mathbf{D_v}^{1/2}\mathbf 1_n\}.\) Let \(\mathbf q = \mathbf{D_v}^{1/2}\mathbf 1_n.\) Then $\mathbf q\in\ker \mathbf G$, and hence \(\mathbf G\mathbf q=0.\) Because $\mathbf q$ belongs to the range of $\operatorname{Proj}$, Part (i) implies that the component of each feature channel along $\mathbf q$ is conserved. Equivalently,
\begin{equation}
\mathbf q^{\top}\mathbf X(t)=\mathbf q^{\top}\mathbf X(0)
\end{equation}
for every $t\geq0$, almost surely. Substituting $\mathbf q=\mathbf{D_v}^{1/2}\mathbf 1_n$ gives
\begin{equation}
\left(\mathbf{D_v}^{1/2}\mathbf 1_n\right)^{\top}\mathbf X(t)=\left(\mathbf{D_v}^{1/2}\mathbf 1_n\right)^{\top}\mathbf X(0)
\end{equation}
almost surely. This completes the proof.
\end{proof}

\section{Proof of Proposition~\ref{prop:permutation_equivariance}}
\label{appendix:Proposition3}
\begin{proof}[Proof of Proposition~\ref{prop:permutation_equivariance}]
We define \(\widetilde{\mathbf X}(t)=\Pi\mathbf X_{\mathcal G}(t)\) and \(
\mathbf W^{\pi}(t)=\Pi\mathbf W(t).\) For each feature dimension, $\mathbf W_{\ell}^{\pi}(t)=\Pi\mathbf W_{\ell}(t)$. Since $\Pi$ is orthogonal,
\begin{equation}
\operatorname{Cov}\left(\mathbf W_{\ell}^{\pi}(t)\right)= t\Pi\Pi^{\top}=t\mathbf I_n.
\end{equation}
Thus, $\mathbf W^{\pi}(t)$ is also a node-space standard Wiener process.

Multiplying the HyperNSD equation on $\mathcal G$ by $\Pi$ gives
\begin{equation}
\begin{aligned}
\mathrm d\widetilde{\mathbf X}(t)=\Pi\mathbf F_{\mathcal G}^{\theta}\left(
\mathbf X_{\mathcal G}(t)\right)\mathrm dt+\Pi\boldsymbol{\Sigma}_{\mathcal G}^{\phi}\left(\mathbf X_{\mathcal G}(t)\right)\mathrm d\mathbf W(t).
\end{aligned}
\end{equation}
By Eq.~\eqref{eq:permuted_HNSD_coefficients},
\begin{equation}
\begin{aligned}
\mathrm d\widetilde{\mathbf X}(t)=\mathbf F_{\mathcal G^{\pi}}^{\theta}\left(\widetilde{\mathbf X}(t)\right)\mathrm dt+\boldsymbol{\Sigma}_{\mathcal G^{\pi}}^{\phi}\left(\widetilde{\mathbf X}(t)\right)\mathrm d\mathbf W^{\pi}(t),
\end{aligned}
\end{equation}
where \(\widetilde{\mathbf X}(0)=\Pi\mathbf X(0).\)

Therefore, $\widetilde{\mathbf X}(t)$ satisfies the HyperNSD equation on the relabeled hypergraph $\mathcal G^{\pi}$. By pathwise uniqueness,
\begin{equation}
\mathbf X_{\mathcal G^{\pi}}\left(t;\Pi\mathbf X(0),\mathbf W^{\pi}\right)=\widetilde{\mathbf X}(t)=\Pi\mathbf X_{\mathcal G}\left(t;\mathbf X(0),\mathbf W\right)
\end{equation}
almost surely. Since $\mathbf W^{\pi}$ and $\mathbf W$ have the same distribution, the distributional equivariance in
Eq.~\eqref{eq:distributional_permutation_equivariance} follows.
\end{proof}

\section{Proof of Proposition~\ref{prop:euler_maruyama_convergence}}
\label{appendix:P3}
\begin{proof}[Proof of Proposition~\ref{prop:euler_maruyama_convergence}]
Define the piecewise-constant interpolation \(\widehat{\mathbf X}^{h}(t)=\mathbf X_k^h,\;t\in[t_k,t_{k+1}),\) so that the continuous Euler--Maruyama interpolation satisfies
\begin{equation}
\begin{aligned}
\overline{\mathbf X}^{h}(t)=\mathbf X(0)
&+\int_0^t\mathbf F^{\theta}\left(\widehat{\mathbf X}^{h}(s)\right)\mathrm ds\\
&+\int_0^t\boldsymbol{\Sigma}^{\phi}\left(\widehat{\mathbf X}^{h}(s)\right)
\mathrm d\mathbf W(s).
\end{aligned}
\label{eq:proof_em_interpolation}
\end{equation}

For notational convenience, write \(\eta_R=\eta_R^h.\) Before the exit time $\eta_R$, the processes $\mathbf X(t)$, $\overline{\mathbf X}^{h}(t)$, and $\widehat{\mathbf X}^{h}(t)$ remain in the closed ball of radius $R$. Indeed, if $t\in[t_k,t_{k+1})$ and $t\leq\eta_R$, then \(\widehat{\mathbf X}^{h}(t)=\mathbf X_k^h=\overline{\mathbf X}^{h}(t_k),\) and $t_k\leq t\leq\eta_R$.

By Proposition~\ref{prop:coefficient_regularity}, there exists a constant $L_R>0$ such that
\begin{equation}
\begin{aligned}
\|\mathbf F^{\theta}(\mathbf U)-\mathbf F^{\theta}(\mathbf V)\|_F^2
&+\|\boldsymbol{\Sigma}^{\phi}(\mathbf U)-\boldsymbol{\Sigma}^{\phi}(\mathbf V)\|_{F}^2\\
&\leq L_R\|\mathbf U-\mathbf V\|_F^2
\end{aligned}
\label{eq:proof_em_local_lipschitz}
\end{equation}
for all $|\mathbf U|_F,|\mathbf V|_F\leq R$.

\paragraph{Local interpolation error}
For all $s\in[t_k,t_{k+1})$, Eq.~\eqref{eq:proof_em_interpolation} gives
\begin{equation}
\begin{aligned}
\overline{\mathbf X}^{h}(s)-\widehat{\mathbf X}^{h}(s)
&=\int_{t_k}^{s}\mathbf F^{\theta}\left(\widehat{\mathbf X}^{h}(r)\right)
\mathrm dr\\
&+\int_{t_k}^{s}\boldsymbol{\Sigma}^{\phi}\left(\widehat{\mathbf X}^{h}(r)
\right)\mathrm d\mathbf W(r).
\end{aligned}
\label{eq:proof_em_local_increment}
\end{equation}
On the stopped interval, the coefficients are bounded on the ball of radius $R$. Therefore, the Cauchy--Schwarz inequality and It\^{o}'s isometry imply
\begin{equation}
\begin{aligned}
&\mathbb E\left[\mathbb{I}({{s\leq\eta_R}})\|\overline{\mathbf X}^{h}(s)-
\widehat{\mathbf X}^{h}(s)\|_F^2\right]\\
&\leq C_R(s-t_k)^2+C_R(s-t_k) \leq C_Rh,
\end{aligned}
\label{eq:proof_em_increment_bound}
\end{equation}
where $C_R>0$ is independent of $h$. Integrating over $[0,T]$ yields
\begin{equation}
\mathbb E
\int_0^{T\wedge\eta_R}\|\overline{\mathbf X}^{h}(s)-\widehat{\mathbf X}^{h}(s)\|_F^2\mathrm ds\leq C^E_{R,T}h.
\label{eq:proof_em_integrated_increment}
\end{equation}

\paragraph{Step 2: Stopped strong-error estimate}
Define the error process \(\mathbf E^{h}(t)=\mathbf X(t)-\overline{\mathbf X}^{h}(t).\) Subtracting Eq.~\eqref{eq:proof_em_interpolation} from the integral form of Eq.~\eqref{eq:node_driven_hsde} gives
\begin{equation}
\begin{aligned}
\mathbf E^{h}(t\wedge\eta_R)
&=\int_0^{t\wedge\eta_R}\left[\mathbf F^{\theta}(\mathbf X(s))-\mathbf F^{\theta}\left(\widehat{\mathbf X}^{h}(s)\right)\right]\mathrm ds\\
&+\int_0^{t\wedge\eta_R}\left[\boldsymbol{\Sigma}^{\phi}(\mathbf X(s))-\boldsymbol{\Sigma}^{\phi}\left(\widehat{\mathbf X}^{h}(s)\right)\right]\mathrm d\mathbf W(s).
\end{aligned}
\label{eq:proof_em_error_equation}
\end{equation}

Using the Cauchy--Schwarz inequality for the drift integral and the Burkholder--Davis--Gundy inequality for the stochastic integral, we obtain
\begin{equation}
\begin{aligned}
&\mathbb E
\left[\sup_{0\leq u\leq t\wedge\eta_R}\|\mathbf E^{h}(u)\|_F^2\right]\\
&\leq C_T\mathbb E\int_0^{t\wedge\eta_R}\|\mathbf F^{\theta}(\mathbf X(s))-
\mathbf F^{\theta}\left(\widehat{\mathbf X}^{h}(s)\right)\|_F^2\mathrm ds\\
&+C\mathbb E\int_0^{t\wedge\eta_R}\|\boldsymbol{\Sigma}^{\phi}(\mathbf X(s))
-\boldsymbol{\Sigma}^{\phi}\left(\widehat{\mathbf X}^{h}(s)\right)\|_{F}^2
\mathrm ds.
\end{aligned}
\label{eq:proof_em_bdg_estimate}
\end{equation}
Applying Eq.~\eqref{eq:proof_em_local_lipschitz} gives
\begin{equation}
\begin{aligned}
&\mathbb E\left[\sup_{0\leq u\leq t\wedge\eta_R}\|\mathbf E^{h}(u)\|_F^2
\right]\\
&\leq C^E_{R,T}\mathbb E\int_0^{t\wedge\eta_R}\|\mathbf X(s)-\widehat{\mathbf X}^{h}(s)\|_F^2\mathrm ds.
\end{aligned}
\label{eq:proof_em_lipschitz_error}
\end{equation}
Since \(\mathbf X(s)-\widehat{\mathbf X}^{h}(s)=\mathbf E^{h}(s)+\overline{\mathbf X}^{h}(s)-\widehat{\mathbf X}^{h}(s),\) we have
\begin{equation}
\begin{aligned}
&\mathbb E\left[\sup_{0\leq u\leq t\wedge\eta_R}\|\mathbf E^{h}(u)\|_F^2 \right]\\
&\leq C^E_{R,T}\int_0^t\mathbb E\left[\sup_{0\leq r\leq s\wedge\eta_R}\|
\mathbf E^{h}(r)\|_F^2\right]\mathrm ds\\
&+C^E_{R,T}\mathbb E\int_0^{t\wedge\eta_R}\|\overline{\mathbf X}^{h}(s)-
\widehat{\mathbf X}^{h}(s)\|_F^2\mathrm ds.
\end{aligned}
\label{eq:proof_em_gronwall_preparation}
\end{equation}
Using Eq.~\eqref{eq:proof_em_integrated_increment}, we obtain
\begin{equation}
\begin{aligned}
&\mathbb E\left[\sup_{0\leq u\leq t\wedge\eta_R}\|\mathbf E^{h}(u)\|_F^2
\right]\\
&\leq C^E_{R,T}\int_0^t\mathbb E\left[\sup_{0\leq r\leq s\wedge\eta_R}\|\mathbf E^{h}(r)\|_F^2\right]\mathrm ds+C^E_{R,T}h.
\end{aligned}
\end{equation}
Gronwall's inequality then yields
\begin{equation}
\mathbb E\left[\sup_{0\leq t\leq T\wedge\eta_R^h}\|\mathbf X(t)-\overline{\mathbf X}^{h}(t)\|_F^2\right]\leq C^E_{R,T}h.
\end{equation}

\paragraph{Step 3: Global convergence in probability}
The linear-growth condition in Proposition~\ref{prop:coefficient_regularity}, together with the BDG and Gronwall inequalities, gives a uniform moment estimate for the Euler--Maruyama interpolation:
\begin{equation}
\sup_{0<h\leq1}\mathbb E\left[\sup_{0\leq t\leq T}\|\overline{\mathbf X}^{h}(t)\|_F^2\right]\leq C_T\left(1+\mathbb E\|\mathbf X(0)\|_F^2\right).
\label{eq:proof_em_uniform_moment}
\end{equation}
Hence,
\begin{equation}
\begin{aligned}
&\mathbb P\left(\eta_R^h\leq T\right)\\
&\leq\mathbb P\left(\sup_{0\leq t\leq T}\|\mathbf X(t)\|_F\geq R\right)+\mathbb P\left(\sup_{0\leq t\leq T}\|\overline{\mathbf X}^{h}(t)\|_F\geq R\right)\\
&\leq\frac{C_T\left(1+\mathbb E\|\mathbf X(0)\|_F^2\right)}{R^2},
\end{aligned}
\label{eq:proof_em_exit_probability}
\end{equation}
uniformly in $h$.

For any $\varepsilon>0$,
\begin{equation}
\begin{aligned}
&\mathbb P\left(\sup_{0\leq t\leq T}\|\mathbf X(t)-\overline{\mathbf X}^{h}(t)\|_F>\varepsilon\right)\\
&\leq\mathbb P\left(\eta_R^h\leq T\right)+\mathbb P\left(\sup_{0\leq t\leq T\wedge\eta_R^h}\|\mathbf X(t)-\overline{\mathbf X}^{h}(t)\|_F>\varepsilon\right)\\
&\leq\frac{C_T\left(1+\mathbb E\|\mathbf X(0)\|_F^2\right)}{R^2}+\frac{C^E_{R,T}h}{\varepsilon^2},
\end{aligned}
\label{eq:proof_em_probability_bound}
\end{equation}
where the last inequality follows from Markov's inequality and the stopped strong-error estimate.

For fixed $R$, letting $h\rightarrow0$ eliminates the second term. Subsequently letting $R\rightarrow\infty$ eliminates the first term. Therefore,
\begin{equation}
\sup_{0\leq t\leq T}\|\mathbf X(t)-\overline{\mathbf X}^{h}(t)\|_F\longrightarrow0
\end{equation}
in probability as $h\rightarrow0$. This completes the proof.
\end{proof}

\section{Using uncertainty for Detection}
\label{usingUD}
Our experiments consider two uncertainty-aware tasks: OOD detection and misclassification detection. Taking OOD detection as an example, we construct OOD samples through label leave-out, feature interpolation, or structure manipulation, as described in Section~\ref{experiments}. HyperNSD then generates multiple stochastic representation trajectories for each ID and OOD node, and quantifies node-level uncertainty from the variability of their terminal representations. Since the model is trained on ID data, ID representations are generally more stable and exhibit lower uncertainty, whereas OOD nodes contain unseen semantic, attribute, or structural patterns and therefore tend to produce greater trajectory variability. The resulting uncertainty scores are used to distinguish ID from OOD nodes. Misclassification detection follows an analogous uncertainty-ranking principle, using aleatoric uncertainty as the detection score.

Because the pathwise OOD score is estimated directly in the node-representation space, it is not intrinsically tied to a classification decoder and can, in principle, be adapted to regression and representation-based anomaly detection. In contrast, many deterministic confidence-based methods rely on classification logits or probabilities and require task-specific modifications for non-classification settings. This representation-level formulation therefore improves the flexibility and general applicability of HyperNSD.

\section{Runtime Comparison}
\label{run}
We provide an empirical comparison of the per-epoch training time between HyperNSD and nine representative baselines under the label leave-out setting on Cora. As shown in Fig.~\ref{runtime}, deterministic hypergraph models such as HGNN, HyperGCN, HND, and HNDiffN require relatively low computational cost, whereas stochastic diffusion methods introduce additional overhead due to numerical integration and stochastic trajectory generation. HyperNSD requires 0.76 seconds per epoch, which is higher than GNSD and LGNSDE because it performs both deterministic and stochastic propagation directly over the node--hyperedge incidence domain. Nevertheless, its runtime remains below that of HGNN-Ensemble and stays within one second per epoch. These results demonstrate that HyperNSD achieves reliable uncertainty estimation and higher-order stochastic modeling with an acceptable computational cost.
\begin{figure}[ht]
    \centering
    \includegraphics[scale=0.4]{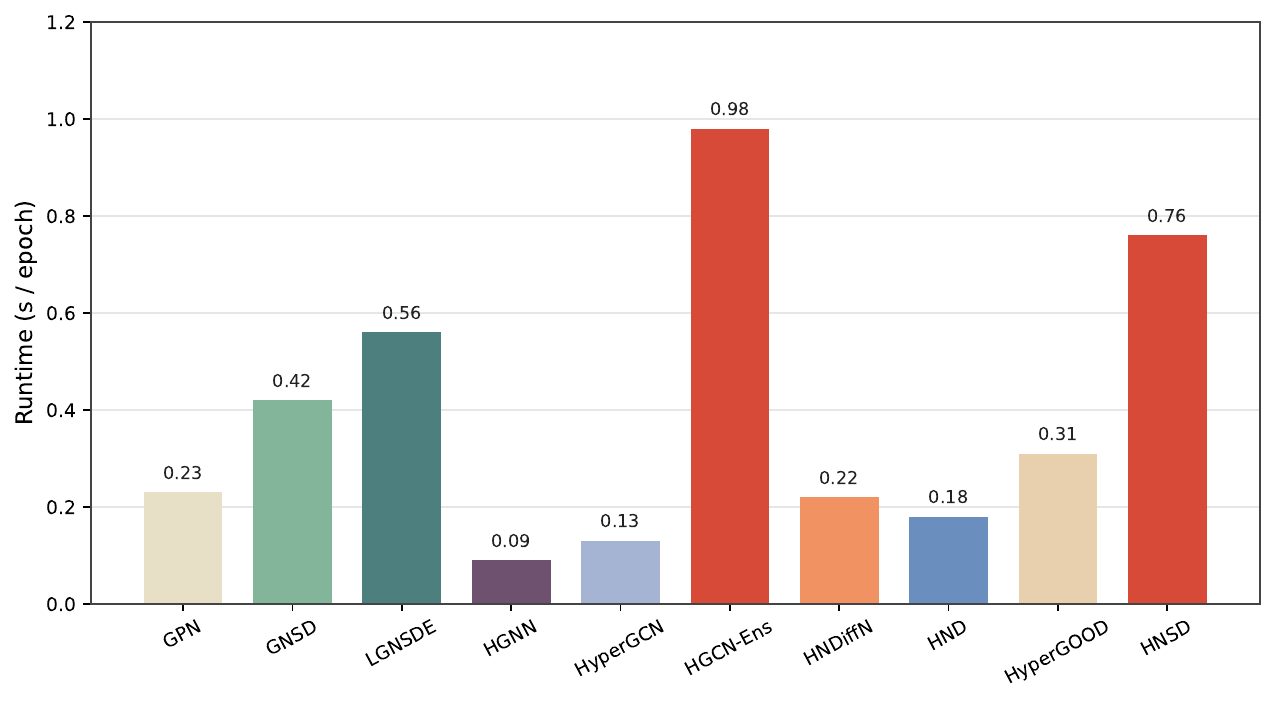}
   \caption{Runtime comparison of one epoch (in seconds) on Cora.}
\label{runtime}
\end{figure}

\section{Experimental Details}
\label{appendix:experiment}
\subsection{Experimental Environment}
All experiments were conducted on a Linux server running Ubuntu 22.04, equipped with an Intel(R) Xeon(R) E5-2699 v4 CPU at 2.20 GHz, 384 GB of RAM, and an NVIDIA GeForce RTX 3090 GPU with 24 GB of memory. HyperNSD and all baseline methods were implemented in Python 3.10 using PyTorch 2.6.0.
\subsection{Dataset Information}
\label{dataset}
The datasets used in our experiments are all publicly available as common benchmarks for evaluating hypergraph learning models.
\begin{itemize}
    \item Cora~\cite{sen2008collective,yadati2019hypergcn} is a co-citation hypergraph constructed from a scientific publication network. Each node represents a paper, while each hyperedge connects the papers cited by the same publication. The task is to predict the research topic of each paper. The dataset contains 2708 nodes, 1579 hyperedges, 1433-dimensional bag-of-words features, and 7 classes.
    \item Cora-CA~\cite{yadati2019hypergcn} is a co-authorship hypergraph constructed from the Cora publication corpus. Each node denotes a paper, and each author defines a hyperedge connecting all papers written by that author. The prediction task is to classify each paper into its corresponding research topic. The dataset contains 2708 nodes, 1072 hyperedges, 1433 features, and 7 classes.
    \item Citeseer~\cite{sen2008collective,yadati2019hypergcn} is a co-citation hypergraph of scientific publications. Each node corresponds to a paper, and each hyperedge groups the papers cited by the same publication. Node attributes are represented by bag-of-words features, and the task is to predict the topic of each paper. The dataset contains 3312 nodes, 1079 hyperedges, 3703 features, and 6 classes.
    \item DBLP~\cite{yadati2019hypergcn} is a co-authorship hypergraph constructed from computer science publications. Each node represents a paper, while each author forms a hyperedge connecting all papers authored by that individual. The task is to classify papers into different research areas. The dataset contains 41302 nodes, 22363 hyperedges, 1425 features, and 6 classes.
    \item ModelNet40~\cite{wu20153d} is a large-scale 3D object dataset containing CAD models from 40 object categories. Each 3D object is treated as a node, and hyperedges are constructed by connecting each object with its 10 nearest neighbors in the visual feature space. The dataset contains 12311 nodes, 24622 hyperedges.
    \item NTU2012~\cite{chen2003visual} is a 3D shape dataset containing objects from 67 semantic categories. Each object is represented as a node, and hyperedges are generated by grouping each object with its 10 nearest neighbors according to visual-feature similarity. The dataset contains 2012 nodes, 4024 hyperedges.
\end{itemize}

\subsection{Baseline Descriptions}
\label{baseline}
\begin{itemize}
    \item MSP~\cite{hendrycks2016baseline} uses the maximum probability produced by the softmax classifier as the confidence score. It is based on the observation that correctly classified ID samples generally receive higher maximum softmax probabilities than misclassified or OOD samples.
    \item ODIN~\cite{liang2017enhancing} improves softmax-based OOD detection through temperature scaling and small input perturbations. These operations enlarge the separation between the confidence-score distributions of ID and OOD samples without modifying the pretrained classifier.
    \item Mahalanobis~\cite{lee2018simple} models the intermediate representations of each class using class-conditional Gaussian distributions. The Mahalanobis distance between a test representation and its closest class distribution is then used to derive the OOD confidence score.
    \item GNNSafe~\cite{wu2023energy} is an energy-based OOD detection method designed for graph-structured data. It extracts node-level energy scores from a GNN trained with the standard classification objective and further propagates these scores over the graph to exploit relational dependencies. Since auxiliary OOD samples are unavailable in our setting, we adopt its variant without OOD exposure.
    \item GPN~\cite{stadler2021graph} extends posterior networks to graph-structured data and represents node predictions using Dirichlet distributions. It estimates class-specific pseudo-counts through density estimation and propagates the resulting evidence across neighboring nodes to quantify predictive uncertainty.
    \item GNSD~\cite{lin2024graph} formulates graph representation learning as a stochastic diffusion process driven by a structure-dependent Wiener process. It estimates predictive uncertainty from multiple stochastic representation trajectories, thereby modeling how uncertainty evolves and propagates over pairwise graph structures.
    \item LGNSDE~\cite{bergna2025uncertainty} extends graph neural ordinary differential equations to latent graph neural stochastic differential equations. By introducing Brownian perturbations into the continuous latent dynamics, it generates stochastic node representations and quantifies uncertainty from their predictive variability.
    \item HGNN~\cite{feng2019hypergraph} introduces a spectral hypergraph convolution operator based on the normalized hypergraph Laplacian. It propagates information through node--hyperedge--node interactions and captures higher-order dependencies encoded by the incidence structure.
    \item HyperGCN~\cite{yadati2019hypergcn} approximates each hyperedge using a set of representative pairwise connections and subsequently applies GCN-style propagation to the resulting graph. This construction enables conventional graph convolution to process hypergraph-structured data.
    \item HGCN-Ens~\cite{lakshminarayanan2017simple} trains 10 independently initialized HyperGCN models and aggregates their predictive distributions. The disagreement among the ensemble members is used to estimate predictive uncertainty, at the cost of repeated training and increased model storage.
    \item HNDiffN~\cite{lu2025hypergraph} interprets hypergraph representation learning from a diffusion perspective and derives neural propagation layers from hypergraph diffusion dynamics. It provides a deterministic diffusion-based baseline for modeling higher-order feature propagation.
    \item HND~\cite{zhou2026hypergraph} formulates hypergraph message passing as a nonlinear anisotropic diffusion process defined through hypergraph gradient and divergence operators. Learnable incidence-level coefficients adaptively regulate deterministic feature transport between nodes and hyperedges.
    \item HyperGOOD~\cite{cai2026hypergood} is an energy-based framework specifically developed for OOD detection on hypergraphs. It combines multi-scale spectral decomposition with structure-aware uncertainty propagation to identify samples whose higher-order relational contexts differ from those observed during training.
\end{itemize}

\subsection{Evaluation Metrics}
\label{metrics}
Following prior studies~\cite{kong2020sde,stadler2021graph,wu2023energy}, we evaluate both detection reliability and in-distribution predictive performance using AUROC, AUPR, FPR95, and ID ACC.
\begin{itemize}
    \item AUROC denotes the area under the receiver operating characteristic curve. It evaluates the ability of a model to distinguish in-distribution and out-of-distribution samples across all possible decision thresholds. A higher AUROC indicates stronger discrimination performance.
    \item AUPR is the area under the precision--recall curve. It is particularly informative when the numbers of positive and negative samples are imbalanced, and is therefore widely adopted for OOD detection. For misclassification detection, we report both AUPR succ, where correctly classified samples are treated as positive, and AUPR err, where incorrectly classified samples are treated as positive.
    \item FPR95 measures the false positive rate when the true positive rate is fixed at 95\%. It reflects the extent to which negative samples are incorrectly identified as positive under a high-recall operating point. Lower FPR95 indicates better detection performance.
    \item ID ACC denotes the classification accuracy evaluated on in-distribution test samples. It is used to assess whether a method can provide reliable uncertainty estimates without sacrificing its predictive performance on normal data.
\end{itemize}

\subsection{More Empirical Results}
\label{result}
In Tables~\ref{table:ood_cora}--\ref{table:ood_ntu2012}, we report the AUROC/AUPR/FPR95 and in-distribution test accuracy on all datasets as complementary results to Table~\ref{table:ood_result} in the main text. In addition, Table~\ref{table:ood_std} reports the standard deviations of AUROC scores for OOD detection over ten independent runs, and Table~\ref{table:mis_std} reports the standard deviations of misclassification detection metrics.

\begin{table*}[ht]
\belowrulesep=0pt
\aboverulesep=0pt
\renewcommand{\arraystretch}{1.5}
\caption{Performance (\%) comparison of OOD detection on Cora.}
\label{table:ood_cora}
\centering
\begin{adjustbox}{width=\textwidth}
\begin{tabular}{c|cccc|cccc|cccc@{}}
\toprule
\multirow{3}{*}{Model}
& \multicolumn{4}{c|}{Label leave-out} & \multicolumn{4}{c|}{Feature interpolation} &  \multicolumn{4}{c}{Structure manipulation}\\ 
& AUROC & AUPR & FPR95 & ID ACC & AUROC & AUPR & FPR95 &  ID ACC & AUROC & AUPR & FPR95 &  ID ACC\\
\midrule  

MSP & 78.95   & 50.44 & 73.66 &  89.36 &  67.89 &  31.20 & 80.43 &  78.58 & 60.25 & 42.93 & 90.37 & 78.49\\          

ODIN & 44.15  & 21.20 & 96.21 & 89.41 & 52.78 &  24.60 & 90.05 &  78.55& 39.75 & 41.44 & 97.93 & 78.47\\

Mahalanobis    & 48.41   & 19.86 & 96.84  & 89.35 & 65.95 &  29.73 & 69.87 &  78.68& 65.27 & 41.67 & 81.70 & 78.36\\
\midrule

GNNSafe & 73.66   & 40.05 & 76.92  & 91.10 & 71.03  & 31.69 & 62.44 &  79.62 & 63.03 & 44.21 & 85.38 & 79.13\\

GPN & 71.27 & 39.72 & 82.09 & 90.68 & 70.72 & 33.66 & 70.27 & 77.85 & 64.98 & 39.59 & 81.76 & 78.26 \\

GNSD    & 87.04  & 67.31 & 54.69  & 91.49 & 76.86 &  36.89 & 60.71 &  76.37 & 60.52 & 51.68 & 90.65 & 76.34 \\

LGNSDE & 86.84 & 65.57 & 56.59 & 91.95 & 71.84 & 34.47 & 63.02 & 76.51 & 64.47 & 43.51 & 88.62 & 76.67 \\
\midrule

HGNN    & 77.46 & 62.88 & 68.81  & 91.95 & 71.49 &  35.11 & 76.14 &  77.10 & 64.59 & 47.28 &  87.65 & 76.88 \\

HyperGCN   & 74.81   & 43.00 & 85.14  & 93.32 & 69.27  & 34.30 & 71.11 &  78.32 & 65.65 & 49.25 & 89.45& 79.08\\

HGCN--Ens & 76.20 & 43.81 & 86.46 & 92.14 & 69.82 & 33.28 & 75.71 & 78.58 & 66.92 & 51.07 & 87.69 & 78.98 \\

HNDiffN    & 83.79 & 59.72 & 69.65  & 92.37 & 74.38  & 37.23 & 71.64 &  78.43 & 59.32 & 50.47 & 75.76& 76.96\\  

HND & 84.34 & 64.27 & 64.02 & 92.03 & 71.73 & 37.90 & 69.96 & 79.71 & 69.60 & 52.85 & 70.03 & 79.40 \\

HyperGOOD   & 85.13   & 66.02 & 58.18  & 93.27 & 77.44 &  49.86 & 58.51 &  78.37 & 67.71 & 53.24 &  78.62& 79.25  \\
\midrule

HyperNSD    & 85.68   & 68.03 & 55.68  & 93.49 & 80.61 & 53.70 & 52.27 & 80.25 & 73.34 & 57.95 & 64.07 & 80.07\\
\bottomrule
\end{tabular}
\end{adjustbox}
\end{table*}

\begin{table*}[ht]
\belowrulesep=0pt
\aboverulesep=0pt
\renewcommand{\arraystretch}{1.5}
\caption{Performance (\%) comparison of OOD detection on Cora-CA.}
\label{table:ood_cacora}
\centering
\begin{adjustbox}{width=\textwidth}
\begin{tabular}{c|cccc|cccc|cccc@{}}
\toprule
\multirow{3}{*}{Model}
& \multicolumn{4}{c|}{Label leave-out} & \multicolumn{4}{c|}{Feature interpolation} &  \multicolumn{4}{c}{Structure manipulation}\\ 
& AUROC & AUPR & FPR95 & ID ACC & AUROC & AUPR & FPR95 &  ID ACC & AUROC & AUPR & FPR95 &  ID ACC\\
\midrule  

MSP  & 71.57   & 40.62 & 87.25  & 78.81 & 75.82 & 41.61 & 74.48  & 68.03 & 58.42 & 32.20 & 93.86 & 68.13\\

ODIN & 48.51    & 20.53 & 96.52 & 78.81 & 51.58  & 45.08 & 97.05  & 68.01 & 41.58 & 24.35 & 95.61 & 68.32\\

Mahalanobis    & 48.29   & 19.55 & 92.94 & 78.78 & 75.06 & 32.67 & 54.17  & 67.99 & 58.81 & 36.85 & 93.09 & 68.24\\
\midrule

GNNSafe          & 65.23  & 34.25 & 82.20  & 79.24 & 80.03  & 41.15 & 57.61  & 69.44 & 61.15 & 43.67& 79.58 & 69.93\\

GPN & 68.82 & 37.76 & 79.63 & 75.85 & 78.53 & 42.21 & 73.60 & 66.66 & 67.39 & 49.01 & 71.71 & 66.61\\

GNSD    & 85.57  & 65.87 & 64.70  & 91.53 & 83.47  & 55.24 & 36.19  & 75.78 & 63.59 & 35.29  & 75.18 & 73.99 \\

LGNSDE & 83.58 & 60.98 & 79.24 & 89.41 & 80.49 & 55.74 & 35.66 & 73.86 & 59.60 & 34.04 & 86.98 & 72.38 \\

\midrule
HGNN    & 83.77   & 61.23 & 65.54  & 88.98 & 81.85  & 46.15 & 44.67  & 74.15 & 53.72 & 41.22 & 85.94 & 73.37\\

HyperGCN    & 71.49  & 42.71 & 84.62  & 91.10 & 85.11 & 48.69 & 55.06  & 73.10 & 62.91 & 40.43 & 82.14 & 72.28\\

HGCN--Ens & 72.69 & 43.56 & 78.51 & 89.75 & 84.36 & 50.02 & 56.87 & 73.66 & 61.92 & 38.54 & 82.43 & 73.29 \\

HNDiffN    & 83.24   & 58.93 & 70.28  & 88.14 & 79.56  & 50.44 & 49.60  & 74.45 & 57.50 & 42.23 & 83.42 & 72.52\\  

HND & 83.79 & 55.30 & 75.04 & 90.68 & 85.99 & 52.89 & 44.12 & 74.30 & 55.73 & 37.53 & 85.29 & 74.50 \\

HyperGOOD   & 86.56   & 67.60 & 43.56  & 91.91 & 86.61  & 54.57 & 38.68  & 76.81 & 64.30 & 44.52 & 77.92& 75.23\\
\midrule

HyperNSD     & 90.11  & 71.45 & 37.03  & 92.21 & 90.79 & 57.98& 20.70  & 75.78 & 70.37 & 53.44 & 63.35 & 75.84 \\
\bottomrule
\end{tabular}
\end{adjustbox}
\end{table*}

\begin{table*}[ht]
\belowrulesep=0pt
\aboverulesep=0pt
\renewcommand{\arraystretch}{1.5}
\caption{Performance (\%) comparison of OOD detection on CiteSeer.}
\label{table:ood_citeseer}
\centering
\begin{adjustbox}{width=\textwidth}
\begin{tabular}{c|cccc|cccc|cccc@{}}
\toprule
\multirow{3}{*}{Model}
& \multicolumn{4}{c|}{Label leave-out} & \multicolumn{4}{c|}{Feature interpolation} &  \multicolumn{4}{c}{Structure manipulation}\\ 
& AUROC & AUPR & FPR95 & ID ACC & AUROC & AUPR & FPR95 &  ID ACC & AUROC & AUPR & FPR95 &  ID ACC\\
\midrule 

MSP  & 71.82  & 46.27 & 79.26  & 75.72 & 70.86 & 32.95 & 68.57 & 71.14 & 56.12 & 51.30 & 81.69 & 71.14\\

ODIN & 51.19   & 25.54 & 96.30  & 75.51 & 45.68  & 20.50 & 96.65  & 70.99 & 43.17 & 43.37 & 96.89 & 71.10\\

Mahalanobis    & 58.06  & 32.19 & 97.22 & 75.66 & 68.93  & 36.36 & 75.00  &71.09 & 57.33 & 45.79 & 86.36& 71.04\\
\midrule

GNNSafe          & 67.55   & 38.95 & 89.82  & 75.40 & 69.12 & 31.98 & 68.03  & 73.53 & 74.82 & 57.33 & 76.70 & 72.59\\

GPN & 65.86 & 40.45 & 86.68 & 74.07 & 68.75 & 31.39 & 72.93 & 72.58 & 73.59 & 47.54 & 79.62 & 73.35 \\

GNSD    & 76.72   & 53.14 & 85.04  & 75.13 & 73.62  & 34.61 & 71.65  & 74.28 & 65.50 & 56.07 & 84.47& 74.21\\

LGNSDE & 68.32 & 47.48 & 84.50 & 74.78 & 78.65  & 47.44 & 59.60  & 76.71 & 64.03 & 55.77 & 88.97 & 76.98 \\
\midrule

HGNN    & 71.09   & 45.71 & 88.59  & 74.87 & 73.17  & 43.48 & 76.78  & 78.55 & 60.46 & 52.15 & 73.44 & 77.72 \\

HyperGCN    & 72.97   & 46.58 & 80.57  & 76.19 & 63.91  & 27.90 & 89.82  & 74.64 & 62.17 & 36.73 &90.46 & 74.83\\

HGCN--Ens & 73.32 & 47.83 & 82.42 & 76.07 & 65.40 & 30.05 & 83.19 & 75.28 & 63.58 & 34.89 & 87.39 & 75.81\\

HNDiffN    & 70.57  & 42.68 & 89.75  & 74.07 & 66.61 & 30.95 & 84.23 & 72.83 & 58.63 & 53.62 & 68.77 & 71.55 \\  

HND & 74.06 & 45.25 & 81.29 & 78.04 & 73.52 & 44.43 & 61.73 & 77.50 & 62.99 & 54.18 & 77.98 & 76.63 \\

HyperGOOD   & 74.62  & 58.36 & 61.52 & 76.93 & 71.43 & 37.21 & 71.92 & 74.76 & 68.41 & 52.11& 68.62 & 73.50\\
\midrule

HyperNSD     & 80.45    & 60.64 & 55.88 & 77.51 & 81.52  & 50.01 & 46.17  & 77.73 & 77.71 & 59.62 & 60.47 & 77.73\\
\bottomrule
\end{tabular}
\end{adjustbox}
\end{table*}

\begin{table*}[ht]
\belowrulesep=0pt
\aboverulesep=0pt
\renewcommand{\arraystretch}{1.5}
\caption{Performance (\%) comparison of OOD detection on DBLP.}
\label{table:ood_dblp}
\centering
\begin{adjustbox}{width=\textwidth}
\begin{tabular}{c|cccc|cccc|cccc@{}}
\toprule
\multirow{3}{*}{Model}
& \multicolumn{4}{c|}{Label leave-out} & \multicolumn{4}{c|}{Feature interpolation} &  \multicolumn{4}{c}{Structure manipulation}\\ 
& AUROC & AUPR & FPR95 & ID ACC & AUROC & AUPR & FPR95 &  ID ACC & AUROC & AUPR & FPR95 &  ID ACC\\
\midrule 

MSP     & 94.09   & 86.16 & 39.48  & 98.59 & 89.44  & 64.73 & 43.55  & 88.13 &  61.05  & 24.26 & 85.93 & 88.25\\

ODIN & 61.16   & 58.24 & 98.55  & 98.63 & 77.93  & 48.12 & 79.89  & 88.15 &  53.13 & 20.46 & 94.59 & 88.23\\

Mahalanobis    & 87.81  & 76.16 & 65.87  & 98.43 &  66.87  & 43.10 & 85.24 & 88.17  & 54.25  & 21.45 & 94.80   & 88.24\\
\midrule

GNNSafe       & 96.08   & 89.80 & 22.82  & 99.03 & 92.29   & 83.22  & 20.10 & 87.97 & 62.57 & 44.91 & 82.52 & 87.65\\

GPN & 85.24 & 74.08 & 60.98 & 99.11 & 95.26 & 85.52 & 21.86 & 88.82 & 68.36 & 56.78 & 87.68 & 88.90\\

GNSD    & 93.23  & 85.47 & 46.18  & 99.27 & 95.71  &  88.10 & 18.69 & 90.81 & 63.32 & 46.31 &  86.03  & 89.98\\

LGNSDE & 90.58 & 69.99 & 46.41 & 99.25 & 97.73 & 90.46 & 17.91 & 90.58 & 63.81 & 36.15 & 84.15 & 89.82 \\
\midrule

HGNN   & 74.42    & 66.24  & 64.47  & 99.23 & 85.31 & 80.43 &  57.72  & 87.70 & 58.23 & 42.09 &  87.28  & 87.75\\

HyperGCN    & 76.20  & 77.13 & 41.15  & 99.17 & 88.25 &  85.83  & 35.99  & 91.17 & 54.35 & 42.69 &  84.85  & 90.37\\

HGCN--Ens & 77.29 & 79.51 & 36.89 & 99.13 & 88.69 & 85.50 & 29.23 & 90.92 & 55.58 & 42.97 & 82.17 & 90.24 \\

HNDiffN    & 73.74   & 71.72 & 67.38  & 99.03 &96.69 & 87.71 &  13.47  &90.96  & 53.88 & 43.57 &  86.97  & 90.42\\  

HND & 91.58 & 85.39 & 24.41 & 99.34 & 93.63 & 85.71 & 17.25 & 89.82 & 54.27 & 43.04 & 83.91 & 88.61 \\

HyperGOOD   & 87.32   & 78.54 & 30.65  & 99.23 & 94.47 &  89.13  & 29.73  & 91.01 &   66.79  & 55.76 & 77.23   & 90.78\\
\midrule

HyperNSD     &  94.73  & 88.05 & 25.94  & 99.50 & 99.26 & 92.98 & 5.43  & 90.90 & 71.97 & 61.42 & 75.90   & 90.95 \\
\bottomrule
\end{tabular}
\end{adjustbox}
\end{table*}

\begin{table*}[ht]
\belowrulesep=0pt
\aboverulesep=0pt
\renewcommand{\arraystretch}{1.5}
\caption{Performance (\%) comparison of OOD detection on ModelNet40.}
\label{table:ood_model40}
\centering
\begin{adjustbox}{width=\textwidth}
\begin{tabular}{c|cccc|cccc|cccc@{}}
\toprule
\multirow{3}{*}{Model}
& \multicolumn{4}{c|}{Label leave-out} & \multicolumn{4}{c|}{Feature interpolation} &  \multicolumn{4}{c}{Structure manipulation}\\ 
& AUROC & AUPR & FPR95 & ID ACC & AUROC & AUPR & FPR95 &  ID ACC & AUROC & AUPR & FPR95 &  ID ACC\\
\midrule 

MSP   & 86.85   & 70.22 & 36.59  & 95.67  & 56.12 & 27.28 & 95.82 & 72.53 & 49.66 & 19.95 & 97.95 & 72.24\\

ODIN & 72.71  & 65.88 & 78.67 &  95.60 & 50.21 & 35.67 & 90.54 & 72.55 & 56.87 & 20.50 & 94.89  & 72.22 \\

Mahalanobis    & 60.62  & 54.97 & 90.29 &  95.67 & 64.25 & 53.04 & 98.64 & 72.59 & 60.21 & 21.67  & 81.86 & 72.24\\
\midrule

GNNSafe         & 89.67  & 50.79 & 45.83 &  96.93 & 81.31 & 52.71 & 55.51 & 80.81 & 63.58 & 29.34 & 81.52 & 80.39\\

GPN & 93.63 & 78.71 & 29.79 & 97.60 & 59.52 & 46.87 & 94.92 & 81.73 & 55.34 & 23.12 & 94.81 & 81.37 \\

GNSD    & 73.67   & 65.46 & 79.14  & 92.33 & 83.68 & 54.72 & 45.33 & 81.47 & 51.80 & 24.36 & 96.29 &80.55\\

LGNSDE & 75.96 & 55.50 & 63.16 & 92.67 & 85.14 & 58.19 & 33.82 & 83.41 & 62.68 & 34.69 & 77.84 & 83.12\\
\midrule

HGNN   & 84.76  & 71.48 & 38.27 & 93.73 &84.90 & 58.65 & 37.00 & 88.73 & 57.82 & 33.39 & 74.27 & 88.62\\

HyperGCN    & 82.87   & 51.28 & 68.61  & 93.73 & 81.03 & 45.52 & 47.24 & 87.52 & 46.77 & 23.85 & 87.92 & 87.72\\

HGCN--Ens & 84.13 & 51.94 & 70.83 & 94.20 & 79.52 & 46.87 & 44.29 & 87.31 & 45.65 & 24.46 & 87.37 & 88.39\\

HNDiffN    & 86.52   & 82.19 & 28.63  & 94.87 & 85.78 & 56.31 & 32.65 & 91.71 & 64.98 & 38.53 & 73.14 & 90.35\\  

HND & 95.15 & 90.27 & 22.38 & 98.59 & 83.32 & 51.89 & 50.82 & 95.62 & 53.59 & 35.92 & 85.17 & 94.24 \\

HyperGOOD   & 89.25  & 88.95 & 36.61  & 97.47 & 86.68 & 61.21 & 30.62 & 93.61 & 76.79 & 43.76 & 65.23 & 92.98\\
\midrule

HyperNSD   & 98.64   & 92.55 & 9.30  & 98.70 & 89.96 & 68.55  & 18.77 &  95.51 & 79.89 & 46.12 & 58.42 & 95.60\\
\bottomrule
\end{tabular}
\end{adjustbox}
\end{table*}

\begin{table*}[ht]
\belowrulesep=0pt
\aboverulesep=0pt
\renewcommand{\arraystretch}{1.5}
\caption{Performance (\%) comparison of OOD detection on NTU2012.}
\label{table:ood_ntu2012}
\centering
\begin{adjustbox}{width=\textwidth}
\begin{tabular}{c|cccc|cccc|cccc@{}}
\toprule
\multirow{3}{*}{Model}
& \multicolumn{4}{c|}{Label leave-out} & \multicolumn{4}{c|}{Feature interpolation} &  \multicolumn{4}{c}{Structure manipulation}\\ 
& AUROC & AUPR & FPR95 & ID ACC & AUROC & AUPR & FPR95 &  ID ACC & AUROC & AUPR & FPR95 &  ID ACC\\
\midrule  

MSP    & 80.17  & 64.61 & 49.93 & 88.72 & 54.75 & 39.08 & 95.83 & 67.20 & 50.37 & 16.15 & 97.98 & 67.14\\

ODIN & 51.49  & 43.35 & 89.22 & 88.78 & 50.84 &  33.92& 90.68 & 67.28 & 53.11 & 19.84 & 95.26 & 67.15 \\

Mahalanobis    & 45.49  & 35.07 & 91.04  &  88.86 & 61.87 & 55.12 & 87.79 & 67.27 & 57.07 & 17.85 & 92.77 & 67.14\\
\midrule

GNNSafe        & 83.80  & 55.54 & 67.76  & 89.88 & 79.75 & 53.98 & 60.76 & 76.53 & 65.90 & 18.64 & 69.51 & 75.65\\

GPN & 84.22 & 63.57 & 61.43 & 86.38 & 53.50 & 38.17 & 92.01 & 67.30 &51.59 & 22.15 & 94.73 & 67.30 \\

GNSD    & 62.42  & 44.85 & 86.27  & 81.87 & 90.86 & 59.74 & 35.27 & 84.31 & 53.42 & 26.38 & 93.47 & 84.17\\

LGNSDE & 65.91 & 50.45 & 74.82 & 82.68 & 85.82 & 61.09 & 30.25 & 83.82 & 52.43 & 34.19 & 77.02 & 84.36 \\
\midrule

HGNN    & 79.73   & 66.25 & 58.87 & 89.73 & 91.85 & 59.93 & 23.41 & 87.40 & 59.41 & 19.24 & 85.53 & 85.57\\

HyperGCN   
& 65.09   & 45.90 & 78.19  & 85.95 & 90.13 & 55.87 & 40.61 & 79.54 & 70.81 & 32.31 & 73.48 & 79.13\\

HGCN--Ens & 66.52 & 46.87 & 74.98 & 86.73 & 89.22 & 56.53 & 41.43 & 80.38 & 70.52 & 33.59 & 75.33 & 80.65 \\

HNDiffN    & 88.43 & 69.46 & 57.53 &  87.17 & 88.28 & 58.06 & 37.92 & 82.67 & 69.29 & 28.74 & 76.31 & 82.37\\  

HND & 78.79 & 68.09 & 43.33 & 88.66 & 89.84 & 62.02 & 31.81 & 87.84 & 70.78 & 38.22 & 70.12 & 88.77 \\

HyperGOOD   & 86.87   & 71.38 & 47.33 & 88.21 & 93.53 & 64.45 & 26.47 & 87.20 & 73.20 & 43.77 & 65.37 & 88.65\\
\midrule

HyperNSD  & 92.33   & 73.16 & 35.74  & 90.78 & 95.10  & 72.35 & 21.16  & 89.17 & 75.95 & 49.91 & 59.54 & 90.11\\
\bottomrule
\end{tabular}
\end{adjustbox}
\end{table*}

\begin{table*}[ht]
\belowrulesep=0pt
\aboverulesep=0pt
\renewcommand{\arraystretch}{1.5}
\caption{Standard deviations of AUROC scores for OOD detection over ten independent runs. L, F, and S denote label leave-out, feature interpolation, and structure manipulation, respectively.}
\label{table:ood_std}
\centering
\begin{adjustbox}{width=\textwidth}
\begin{tabular}{|c|ccc|ccc|ccc|ccc|ccc|ccc|}
\toprule
\multirow{3}{*}{Model}
& \multicolumn{3}{c|}{Cora} & \multicolumn{3}{c|}{Cora-CA} &  \multicolumn{3}{c|}{Citeseer} & \multicolumn{3}{c|}{DBLP}&\multicolumn{3}{c|}{ModelNet40}&\multicolumn{3}{c|}{NTU2012}\\ 
& L & F & S & L & F & S & L & F & S & L & F & S & L & F & S & L & F & S\\
\midrule  

MSP & 1.32 & 1.38 & 0.78 & 1.69 & 2.64 & 1.57 & 1.60 & 0.67 & 1.18 & 1.22 & 1.46 & 0.31 & 2.83 & 0.57 & 0.68 & 2.06 & 1.07 & 1.64\\          

ODIN & 3.52 & 0.62 & 0.97 & 3.27 & 0.92 & 1.11 & 3.53 & 3.58 & 2.93 & 2.45 & 3.60 & 2.21 & 3.29 & 0.72 & 0.94 & 2.84 & 1.31 & 1.84\\  

Mahalanobis & 1.76 & 0.41 & 1.81 & 1.17 & 1.39 & 2.11 & 0.58 & 0.39 & 1.13 & 0.49 & 0.48 & 0.56 & 0.85 & 0.30 & 0.72 & 0.50 & 0.56 & 2.43\\  
\midrule

GNNSafe & 2.28 & 0.72 & 0.83 & 1.99 & 0.73 & 2.20 & 1.20 & 0.78 & 0.89 & 3.79 & 0.52 & 0.69 & 3.19 & 3.66 & 2.76 & 2.36 & 0.33 & 1.46\\  

GPN & 1.97 & 1.10 & 1.90 & 2.16 & 0.95 & 2.35 & 1.26 & 2.54 & 3.08 & 3.50 & 2.25 & 3.22 & 1.35 & 0.70 & 0.52 & 0.96 & 1.16 & 1.18\\  

GNSD & 2.78 & 1.69 & 1.99 & 1.88 & 2.86 & 1.60 & 1.30 & 1.55 & 2.52 & 1.70 & 0.53 & 0.31 & 1.41 & 2.31 & 2.81 & 4.27 & 4.41 & 1.95\\  

LGNSDE & 2.92 & 0.83 & 3.07 & 2.22 & 1.54 & 1.69 & 1.48 & 0.67 & 1.29 & 2.15 & 0.83 & 1.62 & 1.68 & 1.91 & 4.37 & 3.47 & 2.25 & 1.68\\  
\midrule

HGNN & 0.69 & 1.09 & 1.23 & 0.78 & 1.22 & 2.82 & 1.38 & 0.74 & 0.97 & 1.05 & 0.25 & 0.37 & 2.06 & 1.81 & 1.07 & 2.09 & 1.09 & 1.20\\  

HyperGCN & 2.01 & 1.24 & 1.88 & 1.58 & 0.71 & 1.19 & 1.28 & 0.72 & 1.04 & 0.50 & 0.14 & 0.74 & 1.72 & 1.44 & 2.67 & 2.04 & 1.10 & 3.72\\

HGCN--Ens & 2.29 & 1.38 & 1.97 & 1.55 & 1.13 & 1.42 & 1.37 & 0.94 & 1.47 & 1.65 & 0.27 & 0.86 & 2.49 & 3.04 & 2.08 & 2.40 & 1.32 & 3.58\\  

HNDiffN  & 1.82 & 2.71 & 2.69 & 1.92 & 3.67 & 2.67 & 2.22 & 2.75 & 2.34 & 0.17 & 1.12 & 2.22 & 2.58 & 1.39 & 1.89 & 3.64 & 1.82 & 4.37\\  

HND & 0.27 & 2.72 & 4.16 & 1.13 & 1.42 & 1.56 & 1.75 & 2.54 & 3.45 & 0.33 & 0.45 & 1.28 & 1.15 & 0.25 & 3.37 & 2.62 & 0.37 & 2.51\\  

HyperGOOD & 2.53 & 1.11 & 0.71 & 2.06 & 1.06 & 1.98 & 1.21 & 0.95 & 2.24 & 0.65 & 0.48 & 0.88 & 0.61 & 3.16 & 1.10 & 1.03 & 3.21 & 3.75\\  
\midrule

HyperNSD & 0.65 & 2.38 & 0.31 & 0.92 & 2.17 & 1.00 & 0.27 & 1.82 & 0.47 & 1.34 & 0.27 & 0.34 & 0.16 & 2.18 & 0.32 & 0.58 & 1.37 & 1.16\\  
\bottomrule
\end{tabular}
\end{adjustbox}
\end{table*}

\begin{table*}[!htbp]
\belowrulesep=0pt
\aboverulesep=0pt
\renewcommand{\arraystretch}{1.5}
\caption{Standard deviations of misclassification detection performance over ten independent runs.}
\label{table:mis_std}
\centering
\begin{adjustbox}{width=\textwidth}
\begin{tabular}{|c|c|cccc|cccccc|c|@{}}
\toprule

Dataset & \diagbox{Metric}{Model} & GNNSafe & GPN & GNSD & LGNSDE & HGNN  & HyperGCN & HGCN--Ens& HNDiffN &HND & HyperGOOD & HyperNSD\\
\midrule

\multirow[c]{4}{*}{Cora} & AUROC & 1.77 & 1.26 & 3.34 & 1.19  & 1.87 & 1.38 & 2.04 & 2.46 & 0.73 & 2.22 & 1.61\\

& AUPR succ & 4.12 & 3.56 & 3.38 & 4.54  & 1.37 & 1.02 & 1.19 & 7.69 & 1.62 & 2.87 & 1.13\\

& AUPR err & 5.88 & 4.59 & 2.11 & 8.41 & 3.40 & 4.08 & 2.77 & 0.91 & 3.74 & 2.73 & 2.59\\

&FPR95 & 7.32 & 8.52 & 3.78 & 8.63  &  1.25 & 6.68 & 3.84 & 5.65 & 4.30 & 3.19 & 3.17\\
\midrule

\multirow[c]{4}{*}{Cora-CA} &AUROC & 0.93 & 1.17 & 2.35 & 1.54  & 2.41 & 2.26 & 1.05 & 3.27 & 2.96 & 2.50 & 1.81\\

 & AUPR succ & 1.19 & 2.09 & 1.00 &  3.84 & 1.11 & 1.41 & 2.14 & 4.29 & 3.79 & 1.99 & 2.48\\

& AUPR err & 5.28 & 4.50 & 3.46 &  7.33 & 3.42 & 6.90 & 3.70 & 4.43 & 4.56 & 5.09 & 2.24\\

&FPR95 & 5.53 & 1.61 & 4.47 & 5.63  & 4.75 & 3.03 & 1.99 & 2.91 & 4.31 & 3.33 & 3.83\\
\midrule

\multirow[c]{4}{*}{Citeseer}& AUROC & 1.89 & 2.01 & 1.14 & 1.22  & 1.27 & 1.91 & 2.37 & 3.96 & 4.96 & 4.18 & 2.77\\

& AUPR succ & 6.75 & 5.42 & 2.23 &  4.23 & 1.27 & 5.83 & 4.44 & 7.47 & 3.08 & 4.15 & 1.82\\

& AUPR err & 5.05 & 5.43 & 4.18 & 4.70  & 3.39 & 6.51 & 5.31 & 6.28 & 3.94 & 3.02 & 1.34\\

&FPR95 & 4.96 & 4.15 & 5.79 &  5.08 & 3.33 & 3.82 & 3.53 & 3.28 & 3.99 & 3.79 & 2.31\\
\midrule

\multirow[c]{4}{*}{DBLP} & AUROC & 0.98 & 4.37 & 0.58 & 4.28  & 0.69 & 0.39 & 0.93 & 0.86 & 0.76 & 1.09 & 1.19\\

& AUPR succ & 0.48 & 0.67 & 0.09 &  0.73 & 0.07 & 0.06 & 0.11 & 0.64 & 0.13 & 0.26 & 0.74\\

& AUPR err & 4.01 & 3.81 & 1.77 & 4.74  & 2.06 & 1.50 & 1.37 & 3.58 & 3.15 & 3.59 & 1.39\\

&FPR95 & 1.53 & 7.56 & 2.20 & 7.16  & 2.68 & 2.28 & 2.85 & 4.09 & 2.66 & 3.46 & 2.15\\
\midrule

\multirow[c]{4}{*}{ModelNet40} & AUROC & 1.59 & 2.10 & 0.53 & 1.14  & 0.26 & 0.04 & 0.13 & 0.70 & 0.17 & 0.32 & 0.06\\

& AUPR succ & 0.89 & 1.45 & 0.52 &  0.65 & 0.28 & 0.05 & 0.35 & 0.87 & 0.30 & 0.23 & 0.11\\

& AUPR err & 1.32 & 1.60 & 0.73 & 0.88  & 0.22 & 0.02 & 0.08 & 0.05 & 0.25 & 0.24 & 0.14\\

&FPR95 & 2.24 & 2.91 & 4.73 &  1.05 & 1.25 & 0.24 & 0.36 & 0.56 & 0.24 & 1.15 & 0.22\\
\midrule

\multirow[c]{4}{*}{NTU2012} & AUROC & 2.17 & 1.51 & 0.33 &  0.17 & 0.19 & 0.21 & 0.36 & 0.59 & 0.14 & 0.22 & 0.11\\

& AUPR succ & 2.33 & 2.95 & 0.21 &  0.32 & 0.11 & 0.16 & 0.07 & 0.27 & 0.31 & 0.12 & 0.20\\

& AUPR err & 1.17 & 1.25 & 0.78 &  1.04 &  0.47 & 0.27 & 0.30 & 0.45 & 0.26 & 0.23 & 0.09\\

&FPR95 & 2.23 & 1.52 & 3.69 &  1.78 & 0.80 & 0.47 & 0.63 & 0.77 & 0.16 & 0.52 & 0.36\\
\bottomrule
\end{tabular}
\end{adjustbox}
\end{table*}

\end{document}